\documentclass[12pt]{article}
\usepackage{amsmath}
\usepackage{amsthm}
\usepackage[pdftex]{graphicx}
\usepackage{psfrag,epsf}
\usepackage{enumerate}
\usepackage{natbib}
\usepackage{subcaption}
\usepackage[colorlinks,citecolor=blue,urlcolor=blue]{hyperref}

\usepackage[boxruled,scleft]{algorithm2e}


\usepackage{geometry}
\geometry{verbose,tmargin=1 in,bmargin=1 in,lmargin=1 in,rmargin=1 in}

\RestyleAlgo{boxruled}

\numberwithin{equation}{section}

\newtheorem{thm}{Theorem}
\newtheorem{lem}{Lemma}

\bibliographystyle{asa}

\bibpunct{(}{)}{;}{a}{}{,}

\begin{document}

\begin{center}
{\large\bf Community Detection in Multi-Relational Data Through Restricted Multi-Layer Stochastic Blockmodel}

\vspace{0.2in}

Subhadeep Paul and Yuguo Chen
\footnote{Subhadeep Paul is Ph.D Candidate, Department of Statistics,
University of Illinois at Urbana-Champaign, Champaign, IL 61820 (E-mail: {\it spaul10@illinois.edu}). Yuguo Chen
is Professor, Department of Statistics, University of Illinois at
Urbana-Champaign, Champaign, IL 61820 (E-mail: {\it yuguo@illinois.edu}).
This work was supported in part by National Science Foundation grant DMS-1406455.
}

\end{center}

\begin{abstract}

In recent years there has been an increased interest
in statistical analysis of data with multiple types of relations among
a set of entities. Such multi-relational data can be represented as multi-layer graphs where the set of vertices represents the entities and multiple types of edges represent the different relations among them. For community detection in multi-layer graphs, we consider two random graph models, the multi-layer stochastic blockmodel (MLSBM) and a model with a restricted parameter space, the restricted multi-layer stochastic blockmodel (RMLSBM). We derive consistency results for community assignments of the maximum likelihood estimators (MLEs) in both models where MLSBM is assumed to be the true model, and either the number of nodes or the number of types of edges or both grow. We compare MLEs in the two models with other baseline approaches, such as separate modeling of layers, aggregating the layers and majority voting. RMLSBM is shown to have advantage over MLSBM when either the growth rate of the number of communities is high or the growth rate of the average degree of the component graphs in the multi-graph is low. We also derive minimax rates of error and sharp thresholds for achieving consistency of community detection in both models, which are then used to compare the multi-layer models with a baseline model, the aggregate stochastic block model. The simulation studies and real data applications confirm the superior performance of the multi-layer approaches in comparison to the baseline procedures.

\vspace{ 6mm}

\noindent
KEY WORDS: Community detection; Consistency; Minimax rates;
Multi-layer networks; Sharp thresholds; Stochastic blockmodel.

\end{abstract}

\newpage




\newpage

\section{Introduction}

Over the last decade, relational data has become ubiquitous in all
forms of human activities. In many applications of statistics and machine
learning, one encounters relational data where the entities are represented
as nodes or vertices and the relations or interactions between the
entities as edges of a graph. Applications of such graphs or networks include
many information systems such as social networks, World Wide Web,
user information databases in e-commerce,
metabolic networks, gene regulatory networks, protein-protein interaction
networks and food web.

In majority of the cases dealt with in the literature, the relations
are assumed to be of the same type such as web page linkage, friendship,
co-authorship and protein-protein interaction. However in modern complex
relational databases and networks, we often have information regarding
relationships of multiple types among the nodes. For example, in the
context of internet services a set of users may be connected through
email, messaging, social media, etc., each one of them creating one
layer or type of the user-user interaction network \citep{pai13}. Similarly, users in a social network can have ``friendship'',
``mentions'', ``following'', etc. \citep{gp13} or
researchers in academia may have co-authorship, citations, title/abstract
similarity, etc., as different types of relations among themselves.
In genomics data, cellular components can have different aspects of
interactions among them, e.g., protein-protein physical interactions
and gene co-expressions \citep{nvsz10}.
Such multi-relational data can
be represented as multi-layer graphs where multiple types of edges
represent the relations and the set of vertices/nodes represents the
entities \citep{jlbo12}.

One of the most important and widely investigated learning goals in
an information network is clustering the entities on the basis of
the relationships between them into densely connected subsets called
``communities''. From a probabilistic point of
view, communities can be thought of as groups of vertices which are
more likely to be connected to each other compared to the rest of the
graph, i.e., the probability of having an edge between two vertices belonging
to the same group is higher than that of having an edge between vertices
belonging to different communities. Consequently we would observe
the number of intra community edges to be higher than inter community
edges.

Many researchers have proposed methods and algorithms for community
detection in networks. Such methods can broadly be divided into three
categories: methods based on probabilistic models, methods based on
the maximization of a global objective function and those based on spectral
or matrix factorization of the adjacency matrix or the Laplacian matrix.
The stochastic blockmodel \citep{hll83,ns01} is a statistical model for random graphs with a natural community structure. It is one of a large class of statistical models
described in the literature for community detection in complex networks,
which includes the latent variable \citep{hrt07} and latent space
models \citep{hrh02}, the degree corrected blockmodel \citep{kn11, zlz12} and the mixed membership blockmodel \citep{abfx08}.
Various likelihood maximization based inference strategies have been
proposed in the literature to simultaneously infer the block assignments
and the parameters in the stochastic blockmodel, e.g., profile likelihood
maximization \citep{bc09}, maximizing the conditional likelihood
\citep{cwa12}, and variational EM under mixture model settings
\citep{dpr08}. Other strategies involve Bayesian inference
using Gibbs sampling or variational methods \citep{lba11} and optimizing a modularity function over all possible partitions of the graph \citep{ng04}. See \citet{gzfa10} for a detailed review of statistical inference in networks.

Several authors have also studied the conditions required on the growth
of the number of communities and the degree density of networks for the
estimation strategies to be consistent. \citet{bc09} and
\citet{zlz12} studied the conditions for community detection
through modularity maximization under the stochastic blockmodel and the degree
corrected stochastic blockmodel respectively. \citet{cwa12}
laid down the conditions necessary for the consistency of maximum likelihood estimation
under the stochastic blockmodel. This work was extended by  \citet{rqf12} with a regularized estimator to high dimensional
settings where the number of communities grows roughly as fast as
the number of nodes.
\citet{cdp11} derived consistency and \citet{bccz13}
derived asymptotic normality of the maximum likelihood estimators and
their variational approximations in the mixture model settings.

In this paper our primary focus is on the problem of detecting an underlying community structure
in multi-layer networks.
We assume that such networks have an implicit community structure and different observed layers manifest that underlying structure with varying amount of information and noise. As an example of a network where such an assumption is reasonable, we analyze a twitter network of British Members of Parliament (see Figure \ref{twitter_example}) where the underlying communities are based on their party memberships and the three observed layers, ``mentions", ``follows" and ``re-tweets" manifest that structure in varying proportions. In such cases the multi-layer graph is a more accurate representation of the underlying similarity of the objects and each layer can provide only ``partial" information about the data \citep{ma11}.  The goal in such cases would be to correctly identify the underlying set of communities combining information from all three layers.

\begin{figure}[htb]
\begin{subfigure}{0.3\textwidth}
\centering{}
\includegraphics[width=.95\linewidth]{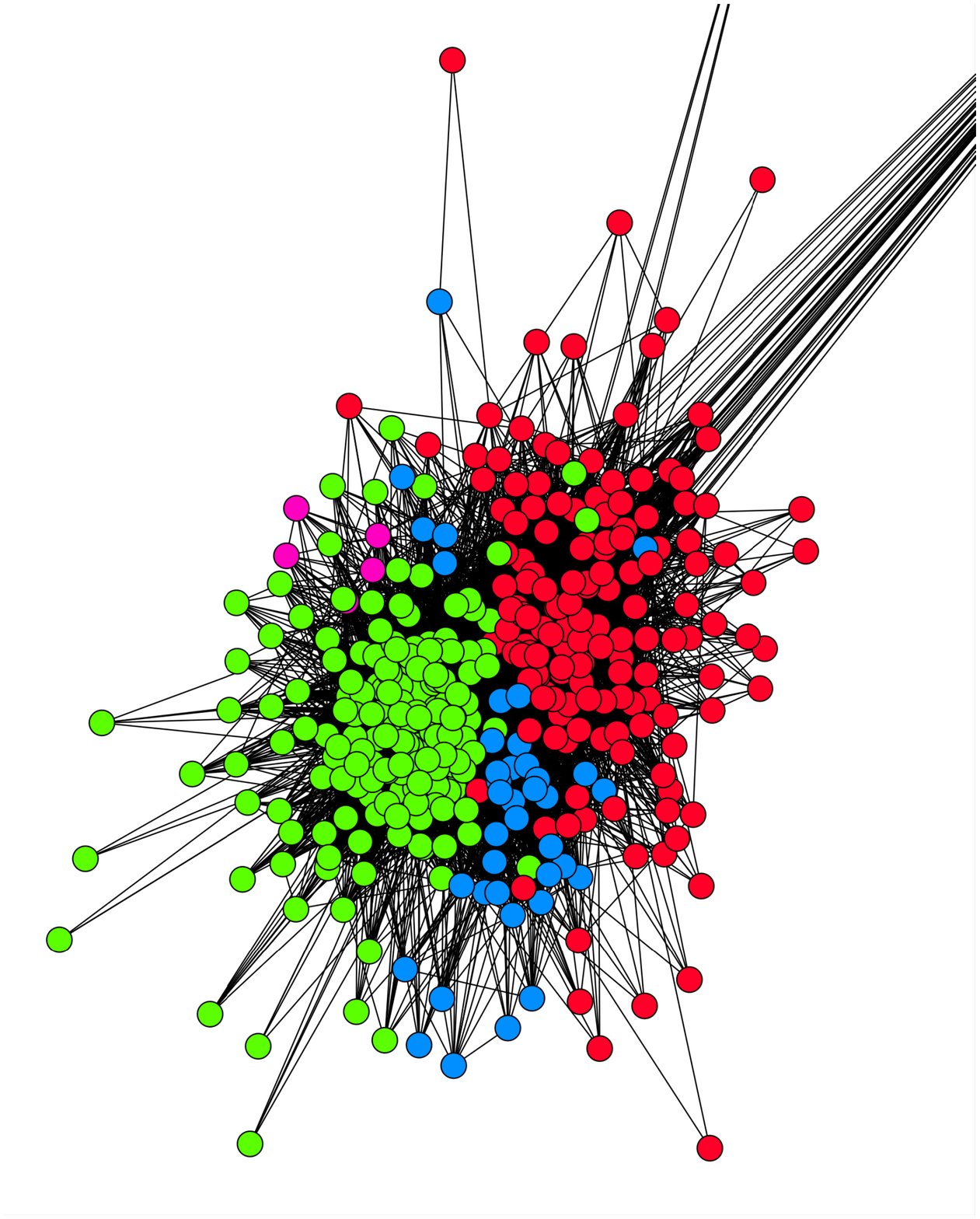}
\caption{Mention}
\end{subfigure}%
\begin{subfigure}{0.3\textwidth}
\centering{}
\includegraphics[width=.95\linewidth]{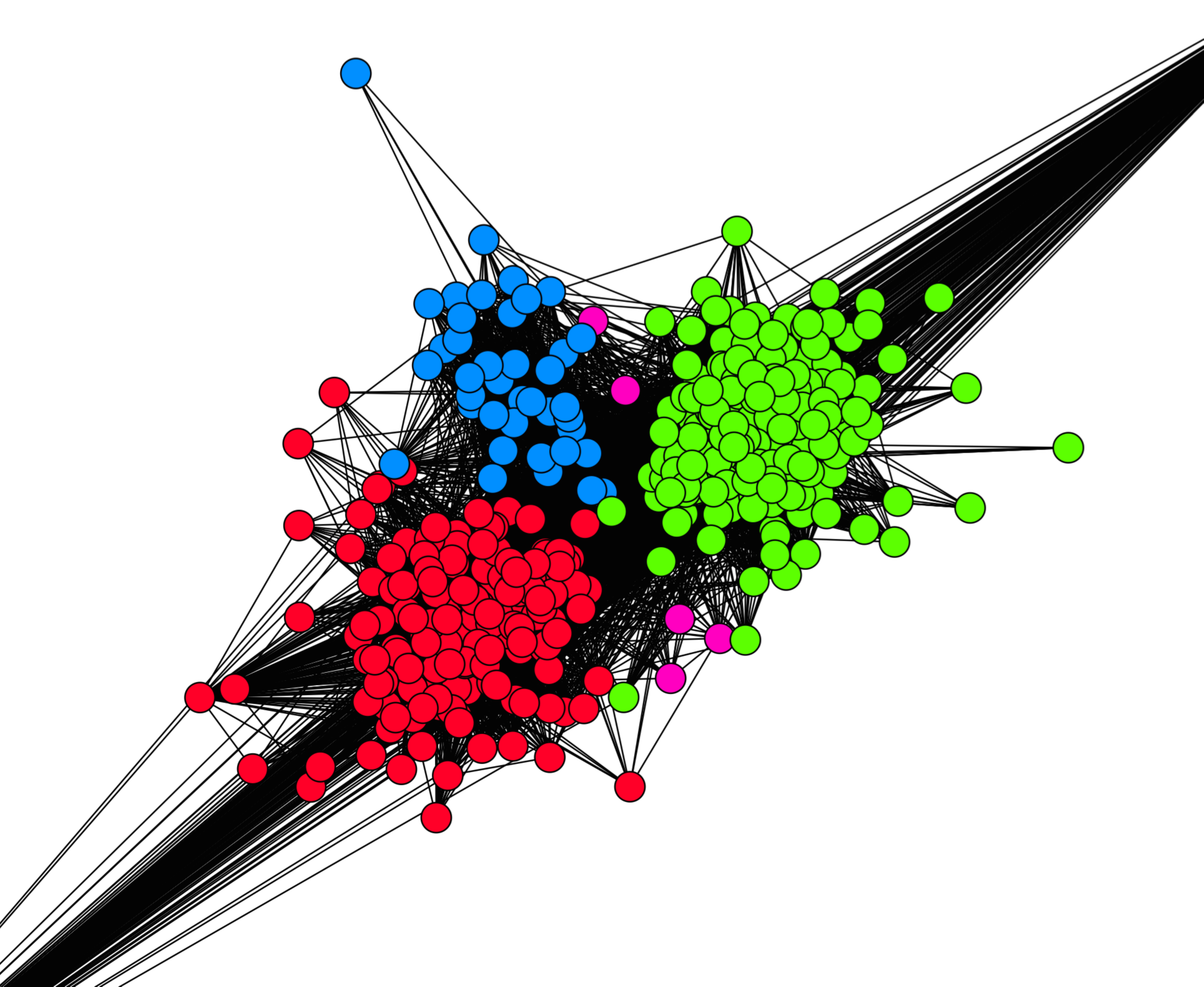}
\caption{Follows}
\end{subfigure}%
\begin{subfigure}{0.3\textwidth}
\centering{}
\includegraphics[width=.95\linewidth]{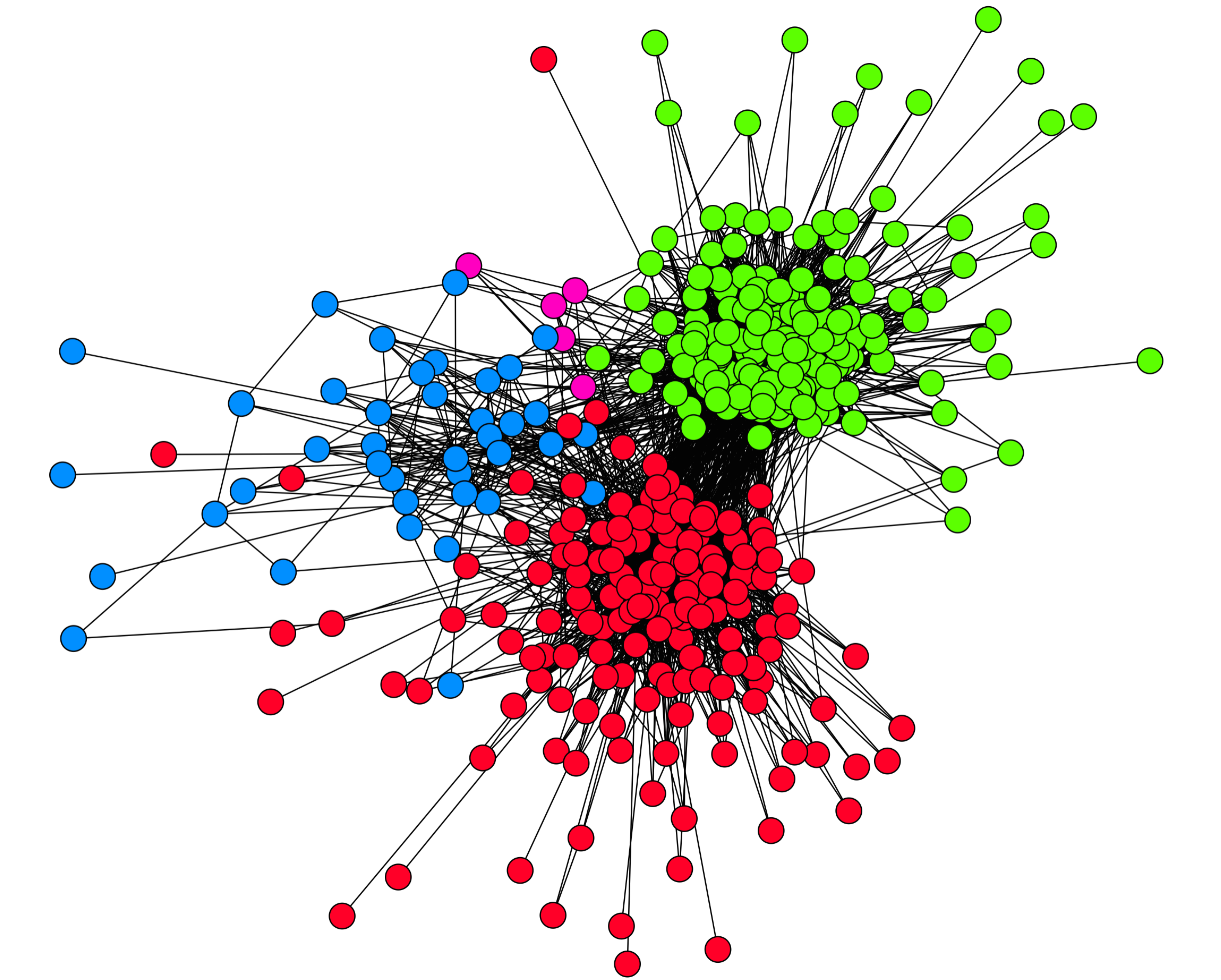}
\caption{Re-tweets}
\end{subfigure}%
\caption{A 3-layer twitter network of British MPs. The nodes are colored according to an underlying community structure: the party memberships. }
\label{twitter_example}
\end{figure}

Earlier approaches towards multi-relational data or multi-layer graph clustering suffer from the deficiency that
they either cluster each graph independently and combine the results, or aggregate the graphs and cluster the aggregated graph. These approaches
fail to take into account the dependency among the different layers,
in particular the correlation among different types of edges that
share the same pair of nodes. Moreover, the multiple network layers
can have different characteristics in terms of sparsity and noise.
Some layers may be dense but may carry little worthwhile information,
whereas some layers may be extremely sparse but may carry valuable
information. The aggregation process of graphs could lose the intrinsic
heterogeneity of the network layers. Here we attempt to address
the problem of how to efficiently cluster the nodes or entities in
a network taking into account all types of layers or relations
among them. Several approaches have been recently proposed in the literature for this purpose. Among them are approaches based on collective or joint matrix factorization \citep{ntk11, wzs09, ma11}, non-parametric Bayesian models and latent factor models \citep{jlbo12}, extensions of spectral clustering \citep{dfvn12} and modularity \citep{mrmpo10} to multi-layer graphs. However there is a lack of statistical analysis of the properties of those methods.

For community detection in multi-layer networks, we consider a natural extension of the standard stochastic blockmodel to multi-layer settings that we will call ``multi-layer
stochastic blockmodel'' (MLSBM). This model, also considered in \citet{hxa14} as ``multi-graph SBM", is in the spirit of multi-relational
models described in \citet{hll83}, \citet{tsk01} and \citet{ktgyu06}. \citet{hxa14} proved the consistency of the maximum likelihood estimates (MLEs) in this model when the number of relations grows. They keep the number of nodes (and hence the number of communities) fixed. However, as we will see later in both the asymptotic analysis and simulation studies that MLE in this model does not perform very well when either the number of communities grows fast or the network layers are sparse on average. Hence, we propose a restricted version
of this model through restrictions on the parameter space which is capable
of handling networks with a large number of communities. We call this
model ``restricted multi-layer stochastic blockmodel'' (RMLSBM). We derive conditions on the growth
of the number of communities and the average edge density of the networks
under which the MLE of the class assignment
vector is consistent (in the sense that the proportion of misclassified
nodes tends to $0$ as the number of nodes, and possibly the number of relations
as well, grows). We further derive the minimax rates of error for community detection in MLSBM and obtain thresholds for consistent community detection. To compute the unknown class assignments
and block model parameters simultaneously, we follow \citet{dpr08} and propose
a variational estimation strategy.

The rest of the paper is organized as follows. Section 2 extends the
stochastic blockmodel to multi-layer settings and defines the two
models, MLSBM and RMLSBM. Section 3 settles the consistency of the community assignments
through maximum likelihood estimation in the two models when the true data generating model is MLSBM. Section 4 describes a few baseline procedures and Section 5 compares the multi-layer models with the baseline models in terms of minimax error rate and sharp threshold results.
Section 6 describes two estimation strategies for the MLEs in the two models.
Section 7 describes the results of a simulation study
to validate the theoretical results. Section 8 presents the application of the methods to the Twitter UK politics data set.  Section 9 gives concluding remarks.

\section{Extension of blockmodels to multi-layer settings}

We consider an undirected multi-layer graph $G=\{V,E\}$, where the
vertex set $V$ consists of $N$ vertices and the edge set $E$ consists
of edges of $M$ different types representing different relations.
We can view the multi-graph as a graph with vector valued edge information,
i.e., the adjacency matrix $A$ consists of elements $A_{ij}$, who
are themselves $M$ dimensional vectors: $A_{ij}=\{A_{ij}^{(1)},A_{ij}^{(2)},$ $\ldots,A_{ij}^{(M)}\}$.
An alternative way to approach the problem is to view the multi-graph
as a collection of $M$, $N\times N$ adjacency matrices $\{A^{(1)},A^{(2)},\ldots,A^{(M)}\}$,
each corresponding to one particular type of relation. The rest of the
set up is similar to the regular stochastic block model (SBM) for one-layer case with $K$ blocks \citep{ns01}. We assume
the number of communities $K$ is known. Let $z=\{z_{1},z_{2},\ldots,z_{N}\}$
be the community indicator vector for the $N$ nodes, such that each $z_{i}$ takes exactly one value from the set $\{1,\ldots, K\}$ and $z_{i}=q$
if and only if node $i$ belongs to community $q$. Conditional
on the community indicator vector $z$, the edges
are formed independently as Bernoulli random variables with probabilities
depending only on the community assignments and the type of edges.  In what follows we describe the two
extensions of the standard SBM to multi-layer settings.

Except for the estimation algorithm, the model is always represented
as a conditional block model and $z$ is assumed to be a fixed unknown
parameter of the model and needs to be estimated from data.
Conditioned on the community assignments of the nodes $z_{i}$ and
$z_{j}$, the edges are formed independently following Bernoulli distribution
\[A_{ij}^{(m)}|(z_{i}=q,z_{j}=l)\sim Bernoulli(P_{ql}^{(m)}). \]

The first model assigns a separate probability for the $m$th type
of edge between nodes belonging to the $q$th and the $l$th community
independent of all other edges. We call this model the ``multi-layer
stochastic blockmodel'' (MLSBM). The probability of an $m$th type
of edge between nodes $i$ and $j$ belonging to communities $q$
and $l$ respectively can be written as
\begin{equation*}
P_{ij}^{(m)}=\pi_{z_iz_j}^{(m)}=\pi_{ql}^{(m)}, \quad i,j\in\{1,\ldots,N\},\; m\in\{1,\ldots,M\}, \; q,l\in\{1,\ldots,K\}.
\end{equation*}
The set of parameters for the model, $\pi=\{{\pi}_{ql}^{(m)};\ q\leq l,\ q,l\in\{1,\dots,K\},\ m\in\{1,\ldots,M\}\}$ has $K(K+1)M/2$
elements. This model is ``saturated'' in the sense that
we have a different parameter for each of the different types of edges
between nodes belonging to different communities. Denote the range of this parameter set or array as $\Pi=\{\pi \in [0,1]^{K(K+1)M/2} \}$.

In our asymptotic settings, where both $N$ and $M$ grow and $K$ grows with $N$, the number of parameters to be estimated in the MLSBM grows as $K^{2}M$ and quickly becomes large. Hence the MLE performs poorly especially when the individual network layers are sparse. This problem does not arise in the asymptotic settings of \citet{hxa14} where only $M$ grows and $N,K$ remain fixed. However, it has been empirically shown that in most real world networks the average cluster size does not grow with the size of the network \citep{leskovec08,rqf12,thesis15} and consequently, $K$ grows with $N$. Hence in our asymptotic settings where $N$ grows, keeping $K$ fixed would be rather unrealistic. This motivates us to propose the second related model whose number of parameters grows much slowly compared to MLSBM.


The second model assumes the probability of
the $m$th type of edge appearing between nodes $i$ and $j$ is
governed by two factors: the first one being the community assignment
of the two nodes and the second one being the type of edge. Hence
the model has two sets of parameters: a $K\times K$ parameter
matrix $\pi_{K\times K}$ corresponding to the community structure, and an $M\times1$ vector
$\beta_{M\times1}$ which contains the parameters for different types
of edges. We call this model the restricted multi-layer
stochastic blockmodel (RMLSBM).

Notice that in the second model,
if the edges were all of the same type, we would just
have $\beta_{m}=\beta$ for all $m\in\{1,\ldots,M\}$ and then we will
recover the standard stochastic blockmodel, with probabilities of
edges determined solely by the community assignments. On the other
hand, if we did not have a community structure, but $M$ types of
edges, then $\pi_{ql}$ would be identical for all communities $q,l$
and the probability of an edge between nodes $i$ and $j$
will solely be determined by the type of edge.
This model
can retrieve information from sparse but highly informative
edge types as the sparsity of the network layers will be captured in
the $\beta_{m}$ parameters. Hence, although we assume the edges to
be conditionally independent, this model induces two types of correlations
unconditionally --- among the edges of the same type and among the edges
that share nodes of the same community.

The probability $P_{ij}^{(m)}$ in RMLSBM , which denotes the probability of an $m$th type
of edge between nodes $i$ and $j$ belonging to communities $q$
and $l$ respectively,
can be modeled in the following way with the logit link function
\begin{equation*}
\text{logit}(P_{ij}^{(m)})
=\pi_{ql}+\beta_{m}, \: \:  i,j\in\{1,\ldots,N\}, \: m\in\{1,\ldots,M\}, \: q,l\in\{1,\ldots,K\}.
\end{equation*}
This model has $K(K + 1)/2 + M$ parameters for an undirected graph. Hence, when both $K$  and $M$ grow, the growth rate in the number of parameters for this model is the same as the maximum of the growth rates in $K^{2}$ and $M$. In comparison, the number of parameters in MLSBM would grow as $K^{2}M$. This makes the maximum likelihood estimator in RMLSBM a regularized estimator.


For the RMLSBM to be identifiable, we require the parameters $\beta_{m}$ to satisfy the condition $\sum_{m} \beta_{m}=0$. Hence we have one less free parameter. Denote the set of parameters for RMLSBM as $\pi^{R}=\{({\pi}_{ql},\beta_{m}):\ q\leq l,\ q,l\in\{1,\dots,K\},\ m\in\{1,\ldots,M\}\}$ and its range as $\Pi^{R}=\{\pi^{R} \in \mathcal{R}^{K(K+1)/2+M},\ \sum_{m} \beta_{m}=0 \}$.
To prove the consistency of maximum likelihood estimation under MLSBM, we assume
$\pi_{ql}, \beta_{m} \in (-C \log (MN^{2}),$ $\ C \log (MN^2))$ for some constant $C>0$. This condition ensures that $\pi_{ql}$ and $ \beta_{m}$ are bounded away from $\pm \infty$.

\section{Consistency}
\label{sec:consistency}
In this section, we discuss the consistency of maximum likelihood
estimation of the proposed models under three asymptotic regimes
with varying conditions imposed on the growth of the number of communities ($K$) and the expected total number of edges of the multi-layer graph ($L$). We first define a one to one transformation of the parameters of RMLSBM as
\begin{equation}
\phi_{ql}^{(m)}=\mbox{logit}^{-1}(\pi_{ql}+\beta_{m})=\frac{\exp(\pi_{ql}+\beta_{m})}{1+\exp(\pi_{ql}+\beta_{m})}.\label{eq:phi}
\end{equation}
Now we assume that the data are generated from the more general model MLSBM and view RMLSBM as a MLSBM with the following restrictions on the parameters:
\begin{align}
\Phi=\{\phi \in[0,1]^{K(K+1)M/2}&:\ \phi_{ql}^{(m)}=\mbox{logit}^{-1}(\pi_{ql}+\beta_{m}), \label{piR} \\
 & \pi_{ql},\beta_{m}\in (-C \log (MN^{2}),\ C \log (MN^2)) \}. \nonumber
\end{align}
This way the MLE in RMLSBM can be thought of as a restricted MLE (RMLE) of MLSBM.

Our aim is to investigate the consistency of both the MLE and the RMLE under three asymptotic regimes where we let either the number of nodes ($N$) or the number of types of edges ($M$) or both to grow. This setup is quite appropriate for modern day multi-layer networks,
where data collection increases both in terms of new entities as well
as new features or layers getting added to the database.
Consequently methods are being sought which would be consistent in such situations. Some consistency results for the MLE were obtained in \citet{hxa14} under the settings when $M$ grows, but $N$ and consequently $K$ remain fixed. Here we prove consistency results for the MLE in the more general asymptotic setting where $N$ can also grow (and $K$ grows with $N$). We then compare the MLE with the regularized estimator in terms of the asymptotic conditions required for consistency.
The different asymptotic setups we consider under the three regimes of growth in $N$ and $M$ are described below.
\begin{enumerate}
\item As both $M$ and $N$ grow, let $K=O(N^{1/2})$ and $L=\omega(MN(\log N)^{3+\delta})$ for some $\delta > 0$ for the MLE, while $K=O((MN)^{1/2-\epsilon})$ and $L=\omega(MN (\log N)^{3+\delta})$ with $\epsilon,\delta >0$ for the RMLE. For the RMLE, we further require that $M=O(N)$ so that $K$ does not exceed $N$.

\item As $N$ grows, $M$ either is fixed
or grows slower than $N$, i.e., either $M$ is $O(1)$, or $M\rightarrow\infty$
and $M=O(N)$. In this regime, let $K=O(N^{1/2})$,  $L=\omega(N(\log N)^{3+\delta})$ for some
$\delta > 0$  for the RMLE.

\item As both $N\rightarrow\infty$ and $M\rightarrow\infty$ with $M$ growing faster than $N$, i.e., $M=\omega(N)$, for RMLE we consider two related setups: (a) $K=O(\frac{N}{\log M \log N})$, $L=\omega(MN (\log N)^{1+\delta})$ for some $\delta>0$; and (b) $K=O(N^{1/2})$, $L$ is either $ \omega(M (\log M)^{2+\delta} (\log N)^{1+\delta}) $ for some $\delta>0$ if $(\log M)^{2+\delta} =O(N)$,
    or $\omega(MN(\log N)^{1+\delta})$ for some $\delta>0$ otherwise. In setting (a), we further require $\log M$ to grow slower than $N$ for the growth of $K$ to be meaningful. Also, in that setup if $\log M$ grows at the same rate as $(\log N)^{\beta}$ for some $\beta>0$, the number of communities grows almost as fast as the number of nodes except for the $\log $ terms and is ``highest dimensional'' in the sense of \citet{rqf12}.
\end{enumerate}

Note that the first regime assumes no relation between the growth rates of $N$ and $M$, while the next two regimes assume certain relations between the two growth rates.
So the last two regimes can be thought of as special cases of the first one in terms of the growth rates of $N$ and $M$. Naturally we expect some relaxation in the required growth conditions on $K$ and $L$ in the last two regimes. The asymptotic setups described above reflect this relaxation for the RMLE. However no such relaxation is possible for the MLE. Hence we will prove that MLE in MLSBM is consistent under the first asymptotic regime, whereas MLE in RMLSBM (i.e., the RMLE of MLSBM under the restrictions defined by Equation (\ref{piR}) is consistent under all three asymptotic regimes. The MLSBM, despite being intuitively the simplest extension, does not perform as well as the RMLSBM for community detection in multi-relational networks if the networks are sparse at an average or contain a large
number of communities.

\subsection{Preliminaries}
Since in this paper our primary interest is in modeling multi-layer
networks where layers are sparse on an average, we require the true MLSBM model probabilities
$\pi_{ql}^{(m)}$ to satisfy certain sparsity conditions. As \citet{zlz12} pointed out, if the block model probabilities remain
fixed as $N$ increases, then the network will be unrealistically
dense. In this connection it is worth noting that \citet{sn97} let the probabilities remain fixed and as a result the networks
considered there have linearly increasing average degree, while both
\citet{bc09} and \citet{cwa12} considered networks with
poly-logarithmically increasing average degree and hence gradually
decaying probabilities. Here to keep the network sparse, we scale down the
block model probabilities accordingly as $N$ increases.

We introduce a new notation
$L'$ to denote the quantity inside the asymptotic notation $\omega $ in the growth rate of $L$ under different asymptotic setups. As an example, consider the case when $L=\omega (MN (\log N)^{3+\delta})$, 
then $L'=MN (\log N)^{3+\delta}$. Hence $L'$ can be viewed as the minimum rate at which $L$ is required to grow under a particular asymptotic setup.
The blockmodel parameters are restricted to have an upper bound that
decreases with increasing $N$ except for a small finite set indexed
by the triplet $Q=\{q,l,m\}$ such that the expected number of edges in the
set $|E_{Q}|=o\left(\frac{L'}{\log(MN^{2})}\right)$. For the set $Q$ we can have $\frac{1}{MN^{2}}\leq \pi_{ql}^{(m)}\leq1-\frac{1}{MN^{2}}$.
For all $\{q,l,m\}\notin Q$, the parameters are restricted in the
following way
\begin{equation}
\pi_{ql}^{(m)}\in \left(\frac{1}{MN^{2}},\ C\frac{L'}{MN^{2} (\log M \log N)^{2+\delta}}\right), \label{eq:pirestrict}
\end{equation}
for some $\delta>0$ and some constant $C$,
so that the upper bound is determined by the expected density of the network. The exact upper bound is determined by $L'$ and consequently, by the growth rate of $L$ and varies under the different asymptotic assumptions.

For any arbitrary partition $z$ of the entities in the graph, the
log likelihood of the set of $M$ adjacency matrices $A=\{A^{(1)},\ldots,A^{(M)}\}$
under the MLSBM with parameters
$\pi=\{\pi_{ql}^{(m)}\}$ is
\begin{equation}
l(A;z,\pi)=\sum_{m=1}^{M}\sum_{i<j}\{A_{ij}^{(m)}\mbox{log \ensuremath{\pi_{z_{i}z_{j}}^{(m)}}}+(1-A_{ij}^{(m)})\mbox{log \ensuremath{(1-\pi_{z_{i}z_{j}}^{(m)}})}\}.\label{eq:MLSBMllk}
\end{equation}
Note that for an undirected graph with no self-loops, both $A^{(m)}$
and $\pi^{(m)},$ $m=1,\ldots,M$, are symmetric matrices in
$\{0,1\}^{N\times N}$ and $[0,1]^{K\times K}$ respectively. The
Bernoulli parameters $\pi_{z_{i}z_{j}}^{(m)}$ depend both on the
class assignment $z$ and the type of relation $m$. For a fixed class
assignment $z$, let $N_{q}$ denote the number of nodes assigned
to class $q$, and $n_{ql}$ denote the maximum number of possible
edges between classes $q$ and $l$. So we have $n_{ql}=N_{q}N_{l}$ and
$n_{qq}={N_q \choose 2}$.
For an arbitrary partition $z$, the
MLE of $\pi_{(z)}$ is
\begin{equation}
\hat{\pi}_{(z)ql}^{(m)}=\frac{1}{n_{ql}}\sum_{i<j}A_{ij}^{(m)}1\{z_{i}=q,z_{j}=l\}, \ m=1,\ldots,M,\ q,l=1,\ldots,K, \label{eq:pihat}
\end{equation}
where $1\{\cdot\}$ is the indicator function.
Note that for a fixed partition $z$, the denominator $n_{ql}$ in the MLE
 $\hat{\pi}_{(z)ql}^{(m)}$ is the same for all edge types $m$.

Now we define the expectation of $\hat{\pi}_{(z)}$ as $\bar{\pi}_{(z)}$
and that of $l(A;z,\pi)$ as $\bar{l}_{P}(z,\pi)$ under the independent
Bernoulli($P_{ij}^{(m)})$ model. Then we have
\begin{equation}
\bar{\pi}_{(z)ql}^{(m)}=\frac{1}{n_{ql}}\sum_{i<j}P_{ij}^{(m)}1\{z_{i}=q,z_{j}=l\},\  m=1,\ldots,M,\ q,l=1,\ldots,K, \label{eq:pibar}
\end{equation}
\begin{equation}
\bar{l}_{P}(z,\pi)=\sum_{m=1}^{M}\sum_{i<j}\{P_{ij}^{(m)}\mbox{log \ensuremath{\pi_{z_{i}z_{j}}^{(m)}}}+(1-P_{ij}^{(m)})\mbox{log \ensuremath{(1-\pi_{z_{i}z_{j}}^{(m)}})}\}. \label{eq:expllk}
\end{equation}
Clearly for a given $z$, $\hat{\pi}_{(z)}$ and $\bar{\pi}_{(z)}$ are
the maximizers of the functions $l(A;z,\pi)$ and $\bar{l}_{P}(z,\pi)$
respectively, and we let $l(A;z)$ and $\bar{l}_{P}(z)$ denote the
corresponding maximum values.

We extend Lemma 1 of \citet{cwa12} to multi-layer settings as follows:
\begin{align}
l(A;z)-\bar{l}_{P}(z)  =& \sum_{m} \sum_{i<j} \Bigg\{A_{ij}^{(m)} \log \left(\frac{\hat{\pi}_{z_i z_j}^{(m)}}{\bar{\pi}_{z_i z_j}^{(m)}}\right)+(1-A_{ij}^{(m)})\log \left(\frac{1-\hat{\pi}_{z_i z_j}^{(m)}}{1-\bar{\pi}_{z_i z_j}^{(m)}}\right)\Bigg\}  + X-E(X) \nonumber \\
=& \sum_{m} \sum_{q\leq l} n_{ql} D(\hat{\pi}_{(z)ql}^{(m)}||\bar{\pi}_{(z)ql}^{(m)})+X-E(X),\label{eq:llkdecomp}
\end{align}
where
\begin{equation}
X=\sum_{m=1}^{M}\sum_{i<j}A_{ij}^{(m)}\log \left (\frac{\bar{\pi}_{z_{i}z_{j}}^{(m)}}{1-\bar{\pi}_{z_{i}z_{j}}^{(m)}}\right ).\label{eq:defx}
\end{equation}
Here $D(a||b)$ is the Kullback-Liebler divergence between two Bernoulli random variables with parameters $a$ and $b$ respectively. This equation decomposes the difference between the maximized likelihood and its expected value in terms of $\hat{\pi}_{(z)} $ and $\bar{\pi}_{(z)}$ for a given class assignment vector $z$.

Next we turn our attention to RMLSBM. As mentioned before, we consider RMLSBM as a restricted
version of MLSBM, and the MLE of RMLSBM can be viewed as a RMLE of
MLSBM under the restrictions.
Given a class assignment $z$, the RMLE $\hat{\pi}{}_{z_{i}z_{j}}^{(m)R}=\{\hat{\pi}_{(z)ql},\ \hat{\beta}_{(z)m}\}$
is the maximizer of $l^{R}(A;z,\pi^{R})$, the multi-layer block model
log likelihood within the restricted parameter space. Substituting
the estimated parameters in the likelihood function gives $l^{R}(A;z)$, the maximum of the likelihood function within the restricted
parameter space. However, no closed form solution exists for the RMLE.
Instead we have the following $M+{K(K+1)}/{2}$ estimating equations:
\begin{equation}
\frac{\partial}{\partial\beta_{m}}:=\sum_{i< j}\left(A_{ij}^{(m)}-\frac{\exp(\hat{\pi}_{z_{i}z_{j}}+\hat{\beta}_{m})}{1+\exp(\hat{\pi}_{z_{i}z_{j}}+\hat{\beta}_{m})}\right), \label{pibeta}
\end{equation}
\begin{equation}
\frac{\partial}{\partial\pi_{z_{i}z_{j}}}:=\sum_{i< j}\sum_{m}\left(A_{ij}^{(m)}-\frac{\exp(\hat{\pi}_{z_{i}z_{j}}+\hat{\beta}_{m})}{1+\exp(\hat{\pi}_{z_{i}z_{j}}+\hat{\beta}_{m})}\right).  \label{pibeta1}
\end{equation}
One of the equations is redundant since if we add the equations in (\ref{pibeta}), the resulting equation is identical to the sum of the equations in (\ref{pibeta1}).

Now we use the transformation defined by $\phi$ in Equation (\ref{eq:phi}). The likelihood with respect to the new parameters
can be represented as
\begin{equation}
l^{R}(A;z,\phi)=\sum_{m=1}^{M}\sum_{i<j}\{A_{ij}^{(m)}\mbox{log \ensuremath{\phi_{z_{i}z_{j}}^{(m)}}}+(1-A_{ij}^{(m)})\mbox{log \ensuremath{(1-\phi_{z_{i}z_{j}}^{(m)}})}\},\label{eq:resllk}
\end{equation}
and the estimating equations in (\ref{pibeta}) and (\ref{pibeta1}) can be written as
\begin{align}
\frac{1}{N(N+1)/2}\sum_{q\leq l} n_{ql} \hat{\phi}_{(z)ql}^{(m)} &=\frac{1}{N(N+1)/2}\sum_{q\leq l}\sum_{i<j}A_{ij}^{(m)}1\{z_{i}=q,z_{j}=l\}  \nonumber \\
 &=\frac{1}{N(N+1)/2}\sum_{i<j}A_{ij}^{(m)}, \quad m=1,\ldots,M, \label{eq:phihat}
\end{align}
\begin{equation}
\frac{1}{M}\sum_{m}\hat{\phi}_{(z)ql}^{(m)}=\frac{1}{Mn_{ql}}\sum_{m}\sum_{i<j}A_{ij}^{(m)}1\{z_{i}=q,z_{j}=l\},\ q\leq l \in \{1,\ldots ,K\}.\label{eq:phihat-1}
\end{equation}
Together the right hand sides of these equations are the complete and sufficient statistics for the model. Hence we have ${K(K+1)}/{2}+M-1$ independent equations which will together determine the MLE of ${K(K+1)}/{2}+M-1$ free parameters in the set $\pi^{R}_{(z)}$. Here it is understood that the estimation procedure ensures that the finiteness condition of $\pi_{ql}$ and $\beta_{m}$ are respected possibly by restricting $\pi_{ql},\beta_{m}\in (-C \log (MN^{2}),\ C \log (MN^2))$.
By the functional invariance property of the MLE, $\hat{\phi}_{(z)ql}^{(m)}=\frac{\exp(\hat{\pi}_{ql}+\hat{\beta}_{m})}{1+\exp(\hat{\pi}_{ql}+\hat{\beta}_{m})}$
is the MLE of $\phi_{(z)ql}^{(m)}$. Note that the minimum value any $\hat{\phi}_{(z)ql}^{(m)}$ can take due to the imposed boundedness constraint is $1/MN^2$. This value is sufficiently small so that none of the partial sums in the left hand side of Equations (\ref{eq:phihat}) and (\ref{eq:phihat-1}) exceeds $1$.

As before we define
expectations of $\hat{\phi}_{z}$ as $\bar{\phi}_{z}$ and that of
$l^{R}(A;z,\phi)$ as $\bar{l}_{P}^{R}(z,\phi)$ under the independent
Bernoulli($P_{ij}^{(m)})$ model. Then,
\begin{equation}
\bar{l}_{P}^{R}(z,\phi)=\sum_{m=1}^{M}\sum_{i<j}\{P_{ij}^{(m)} \log (\bar{\phi}_{z_{i}z_{j}}^{(m)})+(1-P_{ij}^{(m)})\log (1-\bar{\phi}_{z_{i}z_{j}}^{(m)})\}.
\end{equation}
For a given class assignment $z$, $\hat{\phi}_{z}$ and $\bar{\phi}_{z}$
are the maximizers of the functions $l^{R}(A;z,\phi)$ and $\bar{l}_{P}^{R}(z,\phi)$
respectively, and we let $l^{R}(A;z)$ and $\bar{l}_{P}^{R}(z)$ denote
the corresponding maximum values. The difference between the maximized values of the observed and expected likelihood can be decomposed in two parts similar to Equation (\ref{eq:llkdecomp}) as follows
\begin{equation}
 l^{R}(A;z)-\bar{l}_{P}^{R}(z) = \sum_{m} \sum_{q\leq l} n_{ql} D\left(\hat{\phi}_{(z)ql}^{(m)}\ ||\ \bar{\phi}_{(z)ql}^{(m)}\right)+X-E(X),\label{eq:resllkdecomp}
\end{equation}
where as before,
\begin{equation}
X=\sum_{m=1}^{M}\sum_{i<j}A_{ij}^{(m)}\log \left (\frac{\bar{\phi}_{z_{i}z_{j}}^{(m)}}{1-\bar{\phi}_{z_{i}z_{j}}^{(m)}}\right ). \label{eq:defxres}
\end{equation}
A proof of this result can be found in the Appendix. Since the maximum of unrestricted likelihood would be at least as large as the maximum of restricted likelihood, we have $l(A;z)\geq l^{R}(A;z)$
and $\bar{l}_{P}(z)\geq\bar{l}_{P}^{R}(z)$ for all $z$.

Now let $\bar{z}$ denote the true partition. Further let $\hat{z}$
and $\hat{z}^{R}$ denote the MLEs of $\bar{z}$
under the two models MLSBM and RMLSBM respectively, i.e.,
\begin{equation}
\hat{z}=\arg\max_{z}l(A,z).
\end{equation}
\begin{equation}
\hat{z}^{R}=\arg\max_{z}l^{R}(A,z).
\end{equation}

\subsection{Main results}
We give several theorems in this section as we develop towards our main result. These theorems provide
insights into the conditions required under the three asymptotic regimes
discussed in the beginning of Section \ref{sec:consistency}, which in turn provide comparison
between the asymptotic behavior of MLEs in the two models MLSBM and RMLSBM. All the proofs are given in the Appendix.

The first three theorems
bound the difference in the maximized log likelihood
and its expected value for both MLSBM and RMLSBM as defined in Equations (\ref{eq:llkdecomp}) and (\ref{eq:resllkdecomp}).

\begin{thm}
\label{thm:thetabound} Suppose a MLSBM
and a
RMLSBM,
both with $K$ classes and $M$ layers, are fitted to the graph with adjacency matrix $\{A_{ij}\}_{i<j}=\{A_{ij}^{(1)},\ldots,A_{ij}^{(M)}\}_{i<j},\ i,j=1,\ldots,N$,
where $A_{ij}^{(m)}$ are independent Bernoulli$(P_{ij}^{(m)})$
trials. For any class assignment $z$, suppose the estimate $\hat{\pi }_{(z)}=\{\hat{\pi}_{(z)ql}^{(m)};\ q,l\in\{1,\ldots,K\},\ m\in\{1,\ldots,M\}\}$
maximizes the multi-layer block model likelihood $l(A;z,\pi)$ and the estimate
$\hat{\pi}^{R}_{(z)}=\{(\hat{\pi}_{(z)ql},\hat{\beta}_{(z)m});\ q\leq l,\ q,l\in\{1,\dots,K\},\ m\in\{1,\ldots,M\}\}$ maximizes the likelihood from the model with the restricted parameter space defined by
$\Pi^{R}$. Let $\hat{\phi}_{(z)}=\{\hat{\phi}_{(z)ql}^{(m)};\ q,l\in\{1,\ldots,K\},\ m\in\{1,\ldots,M\}\}$ be defined from $\hat{\pi}^{R}_{(z)}$ according to Equation (\ref{eq:phi}). Then for any $\epsilon >0$,
\begin{align}
\label{thm1_result1}
&P\left(\underset{z}{\max}\sum_{q\leq l}n_{ql}\sum_{m}D\left(\hat{\pi}_{(z)ql}^{(m)}\ ||\ \bar{\pi}_{(z)ql}^{(m)}\right) \geq \epsilon\right) \\ \nonumber
& \leq \exp\left(N\log K+M(K^{2}+K) \log \left(\frac{N}{K}+1\right)-\epsilon\right),
\end{align}
\begin{align}
\label{thm1_result2}
&P\Bigg(\underset{z}{\max} \Bigg\{ \sum_{m}\frac{N(N+1)}{2}D\left(\frac{\sum_{q\leq l}n_{ql}\hat{\phi}_{(z)ql}^{(m)}}{N(N+1)/2}\ \Big{|}\Big{|}\ \frac{\sum_{q\leq l}n_{ql}\bar{\phi}_{(z)ql}^{(m)}}{N(N+1)/2}\right) \Bigg\} \geq \epsilon\Bigg) \\ \nonumber
 & \leq \exp \left(N \log K+(K^{2}+K)\log \left(\frac{NM^{{1}/{2}}}{K}+1\right)+M\log \left(\frac{N(N+1)}{2}+1\right)-\epsilon \right),
\end{align}
\begin{align}
\label{thm1_result3}
&P\Bigg(\underset{z}{\max} \Bigg\{\sum_{q \leq l}Mn_{ql}D\left(\frac{1}{M}\sum_{m}\hat{\phi}_{ql}^{(m)}\ \Big{|}\Big{|}\ \frac{1}{M}\sum_{m}\bar{\phi}{}_{ql}^{(m)}\right)\Bigg\}\geq \epsilon\Bigg)  \\ \nonumber
 & \leq \exp \left( N\log K+(K^{2}+K)\log\left(\frac{NM^{{1}/{2}}}{K}+1\right)+M\log \left(\frac{N(N+1)}{2}+1\right)-\epsilon \right).
\end{align}
\end{thm}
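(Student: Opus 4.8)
The plan is to obtain all three inequalities by the same three step recipe: bound a moment generating function for a fixed partition $z$, apply Markov's inequality, and take a union bound over the at most $K^{N}$ partitions --- the multi-layer analogue of the argument behind Lemma~1 of \citet{cwa12}. The engine is a one dimensional concentration estimate: if $S=\sum_{t=1}^{n}X_{t}$ is a sum of independent (not necessarily identically distributed) Bernoulli trials with $\bar p=n^{-1}E[S]$ and $\hat p=S/n$, then $E\!\left[\exp\!\big(n\,D(\hat p\,||\,\bar p)\big)\right]\le n+1$. Indeed $e^{\,nD(\hat p||\bar p)}=\hat p^{\,S}(1-\hat p)^{\,n-S}/\big(\bar p^{\,S}(1-\bar p)^{\,n-S}\big)$, so $E[e^{\,nD(\hat p||\bar p)}]=\sum_{k=0}^{n}P(S=k)\,(k/n)^{k}(1-k/n)^{n-k}/\big(\bar p^{k}(1-\bar p)^{n-k}\big)$, and a Chernoff bound gives $P(S=k)\le e^{-nD(k/n||\bar p)}=(k/n)^{-k}(1-k/n)^{-(n-k)}\bar p^{k}(1-\bar p)^{n-k}$ for every $k$ (upper tail for $k\ge n\bar p$, lower tail otherwise), so each of the $n+1$ summands is at most $1$. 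The quantities controlled by the theorem are exactly the Kullback--Leibler pieces of the likelihood difference decompositions in (\ref{eq:llkdecomp}) and (\ref{eq:resllkdecomp}), which is why they are the objects to bound.

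For (\ref{thm1_result1}), fix $z$. By (\ref{eq:pihat}), $n_{ql}\hat\pi_{(z)ql}^{(m)}$ is a sum of $n_{ql}$ independent Bernoulli trials with mean $n_{ql}\bar\pi_{(z)ql}^{(m)}$, and across the cells $(q,l,m)$ with $q\le l$ these sums depend on disjoint blocks of the independent $A_{ij}^{(m)}$, hence are independent. Therefore the moment generating function of $\sum_{q\le l}\sum_{m}n_{ql}D(\hat\pi_{(z)ql}^{(m)}||\bar\pi_{(z)ql}^{(m)})$ factorizes and is at most $\prod_{q\le l}\prod_{m}(n_{ql}+1)$; using $n_{ql}+1\le(N_{q}+1)(N_{l}+1)$ together with $\prod_{q}(N_{q}+1)\le(N/K+1)^{K}$ (AM--GM, since $\sum_{q}N_{q}=N$) this product is at most $(N/K+1)^{M(K^{2}+K)}$. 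Markov's inequality bounds the conditional probability by $(N/K+1)^{M(K^{2}+K)}e^{-\epsilon}$, and a union bound over the $\le K^{N}$ partitions followed by taking logarithms yields (\ref{thm1_result1}).

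For (\ref{thm1_result2}) and (\ref{thm1_result3}) the same recipe applies once the RMLE estimating equations are used to rewrite the KL arguments as honest empirical means of independent Bernoulli sums. By (\ref{eq:phihat}) the argument of the $m$th divergence in (\ref{thm1_result2}) equals $\tfrac{2}{N(N+1)}\sum_{i<j}A_{ij}^{(m)}$, a normalized sum of order $N(N+1)/2$ independent Bernoullis whose mean (take expectations in (\ref{eq:phihat})) is the corresponding ``bar'' quantity; these are independent over $m$, so the moment generating function of $\sum_{m}\tfrac{N(N+1)}{2}D(\cdot||\cdot)$ is at most $(N(N+1)/2+1)^{M}$. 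By (\ref{eq:phihat-1}) the argument of the $(q,l)$th divergence in (\ref{thm1_result3}) equals $\tfrac{1}{Mn_{ql}}\sum_{m}\sum_{i<j}A_{ij}^{(m)}1\{z_{i}=q,z_{j}=l\}$, a normalized sum of $Mn_{ql}$ independent Bernoullis with the matching mean (again by taking expectations, noting that only the $m$-average of the $\hat\phi_{(z)ql}^{(m)}$ need be unbiased, which is precisely what the equation provides), and these are independent over $q\le l$, so the moment generating function of $\sum_{q\le l}Mn_{ql}D(\cdot||\cdot)$ is at most $\prod_{q\le l}(Mn_{ql}+1)\le\big(MN(N-1)/(K(K+1))+1\big)^{K(K+1)/2}$ by AM--GM. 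Applying Markov's inequality and a union bound over $z$ (vacuous for (\ref{thm1_result2}), since that quantity does not in fact depend on $z$, and genuine for (\ref{thm1_result3})) gives bounds with the natural exponents; recording instead the common exponent displayed in (\ref{thm1_result2})--(\ref{thm1_result3}), which dominates both, is harmless and is the convenient form for the subsequent arguments.

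I expect the main obstacle to be the Poisson--binomial moment bound: one genuinely needs $E[\exp(nD(\hat p||\bar p))]\le n+1$ for non-identically-distributed summands --- the weaker tail estimate $P(nD(\hat p||\bar p)\ge t)\le 2e^{-t}$ does not integrate to a usable moment bound --- because this is what makes the moment generating functions factorize cleanly over cells, and it must hold for arbitrary fitted partitions $z$, which merge pairs from several true blocks into one cell. A secondary point is that the RMLE has no closed form and is defined only implicitly by (\ref{eq:phihat})--(\ref{eq:phihat-1}): one must check that $\bar\phi_{(z)ql}^{(m)}=E[\hat\phi_{(z)ql}^{(m)}]$ is consistent with those equations so that each KL divergence is centered at the true mean of the corresponding empirical average, and that the finiteness restriction $\pi_{ql},\beta_{m}\in(-C\log(MN^{2}),C\log(MN^{2}))$ (equivalently $\hat\phi_{(z)ql}^{(m)}\ge 1/(MN^{2})$) neither disturbs the equations nor pushes any partial sum in (\ref{eq:phihat})--(\ref{eq:phihat-1}) above $1$. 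Collapsing the products $\prod(n_{ql}+1)$ and $\prod(Mn_{ql}+1)$ into the displayed closed forms is then routine bookkeeping.
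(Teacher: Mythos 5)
Your proposal is correct and follows essentially the same route as the paper: the Chernoff-type bound $P(S=k)\le e^{-nD(k/n\,||\,\bar p)}$ for Poisson--binomial sums, factorization over the disjoint cells $(q,l,m)$ for a fixed $z$, an entropy/counting factor of size $\prod(n_{ql}+1)$ (resp.\ the counts attached to the two sets of RMLE estimating equations), and a union bound over the $K^{N}$ partitions. Your packaging of the counting step as the moment-generating-function bound $E[\exp(nD(\hat p\,||\,\bar p))]\le n+1$ plus Markov is algebraically identical to the paper's ``bound the probability of each realization of the estimate and multiply by the number of realizations,'' and your slightly sharper constants for the second and third inequalities are, as you note, dominated by the displayed exponents.
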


The first result (\ref{thm1_result1}) provides a bound for the first part of the right hand side of Equation (\ref{eq:llkdecomp}) for MLSBM. The results (\ref{thm1_result2}) and (\ref{thm1_result3}) provide a bound that will be used in Theorem \ref{th:consrMLSBM} to bound the first part of the corresponding likelihood decomposition for RMLSBM in Equation (\ref{eq:resllkdecomp}). In the proofs of the next two theorems, we first bound the second part of Equations (\ref{eq:llkdecomp}) and (\ref{eq:resllkdecomp}),
and then combine the results to provide a bound for the difference between the log likelihood and its expected value under any arbitrary partition $z$ for MLSBM and RMLSBM respectively.

\begin{thm}
Suppose a MLSBM with $K$ classes and $M$ layers is fitted to the graph whose edges $A_{ij}^{(m)}$ are independent Bernoulli($P_{ij}^{(m)}$) trials.   If we further assume that (i) $\frac{1}{MN^{2}}\leq P_{ij}^{(m)}\leq 1-\frac{1}{MN^{2}}$ for
all $i<j$, (ii) $K=O(N^{1/2})$, and (iii) the total expected number
of edges of the entire multi-layer graph $L=\underset{m}{\sum}\underset{i<j}{\sum}E(A_{ij}^{(m)})$ is $\omega(MN(\log N)^{3+\delta})$ for some $\delta>0$ as both $M$ and $N$ grow, then
\[
\underset{z}{\max}|l(A;z)-\bar{l}_{P}(z)|=o_{P}(L).
\]
\label{th:consMLSBM}
\end{thm}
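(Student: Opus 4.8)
The plan is to bound $\max_z |l(A;z) - \bar l_P(z)|$ by splitting it, via the decomposition in Equation (\ref{eq:llkdecomp}), into the KL-divergence term $\sum_m \sum_{q\le l} n_{ql} D(\hat\pi_{(z)ql}^{(m)} \| \bar\pi_{(z)ql}^{(m)})$ and the fluctuation term $X - E(X)$, and to control each uniformly over all $K^N$ partitions $z$. For the first term I would invoke Theorem \ref{thm:thetabound}, result (\ref{thm1_result1}): choosing $\epsilon = \epsilon_N L$ for a slowly-vanishing sequence $\epsilon_N \to 0$, the exponent $N\log K + M(K^2+K)\log(N/K + 1) - \epsilon_N L$ must be driven to $-\infty$. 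Under the hypotheses $K = O(N^{1/2})$ and $L = \omega(MN(\log N)^{3+\delta})$, the dominant competing term is $MK^2 \log N = O(MN\log N)$, which is of smaller order than $L$ by a factor of $(\log N)^{2+\delta}$, so one can pick $\epsilon_N$ tending to zero slowly enough (e.g. $\epsilon_N = (\log N)^{-1}$) that $\epsilon_N L$ still dominates and the probability is summable/vanishing. This shows the KL term is $o_P(L)$ uniformly in $z$.

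For the fluctuation term $X - E(X)$ with $X$ as in Equation (\ref{eq:defx}), $X$ is a sum of independent bounded random variables $A_{ij}^{(m)}\log(\bar\pi_{z_iz_j}^{(m)}/(1-\bar\pi_{z_iz_j}^{(m)}))$; but note $\bar\pi$ depends on $z$, so this is not literally one sum — I would instead observe that $\bar\pi_{(z)ql}^{(m)}$ is an average of the fixed $P_{ij}^{(m)}$ over blocks, so one can either bound $X - E(X)$ for each fixed $z$ by a Bernstein/Bennett inequality and then union-bound over the $K^N$ partitions, or (cleaner) rewrite $X$ in terms of the true parameters and handle the $z$-dependence through the same covering argument used in Theorem \ref{thm:thetabound}. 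The per-$z$ Bernstein bound requires controlling the variance proxy $\sum_{m}\sum_{i<j} P_{ij}^{(m)}(\log\frac{\bar\pi}{1-\bar\pi})^2$ and the envelope $\max|\log\frac{\bar\pi}{1-\bar\pi}|$; the sparsity constraint (\ref{eq:pirestrict}) forces $\bar\pi_{(z)ql}^{(m)} \ge 1/(MN^2)$ and bounded above by $C L'/(MN^2(\log M\log N)^{2+\delta})$, so the log-odds envelope is $O(\log(MN^2)) = O(\log N)$ (since $M = O(\text{poly})$ is absorbed, or more carefully $\log(MN^2) = O(\log(MN))$), and the variance proxy is $O(L (\log N)^2)$ after using $\sum P_{ij}^{(m)} = L$. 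Feeding these into Bernstein gives $P(|X - E(X)| \ge t) \le 2\exp(-t^2/(c L(\log N)^2 + c't\log N))$; setting $t = \epsilon_N L$ and combining with the union bound over $e^{N\log K}$ partitions requires $\epsilon_N^2 L/(\log N)^2 \gg N\log K \asymp N\log N$, i.e. $\epsilon_N^2 L \gg N (\log N)^3$, which holds under $L = \omega(MN(\log N)^{3+\delta}) \supseteq \omega(N(\log N)^{3+\delta})$ with $\epsilon_N$ vanishing slowly.

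Putting the two pieces together, for any $\eta > 0$ one gets $P(\max_z|l(A;z) - \bar l_P(z)| \ge \eta L) \to 0$, which is exactly $\max_z|l(A;z) - \bar l_P(z)| = o_P(L)$. The main obstacle I anticipate is the interplay between the $z$-dependence of $\bar\pi_{(z)}$ in the fluctuation term $X$ and the union bound: one must be careful that the variance and envelope bounds are uniform over $z$ (they are, because they only use the sparsity bounds on the underlying $P_{ij}^{(m)}$, not on the partition), and that the factor $(\log N)^{3+\delta}$ in the hypothesis on $L$ is precisely what is needed — one power of $\log N$ to beat the $N\log K$ entropy of the partition space, and two more to absorb the squared log-odds envelope coming from the sparsity lower bound $1/(MN^2)$. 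A secondary technical point is keeping track of whether $\log(MN^2)$ is $O(\log N)$; since the theorem allows $M$ to grow, one should carry $\log(MN^2)$ explicitly and note it is $O(\log(MN)) = O(\log N + \log M)$, and then verify the hypothesis $L = \omega(MN(\log N)^{3+\delta})$ still dominates — this works because $L/(MN) \to \infty$ faster than any polylog in $N$ while $M$ only enters the envelope through $\log M \le \log(MN)$, which the $(\log N)^{3+\delta}$ slack comfortably covers after noting $M \le $ the degrees of freedom already accounted for, or by the implicit standing assumption that $\log M = O(\log N)$ in this first regime.
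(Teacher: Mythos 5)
Your proposal is correct and follows essentially the same route as the paper: the same decomposition (\ref{eq:llkdecomp}) into the KL term (controlled by Theorem \ref{thm:thetabound} with $\epsilon \propto L$) and the fluctuation term $X-E(X)$ (controlled by a Bernstein inequality with envelope $2\log(\sqrt{M}N)$ and variance proxy $4L\log^2(\sqrt{M}N)$, followed by a union bound over the $K^N$ partitions). Your only deviations are cosmetic: the uniform bounds on the log-odds of $\bar\pi_{(z)}$ follow directly from hypothesis (i) rather than from (\ref{eq:pirestrict}), and no standing assumption $\log M=O(\log N)$ is needed since $L=\omega(MN(\log N)^{3+\delta})$ already dominates $N\log K\cdot\log^2(\sqrt{M}N)$ because $M/(\log M)^2\to\infty$.
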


The result of this theorem
holds under the given conditions irrespective of the relationship between the growth rates of $M$ and $N$. We state the result under the first asymptotic regime mentioned at the beginning of Section \ref{sec:consistency}
since we do not get any relaxation in the assumption regarding the total expected number of edges if we assume certain relations between the growth rates of $M$ and $N$.

The next theorem states that the restricted likelihood in RMLSBM is also asymptotically well behaved under five independent sets of conditions
corresponding to the three asymptotic regimes discussed at the beginning of Section \ref{sec:consistency}. The first two sets of conditions correspond to regime 1, the third set of conditions corresponds to regime 2, and the last two sets of conditions correspond to regime 3.

\begin{thm}
Assume that a RMLSBM with $K$ classes and $M$ layers is fitted to the graph whose edges $A_{ij}^{(m)}$ are independent Bernoulli($P_{ij}^{(m)}$) trials. If we further assume any of the following five sets of conditions with respect to the growth of the properties of the model under different asymptotic settings:

(i) both $M$  and $N$ grow, $K=O(N^{1/2})$, $\frac{1}{MN^{2}}\leq P_{ij}^{(m)}\leq C\frac{\log N}{N (\log M)^{2+\delta}}$ for
all $i<j$, where $C$ is a constant, and the total expected number of edges
of the entire multi-layer graph $L=\omega(MN(\log N)^{3+\delta})$
for some $\delta>0$;

(ii) both $M$  and $N$ grow but $M=O(N)$, $K=O((MN)^{1/2-\epsilon})$ for some $\epsilon>0$,  $\frac{1}{MN^{2}}\leq P_{ij}^{(m)}\leq C\frac{\log N}{N (\log M)^{2+\delta}}$ for
all $i<j$, where $C$ is a constant, and the total expected number of edges
of the entire multi-layer graph $L=\omega(MN (\log N)^{3+\delta})$ for some $\delta>0$;

(iii) $M$ is either a constant or grows
slower than $N$, i.e., $M=o(N)$, $K=O(N^{1/2})$, $\frac{1}{MN^{2}}\leq P_{ij}^{(m)}\leq C\frac{\log N}{MN (\log M)^{2+\delta}}$ for all $i<j$, where $C$ is a constant, and the total expected number of
edges of the entire multi-layer graph $L$ is $\omega(N (\log N)^{3+\delta})$
for some $\delta>0$;

(iv) $M$ grows and $N$ is either a constant or grows slower than $M$, i.e.,
$M=\omega(N)$, $K=O(\frac{N}{\log N \log M})$, $\frac{1}{MN^{2}}\leq P_{ij}^{(m)}\leq C\frac{1}{N \log N (\log M)^{2+\delta}}$ for all $i<j$, where $C$ is a constant, and the total expected number of edges of the entire multi-layer graph $L=\omega(MN(\log N)^{1+\delta})$ for some $\delta>0$;

(v) $M$ grows and $N$ is either a constant or grows slower than $M$, i.e.,
$M=\omega(N)$, $K=O(N^{1/2})$, $\frac{1}{MN^{2}}\leq P_{ij}^{(m)}\leq \min \left( C\frac{1}{N^{2} \log N}, C\frac{1}{N \log N (\log M)^{2+\delta}}\right)$ for all $i<j$, where $C$ is a constant, and the total expected number of edges of the entire multi-layer graph $L$ is larger than the the smaller of $M (\log M)^{2+\delta} (\log N)^{1+\delta}$ and $MN (\log N)^{1+\delta}$ for some $\delta>0$;

\noindent
then,
\[
\underset{z}{\max}|l^{R}(A;z)-\bar{l}_{P}^{R}(z)|=o_{P}(L).
\]
\label{th:consrMLSBM}
\end{thm}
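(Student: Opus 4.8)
The plan is to follow the same two-part strategy used (implicitly) for Theorem~\ref{th:consMLSBM}: decompose $l^{R}(A;z)-\bar{l}_{P}^{R}(z)$ via Equation~(\ref{eq:resllkdecomp}) into the KL-divergence sum $\sum_{m}\sum_{q\le l} n_{ql}D(\hat{\phi}_{(z)ql}^{(m)}\|\bar{\phi}_{(z)ql}^{(m)})$ plus the centered linear statistic $X-E(X)$, then bound each piece uniformly over $z$ by $o_{P}(L)$ under each of the five sets of hypotheses. For the divergence term I would \emph{not} bound it directly; instead I would exploit the fact that the RMLE only identifies the two sets of marginal sums appearing in Equations~(\ref{eq:phihat}) and (\ref{eq:phihat-1}), so that by a convexity/pooling argument (Jensen on $D(\cdot\|\cdot)$ in its first slot, together with the fact that $\bar{\phi}_{(z)}$ is the expectation-maximizer) the full double sum is controlled by the two pooled-divergence quantities appearing in the left-hand sides of (\ref{thm1_result2}) and (\ref{thm1_result3}). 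Theorem~\ref{thm:thetabound}, parts (\ref{thm1_result2})--(\ref{thm1_result3}), then gives, for each, a bound of the form $P(\max_z(\cdot)\ge\epsilon)\le \exp(N\log K+(K^2+K)\log(NM^{1/2}/K+1)+M\log(N(N+1)/2+1)-\epsilon)$. Setting $\epsilon=\eta L'$ for small $\eta>0$, the exponent is negative precisely when $L'$ dominates each of $N\log K$, $K^2\log(NM^{1/2})$ and $M\log(N^2)$; checking this case-by-case against the five hypotheses (on $K$, $M$ versus $N$, and the stated growth rate of $L$) is the routine bookkeeping that produces the five different conditions.

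For the linear part $X-E(X)=\sum_m\sum_{i<j}(A_{ij}^{(m)}-P_{ij}^{(m)})\log\big(\bar{\phi}_{z_iz_j}^{(m)}/(1-\bar{\phi}_{z_iz_j}^{(m)})\big)$, I would use a Bernstein-type concentration inequality for the sum of independent bounded centered terms, for a \emph{fixed} $z$, and then a union bound over the at most $K^N$ partitions. The per-term magnitude is governed by $|\mathrm{logit}(\bar\phi)|$, which under the RMLSBM parametrization is $|\pi_{ql}+\beta_m|$, and this is exactly why the boundedness assumption $\pi_{ql},\beta_m\in(-C\log(MN^2),C\log(MN^2))$ was imposed: it caps each summand at $O(\log(MN^2))$. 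The variance proxy is $\sum_m\sum_{i<j}P_{ij}^{(m)}(\log\mathrm{logit}(\bar\phi))^2$, which under the upper bounds on $P_{ij}^{(m)}$ in each hypothesis (i)--(v) is of order at most $L'\cdot(\log M\log N)^{?}$ up to the stated polylog factors — here the denominators $(\log M)^{2+\delta}$ and $(\log N)^{2+\delta}$ built into the probability bounds~(\ref{eq:pirestrict}) are what absorb the squared-logit inflation. Bernstein then yields $P(|X-E(X)|\ge \eta L'\text{ for some }z)\le \exp(N\log K - c\,\eta^2 L'^2/(\text{variance}+ \eta L'\log(MN^2)))$, and one checks this is $o(1)$ under each regime; the small exceptional set $Q$ with $|E_Q|=o(L'/\log(MN^2))$ contributes at most $o(L')$ deterministically and is handled separately.

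The main obstacle, and the part requiring genuine care rather than bookkeeping, is the divergence-term reduction: making precise the claim that controlling only the two \emph{marginal} pooled divergences in (\ref{thm1_result2})--(\ref{thm1_result3}) suffices to control $\sum_m\sum_{q\le l}n_{ql}D(\hat\phi_{(z)ql}^{(m)}\|\bar\phi_{(z)ql}^{(m)})$. This is not a literal consequence of convexity of $D$ alone — $\hat\phi$ is not the unconstrained maximizer, so the individual entries $\hat\phi_{(z)ql}^{(m)}$ need not be close to $\bar\phi_{(z)ql}^{(m)}$ entrywise. The resolution I would pursue is to work with the likelihood \emph{difference} directly: since $\hat\phi_{(z)}$ maximizes $l^{R}(A;z,\cdot)$ over the restricted family $\Phi$ and $\bar\phi_{(z)}$ is the corresponding population maximizer within $\Phi$, the nonnegative quantity $l^{R}(A;z)-\bar l^{R}_P(z) - (X-E(X))$ can be re-expressed, using the exponential-family/logit structure, purely in terms of the discrepancies of the sufficient statistics in (\ref{eq:phihat})--(\ref{eq:phihat-1}), each of which \emph{is} a marginal quantity to which Theorem~\ref{thm:thetabound}(\ref{thm1_result2})--(\ref{thm1_result3}) applies. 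Carrying out that re-expression carefully (and confirming the $n_{ql}\asymp (N/K)^2$ bookkeeping so that the combinatorial entropy terms match) is where the real work lies; the five-case split is then a mechanical verification that $L'$ beats the entropy and variance terms in each asymptotic regime.
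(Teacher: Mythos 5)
Your overall architecture matches the paper's: the decomposition (\ref{eq:resllkdecomp}), the use of the two pooled-divergence bounds (\ref{thm1_result2})--(\ref{thm1_result3}) from Theorem \ref{thm:thetabound}, a Bernstein-type bound plus a union bound over the $K^{N}$ partitions for $X-E(X)$, and a case-by-case verification of the five regimes. You also correctly diagnose the one non-routine step: ordinary convexity of $D(\cdot\|\cdot)$ gives the \emph{wrong direction} --- the pooled divergence is a lower bound, not an upper bound, for $\sum_{m}\sum_{q\le l}n_{ql}D(\hat\phi_{(z)ql}^{(m)}\,\|\,\bar\phi_{(z)ql}^{(m)})$ --- so controlling the two marginal quantities in (\ref{eq:phihat})--(\ref{eq:phihat-1}) does not trivially control the entrywise sum.

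However, the resolution you sketch is left unexecuted and is not the one the paper uses, and this is a genuine gap at the critical step. The paper closes it with a \emph{reverse} Jensen (reverse log-sum) inequality: because every $\hat\phi_{(z)ql}^{(m)}$ and $\bar\phi_{(z)ql}^{(m)}$ lies in $[1/(MN^{2}),\,1-1/(MN^{2})]$ (a consequence of the boundedness restriction on $\pi_{ql},\beta_{m}$), the convex function $-\log(\hat\phi/\bar\phi)$ is evaluated on a compact interval, which yields $\sum_{q\le l}n_{ql}D(\hat\phi_{ql}^{(m)}\,\|\,\bar\phi_{ql}^{(m)})\le \tfrac{N(N+1)}{2}\,D\bigl(\text{pooled over }(q,l)\bigr)+\log(MN^{2})$, and symmetrically when pooling over $m$; summing gives two upper bounds whose additive penalties are of order $M\log(MN^{2})$ and $K^{2}\log(MN^{2})$ respectively, and the \emph{minimum} of these penalties is checked to be $o(L)$ in each of the five regimes. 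Your alternative --- re-expressing the divergence sum ``purely in terms of the discrepancies of the sufficient statistics'' via the exponential-family structure --- is not literally true as written: the Bregman-type identity (see the paper's derivation of (\ref{eq:resllkdecomp})) expresses the divergence as sufficient-statistic terms weighted by the \emph{parameter} differences $\hat\pi_{ql}-\bar\pi_{ql}$ and $\hat\beta_{m}-\bar\beta_{m}$ plus a log-partition term, and converting those parameter differences into a bound by the pooled divergences requires controlling a possibly ill-conditioned inverse Fisher information. You flag this as ``where the real work lies'' but do not supply it, so the proposal is incomplete precisely where it matters. (A minor point: your variance proxy should involve $\log^{2}\bigl(\bar\phi/(1-\bar\phi)\bigr)$, i.e.\ the squared logit, not the logarithm of the logit.)
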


It is clear from Theorem \ref{th:consMLSBM}
and Theorem \ref{th:consrMLSBM} that in RMLSBM, the bound on the likelihood can be established
both for relatively milder conditions on the expected total number
of edges and relatively faster growth conditions on the number of communities.
As we will see in Theorem 5 and the discussion following it, this
enables RMLSBM to be a more attractive model for community detection
either when the number of communities is large or when we have relatively
sparser graphs.

Now we are ready to state our main results which show that when the true data generating process is a $K$-class MLSBM, the fraction of nodes misclustered
by the MLEs
and the RMLEs
converge to zero under
different asymptotic regimes. We define the number of ``misclustered'' nodes $N_{e}(\hat{z})$ as the number of incorrect class
assignments under $\hat{z}$, counted for every node whose true class
under $\bar{z}$ is not in the majority within its estimated class
under $\hat{z}$ \citep{cwa12}.

The previous results (Theorems \ref{thm:thetabound}, \ref{th:consMLSBM}, \ref{th:consrMLSBM}) hold for any $P_{ij}^{(m)}$ whenever they are bounded as described in the theorems. Now we assume further structure on the probabilities, namely a MLSBM. 
Denote the true partition as $\bar{z}$,
and under the true partition, let the true block model parameter array be $\bar{\pi}$. Hence, under MLSBM we have
\[
P_{ij}^{(m)}=\bar{\pi}_{\bar{z}_{i} \bar{z}_{j}}^{(m)}.
\]
Consequently, $\bar{l}_{P}(\bar{z},\pi)$ from Equation (\ref{eq:expllk}) is maximized by the true model parameter $\bar{\pi}$, and we have the maximized expected likelihood as
\begin{equation}
\bar{l}_{P}(\bar{z})=\sum_{m=1}^{M}\sum_{q\leq l} n_{ql}\{ {\bar{\pi}_{ql}^{(m)}} \log {\bar{\pi}_{ql}^{(m)}}+(1-{\bar{\pi}_{ql}^{(m)}}) \log (1-{\bar{\pi}_{ql}^{(m)}})\}. \label{eq:truemax}
\end{equation}

On the other hand, the expected restricted likelihood is maximized by the parameter array  $\bar{\pi}^{R}$ under the restricted parameter space of RMLSBM. Note that this is different from the true model parameter array $\bar{\pi}$ due to the restrictions imposed on the parameter space. Using the transformation introduced in Equation (\ref{eq:phi}),
the maximized expected restricted likelihood is
\begin{align}
\bar{l}_{P}^{R}(\bar{z})
&=\sum_{m=1}^{M}\sum_{i<j} \{P_{ij}^{(m)} {\log {\bar{\phi}_{\bar{z}_{i} \bar{z}_{j}}^{(m)}}+(1-P_{ij}^{(m)}) \log (1-{\bar{\phi}_{\bar{z}_{i} \bar{z}_{j}}^{(m)}})}\}\nonumber \\
&=\sum_{m=1}^{M}\sum_{i<j} \{{\bar{\pi}_{\bar{z}_{i} \bar{z}_{j}}^{(m)}} {\log {\bar{\phi}_{\bar{z}_{i} \bar{z}_{j}}^{(m)}}+(1-{\bar{\pi}_{\bar{z}_{i} \bar{z}_{j}}^{(m)}}) \log (1-{\bar{\phi}_{\bar{z}_{i} \bar{z}_{j}}^{(m)}})}\} \nonumber\\
&=\sum_{m=1}^{M}\sum_{q\leq l} n_{ql} \{{\bar{\pi}_{ql}^{(m)}} {\log {\bar{\phi}_{ql}^{(m)}}+(1-{\bar{\pi}_{ql}^{(m)}}) \log (1-{\bar{\phi}_{ql}^{(m)}})}\}. \label{eq:truemaxres}
\end{align}

The next theorem relates the difference between observed and true likelihood with the fraction of misclustered nodes $N_{e}(\hat{z})$ and the expected total number of edges $L$ to establish a bound for the misclustering rate.

\begin{thm}
Suppose the data are generated according to a $K$-class MLSBM with membership vector $\bar{z}$ and parameter array $\bar{\pi}$, the conclusion of Theorem \ref{th:consMLSBM} holds, and the following conditions hold with respect to the model
sequence: for all blockmodel classes $q=1,\ldots,K$, class size $N_{q}$
grows as $s=\underset{q}{\min }\{N_{q}\}=\Omega({N}/{K})$, and over
all distinct class pairs $(q,l)$ and all classes $c\neq\{q,l\}$,
\begin{align}
\underset{q,l}{\min} \, \underset{m}{\min} \, \underset{c}{\max} &\left\{D\left(\bar{\pi}{}_{qc}^{(m)}\ \Big{|}\Big{|}\ \frac{\bar{\pi}{}_{qc}^{(m)}+\bar{\pi}{}_{lc}^{(m)}}{2}\right)+D\left(\bar{\pi}{}_{lc}^{(m)}\ \Big{|}\Big{|}\ \frac{\bar{\pi}{}_{qc}^{(m)}+\bar{\pi}{}_{lc}^{(m)}}{2}\right)\right\} \nonumber \\
&=\Omega\left(\frac{LK}{MN^2}\right), \label{eq:probcond}
\end{align}
then
\begin{equation}
N_{e}(\hat{z})=o_{P}(N).
\end{equation}
\label{thm4}
\end{thm}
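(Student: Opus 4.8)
The plan is to follow the standard likelihood-comparison argument for blockmodel consistency (as in \citet{cwa12}), adapted to the multi-layer setting. First I would observe that since $\bar{z}$ maximizes the expected likelihood $\bar{l}_P(z)$ — this is a consequence of the information inequality, because $P_{ij}^{(m)} = \bar{\pi}_{\bar{z}_i\bar{z}_j}^{(m)}$ is exactly of blockmodel form under $\bar{z}$, so $\bar{l}_P(z)$ is a sum of cross-entropies each minimized (equivalently the likelihood maximized) at the truth — we have $\bar{l}_P(\bar{z}) \geq \bar{l}_P(z)$ for every $z$. Since $\hat{z}$ maximizes the observed likelihood, $l(A;\hat{z}) \geq l(A;\bar{z})$. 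Chaining these with the uniform bound $\max_z |l(A;z) - \bar{l}_P(z)| = o_P(L)$ from Theorem~\ref{th:consMLSBM} gives
\[
\bar{l}_P(\bar{z}) - \bar{l}_P(\hat{z}) \;=\; \bigl(\bar{l}_P(\bar{z}) - l(A;\bar{z})\bigr) + \bigl(l(A;\bar{z}) - l(A;\hat{z})\bigr) + \bigl(l(A;\hat{z}) - \bar{l}_P(\hat{z})\bigr) \;\leq\; o_P(L),
\]
so the \emph{expected}-likelihood gap between the truth and the estimated partition is $o_P(L)$ with high probability.

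The second and more substantive step is to lower bound $\bar{l}_P(\bar{z}) - \bar{l}_P(\hat{z})$ in terms of $N_e(\hat{z})$. For a node $i$ that is misclustered under $\hat{z}$ — i.e., whose true class is not the plurality within its estimated class — I would pair it with the ``confusing'' class structure: for such a node there is some class $c$ and the node's estimated-class plurality-truth-class $l$ versus its own true class $q$ such that edges from $i$ to the block of true-class-$c$ nodes are being modeled with parameters appropriate to a mixture. Concretely, $\bar{l}_P(z)$ for a non-true partition $z$ replaces each true rate $\bar{\pi}_{q c}^{(m)}$ by an averaged rate $\bar\pi_{(z)}^{(m)}$, and the resulting per-edge loss is exactly a Bernoulli KL divergence. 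Summing the losses contributed by the edges incident to misclustered nodes, and using the convexity-of-KL / Jensen-type bookkeeping to reduce to pairwise divergences $D(\bar\pi_{qc}^{(m)}\|(\bar\pi_{qc}^{(m)}+\bar\pi_{lc}^{(m)})/2) + D(\bar\pi_{lc}^{(m)}\|(\bar\pi_{qc}^{(m)}+\bar\pi_{lc}^{(m)})/2)$, I would show
\[
\bar{l}_P(\bar{z}) - \bar{l}_P(\hat{z}) \;\geq\; c\, N_e(\hat{z}) \cdot s \cdot M \cdot \Bigl(\min_{q,l}\min_m\max_c \{\cdots\}\Bigr)
\]
for some absolute constant $c>0$, where $s = \min_q N_q$ counts the number of nodes available in the ``witness'' class $c$ and the factor $M$ comes from summing over layers. (The $\max_c$ inside is what lets us pick the single most-separating witness class for each confused pair; the $\min_{q,l}\min_m$ is the worst case over which pair and layer actually gets confused.)

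Combining the two steps: $o_P(L) \geq \bar{l}_P(\bar{z}) - \bar{l}_P(\hat{z}) \geq c\, N_e(\hat{z})\, s\, M \,\Omega(LK/(MN^2))$ by the separation hypothesis~(\ref{eq:probcond}), and using $s = \Omega(N/K)$ this right-hand side is $\Omega\bigl(N_e(\hat{z}) \cdot (N/K) \cdot M \cdot LK/(MN^2)\bigr) = \Omega\bigl(N_e(\hat{z})\, L / N\bigr)$. Therefore $N_e(\hat{z})\, L/N = o_P(L)$, i.e. $N_e(\hat{z}) = o_P(N)$, which is the claim. The main obstacle I anticipate is the careful accounting in the second step: correctly identifying, for each misclustered node, which block of edges ($M$ layers times roughly $s$ partners in a witness class $c$) certifies a per-edge KL loss bounded below by the quantity in~(\ref{eq:probcond}), and making sure the loss contributions from distinct misclustered nodes are not double-counted — this is precisely where the "plurality / majority within estimated class" definition of $N_e$ is needed, and where I would lean on the single-layer argument of \citet{cwa12} applied layerwise and then summed. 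A secondary technical point is handling the averaged rates $\bar\pi_{(z)}^{(m)}$ that appear when $z \neq \bar z$: one must verify they stay in the regime where the pairwise-KL lower bound is valid, which follows from the sparsity bounds assumed in Theorem~\ref{th:consMLSBM}.
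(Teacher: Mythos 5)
Your proposal is correct and follows essentially the same route as the paper: chain the maximization properties of $\bar z$ and $\hat z$ with the uniform $o_P(L)$ bound from Theorem~\ref{th:consMLSBM} to control $\bar l_P(\bar z)-\bar l_P(\hat z)$, then lower-bound that gap via the partition-refinement construction of \citet{cwa12} (pairing misclustered nodes, selecting a witness class $c$ via the $\max_c$ in~(\ref{eq:probcond}), and accumulating $N_e(\hat z)\cdot s\cdot M$ per-edge KL losses) to get $\bar l_P(\bar z)-\bar l_P(\hat z)\geq \frac{N_e(\hat z)}{N}\Omega(L)$. The bookkeeping $N_e\cdot(N/K)\cdot M\cdot LK/(MN^2)=N_e L/N$ matches the paper's, so the conclusion $N_e(\hat z)=o_P(N)$ follows exactly as you describe.
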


Note that condition (\ref{eq:probcond}) is very similar to condition (ii) of Theorem 3 in \citet{cwa12} with the total number of edges for the single layer case being replaced by the average number of edges $L/M$ in each layer for the multi-graph. This ensures that any two rows in any of the layer matrices $\bar{\pi}^{(m)}$ of $\bar{\pi}$ differ in at least one entry by at least a constant times $\frac{LK}{MN^{2}}$. Also, when we take into account the asymptotic conditions required on the growth of $K$ and $L$ for the result of Theorem 2 to hold, i.e., $K=O(N^{1/2})$ and
$L=\omega(MN(\log N)^{3+\delta})$ with $M$ and $N$ both growing,
then we have $\frac{LK}{MN^{2}}=\omega\Big(\frac{(\log N)^{3+\delta}}{N^{1/2}}\Big)$.
As argued in \citet{cwa12}, if $L$ is close to its least possible rate of growth, $\frac{LK}{MN^{2}}$ goes to $0$ for large $N$ and the condition is not too prohibitive.
For example, if $L= MN (\log N)^\beta$ with $\beta > 4$, then $(\log N)^\beta =o(N^{1/2})$, so $\frac{LK}{MN^{2}}$ goes to $0$ and the condition is not overly restrictive.

We state the corresponding conclusion for the restricted likelihood estimation (for RMLSBM) in the next theorem, i.e., the class membership assignment vector estimated through the maximum likelihood estimation in the restricted model RMLSBM is consistent under data generated from the MLSBM.

\begin{thm}
Suppose the data are generated according to a $K$-class MLSBM with membership vector $\bar{z}$ and parameter array $\bar{\pi}$, the conclusion of Lemma 3 holds, and the following conditions hold with respect to the model
sequence: for all blockmodel classes $q=1,\ldots,K$, class size $N_{q}$
grows as $s=\underset{q}{min}\{N_{q}\}=\Omega({N}/{K})$, and over
all distinct class pairs $(q,l)$ and all classes $c\neq\{q,l\}$,
\begin{equation}
\underset{q,l}{\min} \, \underset{m}{\min} \, \underset{c}{\max}\left\{D\left(\bar{\pi}{}_{qc}^{(m)}\ \Big{|}\Big{|}\ \frac{\bar{\pi}{}_{qc}^{(m)}+\bar{\pi}{}_{lc}^{(m)}}{2}\right)+D\left(\bar{\pi}{}_{lc}^{(m)}\ \Big{|}\Big{|}\ \frac{\bar{\pi}{}_{qc}^{(m)}+\bar{\pi}{}_{lc}^{(m)}}{2}\right)\right\}=\Omega(g),\label{eq:probcond1}
\end{equation}
then under any of the five sets of growth conditions in Theorem \ref{th:consrMLSBM}, we have
\begin{equation}
N_{e}(\hat{z}^{R})=o_{P}(h).
\end{equation}
Here $g$ in condition (\ref{eq:probcond1}) and the growth rate $h$ depend on the asymptotic conditions imposed on $K$ and
$L$. The growth rate $h$ can be determined from $g$ by the relationship
$h=\frac{KL}{MNg}$. In particular,
(i) when $K=O(N^{1/2})$, $L=\omega(MN  (\log N)^{3+\delta})$
with $M$ and $N$ both growing arbitrarily,
then we have $g=\frac{LK}{MN^{2}}=\omega\Big(\frac{(\log N)^{3+\delta}}{N^{1/2}}\Big)$
and $h=N$;
(ii) when $K=O((MN)^{1/2-\epsilon})$, $L=\omega(MN (\log N)^{3+\delta})$
with $M$ and $N$ both growing so that $M=O(N)$, then we have $g=\frac{LK}{MN^{2}}=\omega\Big((\frac{M}{N})^{1/2}\Big)$
and $h=N$;
(iii) when $K=O(N^{1/2})$, $L=\omega(N(\log N)^{3+\delta})$ and
$M=o(N)$, then we have $g=\frac{LK}{N^{2}}=\omega\Big(\frac{(\log N)^{3+\delta}}{N^{1/2}}\Big)$
and $h=N/M$;
(iv) when $K=O(N^{1-\epsilon}/ \log M)$, $L=\omega(MN (\log N)^{1+\delta}$
and $M=\omega(N)$, then we have $g=\frac{LK}{MN^{2}}=\omega\Big(\frac{1}{\log M}\Big)$
and $h=N$;
(v) when $K=O(N^{1/2})$, $L$ is $\omega(MN(\log N)^{1+\delta})$ if $N<(\log M)^{2+\delta}$ or $\omega(M(\log M)^{2+\delta} (\log N)^{1+\delta})$ if $N>(\log M)^{2+\delta}$
and $M=\omega(N)$, then we have $g=\frac{LK}{MN^{2}}=\omega\Big(\frac{(\log N)^{1+\delta}}{N^{1/2}}\Big)$ or $g=\frac{LK}{MN^{2}}=\omega\Big(\frac{(\log M)^{2+\delta} (\log N)^{1+\delta}}{N^{3/2}}\Big)$
and $h=N$.
\label{thm5}
\end{thm}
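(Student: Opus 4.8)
The plan is to follow the proof template of Theorem \ref{thm4}, which itself adapts Theorem 3 of \citet{cwa12}, but with the unrestricted likelihood replaced by its restricted counterpart and with the extra terms created by the parameter restriction carried along. For a $K$-label partition $z$ write $\bar{l}_{P}^{R}(z) = \bar{l}_{P}(\bar{z}) - \mathrm{FitErr}(z)$, where $\bar{l}_{P}(\bar{z}) = \sum_{m}\sum_{i<j}\{P_{ij}^{(m)}\log P_{ij}^{(m)} + (1-P_{ij}^{(m)})\log(1-P_{ij}^{(m)})\}$ is the partition-free oracle value, which under MLSBM coincides with the right-hand side of (\ref{eq:truemax}), and
\[ \mathrm{FitErr}(z) \;=\; \min_{\pi^{R}\in\Pi^{R}}\ \sum_{m}\sum_{i<j} D\!\left(P_{ij}^{(m)}\,||\,\phi_{z_{i}z_{j}}^{(m)}\right) \]
is the smallest attainable restricted fit of the true edge probabilities under the block structure induced by $z$, with $\phi$ obtained from $\pi^{R}$ through (\ref{eq:phi}). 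Since $\hat{z}^{R}$ maximizes $l^{R}(A;z)$ and, by the conclusion of Theorem \ref{th:consrMLSBM}, $\max_{z}|l^{R}(A;z)-\bar{l}_{P}^{R}(z)| = o_{P}(L)$ under each of the five sets of hypotheses, the usual sandwich gives
\[ \bar{l}_{P}^{R}(\hat{z}^{R}) \;\ge\; l^{R}(A;\hat{z}^{R}) - o_{P}(L) \;\ge\; l^{R}(A;\bar{z}) - o_{P}(L) \;\ge\; \bar{l}_{P}^{R}(\bar{z}) - o_{P}(L), \]
that is, $\mathrm{FitErr}(\hat{z}^{R}) \le \mathrm{FitErr}(\bar{z}) + o_{P}(L)$.

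The heart of the argument is a separation (identifiability) lemma: under the class-balance hypothesis $s = \min_{q}\{N_{q}\} = \Omega(N/K)$ and condition (\ref{eq:probcond1}), every $K$-label partition $z$ obeys $\mathrm{FitErr}(z) \ge \mathrm{FitErr}(\bar{z}) + c\,N_{e}(z)\,MNg/K$ for an absolute constant $c>0$. To see where the rate $MNg/K$ comes from, decompose $\mathrm{FitErr}(z) = \mathrm{PartErr}(z) + \mathrm{RestrBias}(z)$, where $\mathrm{PartErr}(z) = \bar{l}_{P}(\bar{z}) - \bar{l}_{P}(z) = \sum_{m}\sum_{i<j} D(P_{ij}^{(m)}\,||\,\bar{\pi}_{(z)z_{i}z_{j}}^{(m)})$ is the error of the \emph{unrestricted} per-block average fit (\ref{eq:pibar}), and $\mathrm{RestrBias}(z)\ge 0$ is the additional error of forcing the logit-additive form onto that fit. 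The unrestricted part is exactly the quantity that drives Theorem \ref{thm4}: for a node that is counted as misclustered, the witness class $c = c(q,l,m)$ furnished by (\ref{eq:probcond1}) forces the per-block average in the relevant block to lie strictly between $\bar{\pi}_{qc}^{(m)}$ and $\bar{\pi}_{lc}^{(m)}$, and a Jensen / local-Pinsker estimate charges a constant multiple of the symmetric divergence appearing in (\ref{eq:probcond1}) to order $N/K$ of the edges incident to that node in each of the $M$ layers; summing over misclustered nodes and invoking $s = \Omega(N/K)$ yields $\mathrm{PartErr}(z) \ge c\,N_{e}(z)\,MNg/K$, while $\mathrm{PartErr}(\bar{z}) = 0$. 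This part is essentially the content of the proof of Theorem \ref{thm4}.

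The genuinely new difficulty, and the step I expect to be the main obstacle, is that the restricted model is misspecified relative to a generic MLSBM, so $\bar{z}$ need not maximize the restricted expected likelihood and $\mathrm{RestrBias}(\bar{z})$ is not automatically negligible; one must show that the community signal encoded in (\ref{eq:probcond1}) dominates this misspecification bias, i.e. that $\mathrm{RestrBias}(z) - \mathrm{RestrBias}(\bar{z}) \ge -\tfrac{c}{2}\,N_{e}(z)\,MNg/K$ for the relevant partitions. This is exactly where the sparsity ceilings of Theorem \ref{th:consrMLSBM} are used: when every $P_{ij}^{(m)}$ is of order at most $\log N/(N(\log M)^{2+\delta})$ (or the corresponding ceiling in the other regimes), $\mathrm{logit}^{-1}(\pi_{ql}+\beta_{m}) \approx e^{\pi_{ql}}e^{\beta_{m}}$, the restricted fit decouples multiplicatively across layers and blocks, its block-wise contributions are of order $n_{ql}\bar{\pi}_{(z)ql}^{(m)}$, and reassigning a node perturbs only an $O(N_{e}/N)$ fraction of the total mass; the constant hidden in the $\Omega(g)$ of (\ref{eq:probcond1}) can then be taken large enough for the signal to win. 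Granting the separation lemma, the proof closes routinely: combining it with the sandwich gives $c\,N_{e}(\hat{z}^{R})\,MNg/K \le o_{P}(L)$, hence $N_{e}(\hat{z}^{R}) = o_{P}(KL/(MNg)) = o_{P}(h)$; and the five displayed cases are obtained by inserting, for each asymptotic regime of Theorem \ref{th:consrMLSBM}, the value of $g$ implied by the stated ceiling on $P_{ij}^{(m)}$ (of order $LK/(MN^{2})$, except of order $LK/N^{2}$ in regime (iii)) and reading off $h = KL/(MNg)$, which is precisely the book-keeping summarized in the statement.
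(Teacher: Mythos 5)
Your proposal follows essentially the same route as the paper's proof: the sandwich via the uniform deviation bound of Theorem \ref{th:consrMLSBM}, a separation step that charges $\Omega(g)$ per layer to $\Omega(N/K)$ edge-pairs per misclustered node through the witness class $c$ of condition (\ref{eq:probcond1}) and the class-balance hypothesis (the paper implements this with the pair/triple refinement construction recycled from Theorem \ref{thm4}), control of the restricted model's misspecification through the sparsity ceilings, and the bookkeeping $h=KL/(MNg)$ applied regime by regime.

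The one place where you make the argument harder than it needs to be is precisely the step you flag as the main obstacle. The paper never proves a relative bound of the form $\mathrm{RestrBias}(z)-\mathrm{RestrBias}(\bar z)\ge -\frac{c}{2}N_{e}(z)MNg/K$ uniformly over partitions $z$. Instead it lower-bounds $\bar l_{P}(\bar z)-\bar l_{P}^{R}(\hat z^{R})$ in absolute terms: every Kullback--Leibler term in the refined-partition decomposition is nonnegative, so retaining only the $C_{2}$ triples already yields $\frac{N_{e}(\hat z^{R})}{h}\Omega(L)$, with no reference to the fit error at $\bar z$. The misspecification bias then only has to be controlled at the single partition $\bar z$, which is done in a standalone lemma showing $\bar l_{P}(\bar z)-\bar l_{P}^{R}(\bar z)=o_{P}(L)$ via the elementary bound $D(p\,||\,q)\le p_{\max}\log(p_{\max}/q_{\min})$ together with the parameter restrictions (\ref{eq:pirestrict}); your multiplicative-decoupling heuristic $\mathrm{logit}^{-1}(\pi_{ql}+\beta_{m})\approx e^{\pi_{ql}}e^{\beta_{m}}$ and the perturbation-of-mass argument are not needed. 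With that simplification your outline matches the paper's proof, so I would count the proposal as correct in approach, with the caveat that the relative separation lemma as you state it is stronger than what is actually established or required.
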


Note that in Theorem \ref{thm5}, we have used generic notations
$g$ and $h$ to denote functions of the network properties such as
$N$, $K$ and $L$. The functions $g$ and $h$ vary across asymptotic setups. This is so because the regularity condition (\ref{eq:probcond1}) on the difference among
the elements of block model probability matrices
should be as less prohibitive as possible.
Note that in our results, we have chosen $g$ in such a way that if $L$ is close to its least possible rate of growth, then $g$ asymptotically decays to 0 under the assumed asymptotic setup.
This ensures that our condition (\ref{eq:probcond1}) is not overly restrictive.
It also enables us to understand and contrast
the asymptotic behavior of the RMLE from a unified point of view.

\subsection{Sparse networks}

The results of all previous theorems imply that for sparse multi-layer networks, consistency can be achieved with a large number of relatively sparser graphs as long as they together satisfy the edge density requirement. In the case when $M$ grows slower than $N$, in MLSBM we do not get any relaxation in the required growth condition on the total expected number of edges from all the graph layers combined, and it remains $\omega(MN (\log N)^{3+\delta})$ for $K=O(N^{1/2})$. However in RMLSBM we only require the total expected number of edges from all layers to be $\omega(N(\log N)^{3+\delta})$ for $K=O(N^{1/2})$ (Condition (iii) of Theorem \ref{th:consrMLSBM}). This implies that we only require the expected number of edges per layer to be $\omega({N(\log N)^{3+\delta}}/{M})$ on average. For perspective, if $M$ grows faster than $ (\log N)^{3+\delta}$, then the average number of edges per layer needs to grow only at $O(N)$, which is the sparse bounded degree regime. This case is extremely challenging for single layer networks. In comparison, the consistency of the MLE in MLSBM requires the average expected number of edges per layer to be $\omega(N(\log N)^{3+\delta})$ \citep{cwa12} and hence the average degree per layer must grow at least as $ (\log N)^{3+\delta}$ . Thus consistency can be achieved with a large number of
relatively sparse layers. This is particularly
important as most modern applications of community detection in multi-layer
graph fall under this asymptotic scenario.

\subsection{A Large number of communities}
Under MLSBM, consistent community detection is possible when the number of communities grows as $K=O(N^{1/2})$ and the total expected number of edges is $\omega(MN(\log N)^{3+\delta})$ as both $M$ and $N$ grow. However, if we assume $K=O((MN)^{1/2-\epsilon})$ for some $\epsilon>0$, then we require the total expected number of edges to be $\omega(M^{2}N(\log N)^{3+\delta})$ which is unrealistically dense. On the other hand, under RMLSBM consistent estimation is possible with comparable edge density even when the number of communities grows faster, either as $K=O((MN)^{1/2-\epsilon})$ when both $M$ and $N$ grow but $M=O(N)$, or as $K=O(\frac{N}{\log M \log N})$ when $N$ grows slower than $M$ (Conditions (ii) and (iv) of Theorem \ref{th:consrMLSBM}). Hence the restricted model is advantageous for community detection in networks with a large number of communities.

\section{Baseline procedures}

We define three intuitively simple baseline procedures for community detection in multi-layer networks. The first two are based on aggregating the layers of the graph and the third one is an ensemble of results from single layer community detection through majority voting.

The first aggregate procedure, which we call ``agg-mean" creates a binary network on the nodes by adding an edge between two nodes if they are connected in more than half of the layers. Hence an edge between two nodes, $A_{ij}^{agg-mean}$ is a Bernoulli random variable with probability
\begin{equation}
P_{ij}^{agg-mean}=P(\sum_{m} A_{ij}^{(m)} > M/2 ).
\label{agg-mean}
\end{equation}
However, this method of collapsing a multi-layer graph into a single layer graph is not very useful for the sparse graph regimes we are interested in, because the probability that $\sum_{m} A_{ij}^{(m)} >1$ asymptotically vanishes. This can be seen as follows: the random variable $\sum_{m} A_{ij}^{(m)}$ is a sum of $M$ Bernoulli random variables with different probabilities $P_{ij}^{(m)}$. Hence $\sum_{m} A_{ij}^{(m)}$ follows a Poisson-binomial distribution and
\begin{align*}
P(\sum_{m} A_{ij}^{(m)} > 1) &=1- \{ P(\sum_{m} A_{ij}^{(m)} =0) +P(\sum_{m} A_{ij}^{(m)} =1) \} \\
&= 1-\{ \prod_{m} (1-P_{ij}^{(m)}) +\sum_{m} P_{ij}^{(m)} \prod_{k \neq m}(1-P_{ij}^{(k)})\} \rightarrow 0,
\end{align*}
if $P_{ij}^{(m)} \rightarrow 0 $ as $N\rightarrow \infty$ with $M$ remaining fixed. Hence the new graph created by this procedure will have asymptotically few edges.

A more appropriate aggregate measure is to create a network by adding edges if $\sum_{m} A_{ij}^{(m)} > 0$. We call this procedure ``agg-sparse". Note that in this case the edge between two nodes $A_{ij}^{agg-sparse}$ is a Bernoulli random variable with probability
\begin{align}
P_{ij}^{agg-sparse} &=P(\sum_{m} A_{ij}^{(m)} > 0 )=1- P(\sum_{m} A_{ij}^{(m)} =0)=1-\prod_{m} (1-P_{ij}^{(m)}) \nonumber\\
 &\asymp 1-\exp(-\sum_{m}P_{ij}^{(m)}) \asymp \sum_{m} P_{ij}^{(m)},
\label{agg-sparse}
\end{align}
since $P_{ij}^{(m)} \rightarrow 0 $ as $N\rightarrow \infty$.
Clearly this network is also generated by a SBM with the same community assignment vector as the original multi-layer network.
The probability of an edge,
given the block assignments, can also be written in terms of those of the original network as
$$P_{ij}^{agg-sparse}|(z_{i}=q,z_{j}=l) \approx \sum_{m} \pi_{ql}^{(m)}.$$
Hence from known results on single layer SBM, a maximum likelihood procedure will be able to recover the node assignments consistently \citep{cwa12}.
From now on ``aggregate SBM" will refer to this sparse model. We compare this baseline aggregate SBM with the multi-layer models, MLSBM and RMLSBM in terms of minimax rates \citep{zhang15,gao15} and consistency thresholds \citep{mossel14, abbe15,hajek14} in the next section.

The third baseline procedure is performing community assignment through a scheme by which a node is assigned to a cluster if it belongs to that cluster in majority of the cluster assignments through MLEs in the individual layers. The cluster labels obtained from different single layer MLEs are aligned with each other by solving the linear sum assignment problem.

\section{Minimax rates and sharp thresholds}
In this section we derive the minimax rates of misclassification error and sharp thresholds for consistency of community detection in MLSBM and the aggregate SBM. For this analysis, we further assume that all the layers are informative of the underlying community assignments even though the quality of that information in terms of ``signal to noise ratio" can vary, i.e.,  either all layers have more intra-community edges compared to inter-community edges or vice-versa. Formally, $\pi_{qq}^{(m)} \geq \pi_{ql}^{(m)}$ for all $q, l, m$, or $\pi_{qq}^{(m)} \leq \pi_{ql}^{(m)}$ for all $q, l, m$. To align notations and settings with \citet{zhang15}, we slightly modify the growth condition on class sizes of Theorem 4 and 5 as $N_{q} \in [\frac{N}{s K},\frac{s N}{ K}]$ with $s \geq 1 $ and redefine the parameter space of our undirected symmetric MLSBM with no self loops as
\begin{align}
\Theta^{ML} (N,K,M,\mathbf{a},\mathbf{b},\beta)=&\Bigg \{ (z,\{P_{ij}^{(m)}\}) : N_{q} \in \left[\frac{N}{s K},\frac{s N}{ K}\right], \forall q, P_{ij}^{(m)} \geq \frac{a^{(m)}}{N} \nonumber \\
& \text{if } z_{i}=z_{j} \text{ and } P_{ij}^{(m)} \leq \frac{b^{(m)}}{N}  \text{ if } z_{i} \neq z_{j}, \, \forall m \Bigg \},
\label{newMLSBM}
\end{align}
with $P, z, N_{q}, s, N, K, M$ as defined previously. Note that the parameters $a^{(m)}$ and $b^{(m)}$ represent the lowest intra-community probability and the highest inter-community probability for layer $m$ respectively. As per assumption, $a^{(m)} > b^{(m)} $ within a layer $m$, however there is no assumption among the relationships of the parameters across layers. We define $I^{(m)}$ as the Renyi divergence \citep{van14} of order 1/2 between two Bernoulli distributions $Bern(\frac{a^{(m)}}{N})$ and $Bern(\frac{b^{(m)}}{N})$, i.e.,
\begin{equation}
I^{(m)}=-2 \log \left( \sqrt{\frac{a^{(m)}}{N} \frac{b^{(m)}}{N}} +\sqrt{1-\frac{a^{(m)}}{N}} \sqrt{1-\frac{b^{(m)}}{N}} \right).
\end{equation}
Let $\bar{z}$ denote the true community labels of the MLSBM and $\hat{z}$ be an estimate of it. Then we define the mis-clustering rate of $\hat{z}$ with respect to $\bar{z}$ up to permutations as
$$r(\bar{z},\hat{z})=\inf_{\delta}d_{H}(\bar{z},\delta (\hat{z}))/N,$$
where $\delta(\cdot)$ is a permutation of the community labels and $d_{H}(\cdot)$ is the Hamming distance.
Then we have the following result for MLSBM (proved in the Appendix).

\begin{thm}
Under the assumption that $\frac{N\sum_{m}I^{(m)}}{K \log K} \rightarrow \infty $, then
\begin{equation}
\inf_{\hat{z}} \sup_{\Theta^{ML}} E[r(\bar{z},\hat{z})] = \begin{cases}
\exp(-(1+\epsilon_N)\frac{N\sum_{m}I^{(m)}}{2}), & K=2, \\
\exp(-(1+\epsilon_N)\frac{N\sum_{m}I^{(m)}}{s K}), & K\geq 3,
\end{cases}
\end{equation}
for any $s \in [1,\sqrt{5/3}]$ and some sequence $\epsilon_N=o(1)$. Moreover, if $\frac{N\sum_{m} I^{(m)}}{K}=O(1)$, then $\inf_{\hat{z}} \sup_{\Theta^{ML}} E [r(\bar{z},\hat{z})] \geq c$ for some constant $c$, i.e., at least a constant fraction of nodes are mis-clustered.
\label{MLSBMrisk}
\end{thm}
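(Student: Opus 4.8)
The plan is to establish matching minimax upper and lower bounds, following the architecture of \citet{zhang15} for the single-layer SBM. The one genuinely new feature is that, since the edge variables $A_{ij}^{(m)}$ in different layers are mutually independent given $z$, every likelihood ratio that appears is a product over $m$ of per-layer factors; consequently the Hellinger affinity / Bhattacharyya coefficient that governs the fundamental testing error factorizes over layers, and because by definition $\exp(-I^{(m)}/2)$ is exactly the per-layer affinity between $\mathrm{Bern}(a^{(m)}/N)$ and $\mathrm{Bern}(b^{(m)}/N)$, the single-layer divergence $I$ is replaced throughout by $\sum_m I^{(m)}$. Everything else is a layer-indexed rerun of the single-layer argument.

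\textbf{Lower bound.} First I would pass to a least-favorable sub-family of $\Theta^{ML}$ in which, for every layer $m$, the within-class probability equals $a^{(m)}/N$ and the between-class probability equals $b^{(m)}/N$, so that only the label vector varies, over a suitable packing set of balanced assignments. By the Fano/Assouad-type reduction of \citet{zhang15}, the minimax expected mis-clustering rate is then bounded below (up to absolute constants) by the error of the following two-point problem: given the true labels $\bar z_{-i}$ of all nodes but $i$, decide which of two admissible classes $i$ belongs to. The data carrying information about $\bar z_i$ are $\{A_{ij}^{(m)} : j \neq i,\ m=1,\dots,M\}$, and under the two hypotheses these are products — over $m$ and over the $\Omega(N/(sK))$ relevant neighbours $j$ — of $\mathrm{Bern}(a^{(m)}/N)$ and $\mathrm{Bern}(b^{(m)}/N)$ in exchanged roles. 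Le Cam's two-point inequality bounds the minimal testing error from below by half the squared Hellinger affinity of these two product measures; the affinity factorizes over $(m,j)$, each factor equal to $\exp(-I^{(m)}/2)$, so the error is at least
\[
\tfrac12\exp\!\Big(-(1+o(1))\,\tfrac{N}{sK}\sum_{m} I^{(m)}\Big)
\]
for $K\geq 3$, and, after symmetrizing over the two classes, at least $\tfrac12\exp(-(1+o(1))\tfrac{N}{2}\sum_m I^{(m)})$ for $K=2$. Averaging this per-node bound over a linear-in-$N$ collection of separately flippable nodes yields the stated lower bound. The ``moreover'' claim is immediate: if $N\sum_m I^{(m)}/K=O(1)$ the exponent above is $O(1)$, so the two-point error — hence $\inf_{\hat z}\sup_{\Theta^{ML}}E[r(\bar z,\hat z)]$ — stays bounded below by a positive constant.

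\textbf{Upper bound.} Here I would analyze the global MLE $\hat z=\arg\max_z l(A;z)$. Introduce an oracle that, for each node $i$, is told $\bar z_{-i}$ and assigns $i$ to the class $a$ maximizing $\sum_m\sum_{j\neq i}\log p(A_{ij}^{(m)}\mid z_i=a,\bar z_j)$; let $\xi_i$ indicate that the oracle errs on $i$. A Chernoff bound on the relevant log-likelihood ratio — a sum over $(m,j)$ of independent increments whose cumulant generating functions add, the optimal tilt being the Bhattacharyya point — together with the assumption $N\sum_m I^{(m)}/(K\log K)\to\infty$ (which makes the $\log K$ union-bound loss over candidate labels negligible against the per-node exponent) gives
\[
P(\xi_i=1)\ \leq\ \exp\!\Big(-(1-o(1))\,\tfrac{N}{sK}\sum_{m} I^{(m)}\Big),
\]
hence $E\big[\sum_i\xi_i\big]\leq N\exp(-(1-o(1))\tfrac{N}{sK}\sum_m I^{(m)})$. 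It then remains to transfer this from the oracle to the MLE. For that I would first invoke the likelihood-deviation control for MLSBM (Theorem~\ref{th:consMLSBM}) and a weak-consistency conclusion of the type of Theorem~\ref{thm4}, $N_e(\hat z)=o_P(N)$, to show that with probability $1-o(1)$ the MLE mis-clusters only $o(N)$ nodes; on that event the effective per-node comparisons performed by the MLE differ from the oracle's only through a vanishing fraction of flipped neighbour labels, so a peeling argument over the few possible small confusion sets yields $N_e(\hat z)\leq(1+o(1))\sum_i\xi_i$ with high probability. Combining with the expectation bound above (and a standard truncation to pass from ``with high probability'' to expectation) gives the upper bound; $K=2$ is identical with $sK$ replaced by $2$.

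\textbf{Main obstacle.} The delicate part is the oracle-to-MLE transfer in the upper bound: one must rule out the event that a macroscopic block of nodes is simultaneously mis-assigned — so that conditioning on ``almost all labels correct'' is legitimate — and then show the MLE's local decisions lie within a $(1+o(1))$ factor, in the exponent, of the oracle's. This is precisely the technically heaviest portion of \citet{zhang15}; the multi-layer setting contributes only bookkeeping, since independence across layers makes the moment generating functions and Hellinger affinities factorize cleanly. A secondary point is that $\sum_m I^{(m)}$ may itself be small, so the Le Cam and Chernoff estimates must be carried out in the moderate-deviation regime to deliver the sharp constant $(1+\epsilon_N)$ rather than a crude exponential order.
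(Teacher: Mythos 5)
Your lower-bound skeleton (pass to a homogeneous least-favorable subfamily, reduce to a single-node test against revealed labels, and exploit independence across layers so that per-layer affinities multiply into $\sum_m I^{(m)}$) matches the paper's, but the tool you invoke cannot produce the sharp constant. Le Cam's inequality bounds the two-point testing error below by a constant times the \emph{square} of the Hellinger affinity, whereas for this symmetric product-versus-product test the truth is the affinity raised to the power $1+o(1)$: the crude bound therefore yields $\exp(-(2+o(1))N\sum_m I^{(m)}/(sK))$ once you count both candidate classes' worth of neighbours ($2N/(sK)$ of them, not $N/(sK)$), and this does not meet the upper bound. Getting the constant $1$ requires exactly the Cram\'er-type argument the paper carries out: identify the Bayes classifier at node $1$ as the weighted vote with weights $c^{(m)}=\log\frac{a^{(m)}(1-b^{(m)}/N)}{b^{(m)}(1-a^{(m)}/N)}$, tilt the law of $S_{N'}=\sum_i\sum_m c^{(m)}(X_i^{(m)}-Y_i^{(m)})$ at $t^{*}=1/2$ so that $-\log M_{Z_i}(t^{*})=\sum_m I^{(m)}$, and apply a CLT under the tilted measure over a window $\delta=o(\sum_m I^{(m)})$. (The paper's reduction to one node is also not Fano/Assouad but the Bayes-risk identity $\inf_{\hat z}B_{\tau}(\hat z)=\inf_{\hat z}B_{\tau}(\hat z_1)$ on a permutation-closed homogeneous subfamily under a uniform prior.) Your closing caveat about the moderate-deviation regime gestures at this, but that refinement is the proof, not a footnote to it; only the ``moreover'' claim survives the crude bound, since there a constant lower bound suffices.

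For the upper bound you propose a node-wise oracle plus a transfer to the global MLE, importing weak consistency from Theorems \ref{th:consMLSBM} and \ref{thm4}. That import is not available: those theorems require $L=\omega(MN(\log N)^{3+\delta})$ and the separation condition (\ref{eq:probcond}), neither of which follows from $N\sum_m I^{(m)}/(K\log K)\to\infty$ on $\Theta^{ML}$, where expected degrees per layer may be $O(1)$. Even granting weak consistency, the oracle-to-MLE peeling step --- which you yourself identify as the hardest part --- is left entirely unproved. The paper sidesteps all of this with a direct global analysis of the penalized statistic $T(z)=\sum_m\sum_{i<j}\{c^{(m)}A_{ij}^{(m)}1\{z_i=z_j\}-k^{(m)}1\{z_i=z_j\}\}$: a monotone coupling reduces the inhomogeneous $\Theta^{ML}$ to the extreme probabilities $a^{(m)}/N$ and $b^{(m)}/N$; a Chernoff bound at $t=1/2$ gives $P(T(\hat z)\geq T(\bar z))\leq\exp(-\tfrac{|\alpha|+|\gamma|}{2}\sum_m I^{(m)})$ for any assignment with mis-clustering rate $R/N$; and combinatorial lower bounds on $|\alpha|,|\gamma|$ together with the count $\min\{(eNK/R)^R,K^N\}$ of equivalence classes let one union-bound over all $R$ from $1$ to $N$ at once, with no localization argument. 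You would need either to reproduce that global union bound or to supply the two-stage transfer in full; as written the upper bound is a genuine gap.
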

The above theorem implies that for MLSBM, minimax risk of error decays exponentially and if $\frac{N\sum_{m}I^{(m)}}{K \log K} \rightarrow \infty $, the rate goes to 0 asymptotically, i.e., exact recovery of community labels is possible. Moreover from the proof of Theorem \ref{MLSBMrisk} in the Appendix, there exists a procedure which achieves this rate. On the other hand if $\frac{N\sum_{m} I^{(m)}}{K}=O(1)$, then the minimax risk of error is lower bounded by a constant (see the part on lower bound in the proof in Appendix) implying that consistent recovery is not possible in such situations.

Since the model ``agg-sparse" is itself a single layer SBM and $ \sum_{m} P_{ij}^{(m)} \geq \sum_{m} \frac{a^{(m)}}{N}$ if $ z_{i}=z_{j} $ and $ \sum_{m} P_{ij}^{(m)} \leq \sum_{m} \frac{b^{(m)}}{N}$ if $ z_{i} \neq z_{j}$, then defining $I^{agg}$ as
\begin{equation}
I^{agg}=-2 \log \left( \sqrt{\frac{\sum_{m} a^{(m)}}{N} \frac{\sum_{m}b^{(m)}}{N}} +\sqrt{1-\frac{\sum_{m}a^{(m)}}{N}} \sqrt{1-\frac{\sum_{m}b^{(m)}}{N}} \right),
\end{equation}
we have the following result using Theorem 1.1 of \citet{zhang15}.

\begin{thm}
If $\frac{NI^{agg}}{K \log K} \rightarrow \infty $, then
\begin{equation}
\inf_{\hat{z}} \sup_{\Theta^{agg}} E[r(\bar{z},\hat{z})] = \begin{cases}
\exp(-(1+\epsilon_N)\frac{NI^{agg}}{2}), & K=2, \\
\exp(-(1+\epsilon_N)\frac{NI^{agg}}{s K}), & K\geq 3,
\end{cases}
\end{equation}
for any $s \in [1,\sqrt{5/3}]$ and some sequence $\epsilon_N=o(1)$. In addition, if $\frac{N I^{agg}}{K}=O(1)$, then $\inf_{\hat{z}} \sup_{\Theta^{agg}} E[r(\bar{z},\hat{z})] \geq c$ for some constant $c$, i.e., at least a constant fraction of nodes are mis-clustered.
\label{riskagg}
\end{thm}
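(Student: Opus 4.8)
The plan is to exploit the fact, already noted around Equation~(\ref{agg-sparse}), that the \emph{agg-sparse} network is itself a single-layer stochastic blockmodel on $V$ with the same label vector $\bar z$; the theorem then follows by identifying the induced single-layer parameter space and invoking the minimax theorem of \citet{zhang15} (their Theorem~1.1) essentially verbatim. Concretely, from the computation preceding Equation~(\ref{agg-sparse}) the aggregated entries are independent $\mathrm{Bernoulli}(P_{ij}^{agg-sparse})$ with $P_{ij}^{agg-sparse}=1-\prod_{m}(1-P_{ij}^{(m)})$. Writing $a^{agg}:=\sum_m a^{(m)}$ and $b^{agg}:=\sum_m b^{(m)}$ (so $a^{agg}>b^{agg}$, since $a^{(m)}>b^{(m)}$ layerwise), for $(\bar z,\{P_{ij}^{(m)}\})\in\Theta^{ML}(N,K,M,\mathbf{a},\mathbf{b},\beta)$ a union bound gives $P_{ij}^{agg-sparse}\le\sum_m P_{ij}^{(m)}\le b^{agg}/N$ whenever $z_i\ne z_j$, while $1-e^{-x}\ge x(1-x/2)$ combined with $\sum_m P_{ij}^{(m)}\ge a^{agg}/N$ for $z_i=z_j$ gives $P_{ij}^{agg-sparse}\ge(a^{agg}/N)(1-o(1))$ in the sparse regime. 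Hence the image of $\Theta^{ML}$ under aggregation is contained in --- and, up to a $(1+o(1))$ perturbation of the boundary values, equals --- the standard single-layer SBM parameter space $\Theta^{agg}:=\Theta(N,K,a^{agg},b^{agg},s)$ of \citet{zhang15}, the class-size constraint $N_q\in[N/(sK),sN/K]$ being inherited unchanged.

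Next I would match the divergence driving the rate: the quantity $I^{agg}$ defined just before the theorem is exactly the R\'enyi divergence of order $1/2$ between $\mathrm{Bern}(a^{agg}/N)$ and $\mathrm{Bern}(b^{agg}/N)$, i.e.\ the aggregate-parameter analogue of the per-layer $I^{(m)}$ in Theorem~\ref{MLSBMrisk}. The hypothesis $NI^{agg}/(K\log K)\to\infty$ is then precisely the separation condition under which \citet{zhang15} obtain an exponentially small minimax risk attained by a concrete procedure, and $NI^{agg}/K=O(1)$ is their regime of a constant lower bound. Applying their Theorem~1.1 to $\Theta^{agg}$ yields at once $\inf_{\hat z}\sup_{\Theta^{agg}}E[r(\bar z,\hat z)]=\exp(-(1+\epsilon_N)NI^{agg}/2)$ for $K=2$ and $=\exp(-(1+\epsilon_N)NI^{agg}/(sK))$ for $K\ge 3$ with $s\in[1,\sqrt{5/3}]$, and the constant lower bound when $NI^{agg}/K=O(1)$; for the lower-bound half one notes that, in the sparse regime, the aggregation map restricted to boundary configurations (take $P_{ij}^{(m)}=a^{(m)}/N$ on intra- and $b^{(m)}/N$ on inter-community pairs) is a near-bijection, so $\Theta^{agg}$ contains the hard sub-family underlying the Fano/Le Cam argument of \citet{zhang15}.

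The only genuinely non-routine point --- and the main obstacle --- is controlling the aggregation approximation $1-\prod_m(1-P_{ij}^{(m)})=(\sum_m P_{ij}^{(m)})(1+o(1))$ uniformly in $i,j$: this requires the aggregate to remain sparse, $\sum_m P_{ij}^{(m)}=o(1)$, which holds in the regimes of interest (e.g.\ bounded per-layer degrees with $M=o(N)$), and one must then verify that the resulting $o(1)$ multiplicative error in $NI^{agg}$ --- and in the boundary parameters $a^{agg}/N$, $b^{agg}/N$ --- is genuinely lower order, hence can be folded into the sequence $\epsilon_N$ without altering the leading constants $\tfrac12$ and $\tfrac{1}{sK}$. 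Given that, the remainder of the argument is a direct transcription of \citet{zhang15}, exactly as in the proof of Theorem~\ref{MLSBMrisk}.
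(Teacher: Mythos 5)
Your proposal is correct and follows essentially the same route as the paper: the paper likewise treats the agg-sparse network as a single-layer SBM with parameters $\sum_m a^{(m)}$ and $\sum_m b^{(m)}$ and cites Theorem~1.1 of \citet{zhang15} directly, giving no separate appendix proof. Your additional care in controlling the $1-\prod_m(1-P_{ij}^{(m)})=(1+o(1))\sum_m P_{ij}^{(m)}$ approximation uniformly is a point the paper glosses over with an $\asymp$, but it does not change the argument.
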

The previous two theorems state results about the fundamental properties of the two models which allow us to compare the models without going into the specifics of the method used to compute the class assignments in practice.

Since the Renyi divergence $I^{(m)} \geq 0$ for all $m$, we have $\sum_{m}I^{(m)} \geq I^{(m)}$ for all $m$. Hence the minimax rate for MLSBM is lower than all individual single layer SBMs. Moreover, since Renyi divergence is convex, we have $\frac{1}{M}\sum_{m} I^{(m)} \geq \frac{1}{M} I^{agg}$ asymptotically. This can be shown using Jensen's inequality with the concave functions $\log(x)$ and $\sqrt{x}=\sqrt{\frac{b^{(m)}}{a^{(m)}}}$ (see Theorem 11 of \citet{van14} for a proof), and then noting that asymptotically $I^{(m)} \asymp \frac{(a^{(m)}-b^{(m)})^2}{a^{(m)}N}$ \citep{zhang15}. Hence the minimax rate of MLSBM is at most that of the aggregate graph. Note that equality in the above inequality is achieved if and only if all the $I^{(m)}$s are equal and $\frac{b^{(m)}}{a^{(m)}}$ is equal for all $m$. We recognize the quantities $\frac{b^{(m)}}{a^{(m)}}$ and $I^{(m)}$ as signal to noise ratios  in the $m$th layer. Hence the MLSBM has lower minimax rate compared to the aggregate SBM as long as the signal quality in different layers varies.

This result will be intuitively apparent if we note from the proof of the above theorems that, given the parameters are known or accurately estimated, the penalized maximum likelihood (ML) decision rule, which attains the minimax rate of error in MLSBM, weights the edges from different layers by $c^{(m)}$ before adding. The penalty terms also get weighted by $k^{(m)}$ before being added. The quantity $c^{(m)}=\log \frac{a^{(m)}(1-b^{(m)}/N)}{b^{(m)}(1-a^{(m)}/N)}$  can be thought of as a measure of the signal to noise ratio. Hence, layers with high signal to noise ratio, i.e., high quality information for the purpose of community detection, get more weight. In contrast, the penalized ML decision rule in aggregate graph SBM by construction adds layers without weighting. Hence intuitively the result on minimax rates makes sense, since if all layers contain the same amount of information, then it is immaterial if the decision rule weights the graphs by information content or not, but in all other cases giving more weight to the more informative layer pays off.

Moreover, while it is clear that MLSBM has lower minimax rate than individual layer SBMs, it is not true trivially for the aggregate graph. Since $I^{(m)}$ can be written in terms of signal to noise ratio as $I^{(m)} \asymp \frac{(a^{(m)}-b^{(m)})^2}{a^{(m)}N}$, consequently for $I^{agg}$ to be large, the sum of the probabilities $\sum_{m}a^{(m)}$ and $\sum_{m} b^{(m)}$ must be well separated. This is not always guaranteed as large $a^{(m)}$'s and $b^{(m)}$'s with relatively low difference can overshadow a large difference in smaller $a^{(m)}$'s and $b^{(m)}$'s while adding. We will take this point up again in the next section where we discuss sharp thresholds for consistency.

We note that the model RMLSBM is a MLSBM with a restricted parameter space $\Pi^R$. Hence Theorem \ref{MLSBMrisk} will give the minimax rate under the restricted parameter space with the divergence in the $m$th layer being $I^{(m)} \asymp \frac{(\phi_{a}^{(m)}-\phi_{b}^{(m)})^2}{\phi_{a}^{(m)}N}$, where $\phi$ is the transformation of the parameters in RMLSBM as defined before. In particular, we have $\text{logit}(\phi_{a}^{(m)})=a+\beta_{m}$. The rate for the aggregate SBM under RMLSBM can similarly be obtained using Theorem \ref{riskagg} with $I^{agg}$ being $I^{agg} \asymp \frac{(\sum_{m}\phi_{a}^{(m)}-\sum_{m}\phi_{b}^{(m)})^2}{\sum_{m}\phi_{a}^{(m)}N}$. This implies that (a) if RMLSBM is the true data generating model then it has lower minimax rate compared to each of the individual layers, and (b) by the earlier discussion it also has lower minimax rate compared to the aggregate SBM constructed from a RMLSBM graph, since neither $I^{(m)}$ nor the ratio $\frac{\phi_{a}^{(m)}}{\phi_{b}^{(m)}}=1+\frac{\exp(a-b)-1}{1+\exp(a+\beta_m)}$ is equal for all $m$.

\subsection{Sharp consistency thresholds}
We derive sharp thresholds for strong and weak consistency for community detection \citep{mossel14,abbe15} in MLSBM and the aggregate SBM under two scenarios: sparse graph with average degree per layer $o(\log n)$ and ultra-sparse graph with average degree per layer $o(1)$.

In the first case, let $a^{(m)}=\alpha_{1}^{(m)} \log N$ and $b^{(m)}=\alpha_{2}^{(m)} \log N$ with $\alpha_{1}^{(m)} \geq \alpha_{2}^{(m)}>0$ for all $m$. Then Corollary 4.1 of \citet{zhang15} gives that assuming $K=N^{o(1)}$, the sharp threshold for the existence of a strongly consistent estimator for the $m$th layer SBM is $\frac{\sqrt{\alpha_{1}^{(m)}}-\sqrt{\alpha_{2}^{(m)}}}{\sqrt{K}} >1$. Hence for the aggregate SBM this threshold is $\frac{\sqrt{\sum_{m} \alpha_{1}^{(m)}}-\sqrt{\sum_{m} \alpha_{2}^{(m)}}}{\sqrt{K}} >1$. Clearly, if the threshold is met in each of the layers, then it will be met in the aggregate SBM as well. However in a more realistic case where this threshold is not met in all the layers, whether the aggregate SBM will have a strongly consistent estimator or not will depend on whether the sum of probabilities meets the threshold of well separation or not, which in turn will depend on the relatively denser layers.
To see this, note that this threshold can be written
as $\frac{\sum_{m} \alpha_{1}^{(m)}-\sum_{m} \alpha_{2}^{(m)}}{\sqrt{\sum_{m} \alpha_{1}^{(m)}}+\sqrt{\sum_{m} \alpha_{2}^{(m)}}} >{\sqrt{K}}$.
For aggregate graph, the denominator of this quantity is dominated by the dense layers, and hence the difference in $a$ and $b$ must be large in dense layers for the aggregate to be consistent. In other words, strong signals in sparse layers will get ignored if the signal in dense layers are not strong.

On the other hand, for MLSBM, strong consistency is achieved if any of $\frac{NI^{(m)}}{K} \rightarrow \infty$ or their sum goes to infinity. This implies that the threshold is $\sum_{m} \frac{\sqrt{\alpha_{1}^{(m)}}-\sqrt{\alpha_{2}^{(m)}}}{\sqrt{K}} >1$, which is achieved if at least one of the layers achieves consistency threshold or the layers together achieve the threshold. By the argument before, this threshold consists of sum of normalized signal to noise ratios, hence all layers, dense or sparse, get equal weightage in determining the threshold.
The consistency threshold for RMLSBM using Theorem \ref{MLSBMrisk} is
$\sum_{m} \frac{\sqrt{\alpha_{1,\phi}^{(m)}}-\sqrt{\alpha_{2,\phi}^{(m)}}}{\sqrt{K}} >1$,
where $\phi_{a}^{(m)}=\alpha_{1,\phi}^{(m)} \log N$ and $\phi_{b}^{(m)}=\alpha_{2,\phi}^{(m)} \log N$ with $\alpha_{1,\phi}^{(m)} \geq \alpha_{2,\phi}^{(m)}>0$ for all $m$. The corresponding threshold for aggregate SBM generated from a RMLSBM is $\frac{\sum \alpha_{1,\phi}^{(m)}- \sum \alpha_{2,\phi}^{(m)}}{\sqrt{ \sum \alpha_{1,\phi}^{(m)}}} >{\sqrt{K}}$. Here we note that the threshold for RMLSBM is also the sum of normalized signal to noise ratios. However since the parameter space is restricted, the difference between inter and intra community parameters are uniform across layers, and variations in the above sum only come from the normalizing factor due to the layer specific sparsity parameter.

Qualitatively, the minimax rate and consequently the threshold in MLSBM take into account variations in both signal quality and sparsity while adding contributions from different layers. RMLSBM tries to estimate the signal to noise ratio in each layer by two parameters, one global parameter which signifies the aggregate signal quality, and the other layer specific parameter which signifies sparsity. Hence although RMLSBM ignores the variation in signal quality, it attempts to reduce the undue influence of dense layers by taking into account the variation in sparsity. The aggregate SBM, on the other hand, does not take into account either the signal quality or the sparsity, and hence is heavily influence by dense layers irrespective of signal quality. Hence both RMLSBM and aggregate SBM would perform well if all the layers have similar signal strength and similar density. If the layers do not have similar density but the signal strength across layers can somewhat be well approximated by an average signal strength, RMLSBM will still be able to detect it through the noise and perform well. Clearly, RMLSBM and aggregate graph will not perform well if both signal strength and sparsity of layers vary widely, and we need to resort to MLSBM in such cases.

In the bounded degree case, while consistent recovery is not possible  in each of the layers since the graph is not fully connected (only detection is possible), a consistent recovery is still possible in the multi-layer models.
The condition for consistent recovery in MLSBM with $a^{(m)}=o(1)$ and $b^{(m)}=o(1)$ is
$\sum_{m}\frac{a^{(m)}-b ^{(m)}}{(\sqrt{a^{(m)}}+\sqrt{b^{(m)})K}} \rightarrow \infty$.
Note that the condition for detection or weak recovery defined as finding a partition correlated with the true community structure for two communities is $\frac{a-b}{\sqrt{a+b}} >2$ \citep{mossel12,mossel13}.

\section{Estimation using mixture model approach}

Simultaneous maximum likelihood estimation of parameters and class
assignments in the stochastic blockmodel is a difficult problem
\citep{ns01, cwa12, rqf12}. The same difficulties remain in
the MLSBM
and its restricted version.
Consequently, to obtain an estimation algorithm here, we view the MLSBM
as a mixture model with discrete
latent variables $Z$.
In this case, $Z_{i}$ is a missing
random variable that follows a multinomial distribution with $K$
parameters: $Z_{i}\sim Mult(1,\alpha=(\alpha_{1},\alpha_{2},\ldots,\alpha_{K}))$.
We follow the framework laid
out by \citet{dpr08} to simultaneously estimate the conditional blockmodel
parameters and the class assignments with variational EM technique.
The derivations for MLSBM are straightforward extensions of the corresponding
formula in \citet{dpr08} and are omitted in this paper while the update rules for RMLSBM have been derived in the Appendix
The update steps for MLSBM and RMLSBM are also provided in the Appendix under Algorithm 1 and Algorithm 2 respectively.

\begin{figure}[hp]
\centering{}
\begin{subfigure}{0.35\textwidth}
\centering{}
\includegraphics[width=.95\linewidth]{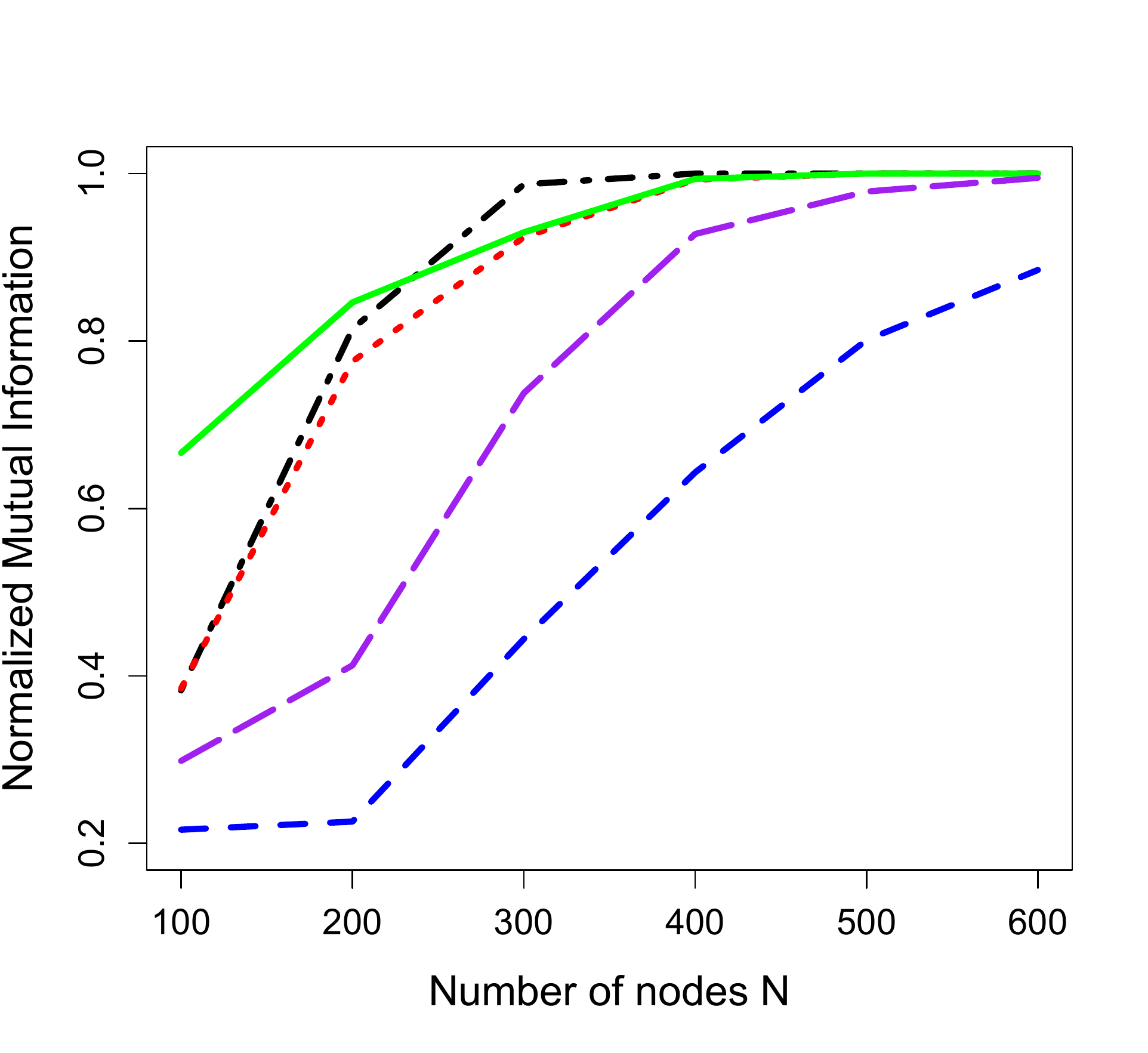}
\end{subfigure}%
\begin{subfigure}{0.35\textwidth}
\centering{}
\includegraphics[width=.95\linewidth]{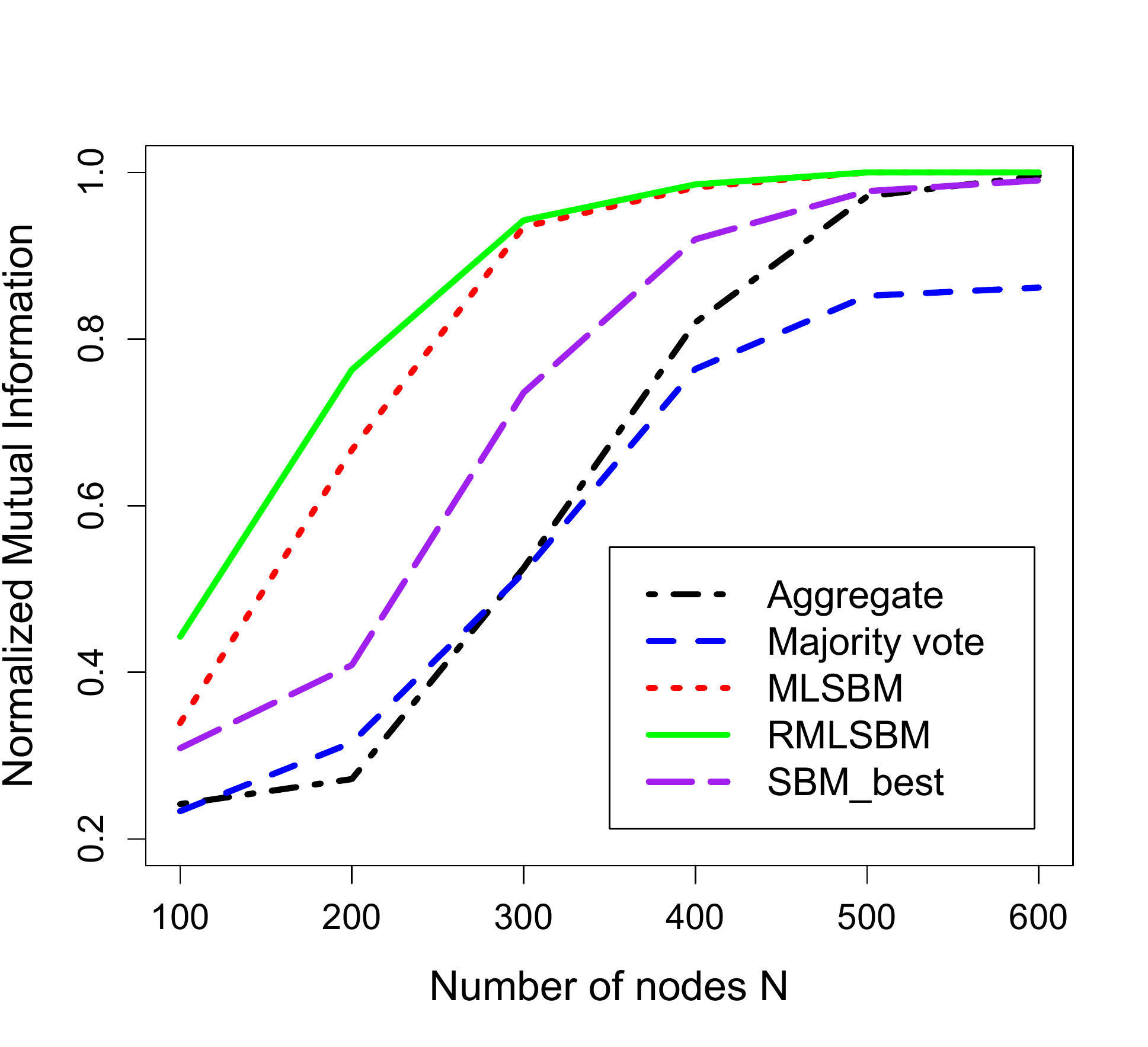}
\end{subfigure}
\begin{center} (a) \hspace{160pt} (b) \end{center}
\begin{subfigure}{0.35\textwidth}
\centering{}
\includegraphics[width=.95\linewidth]{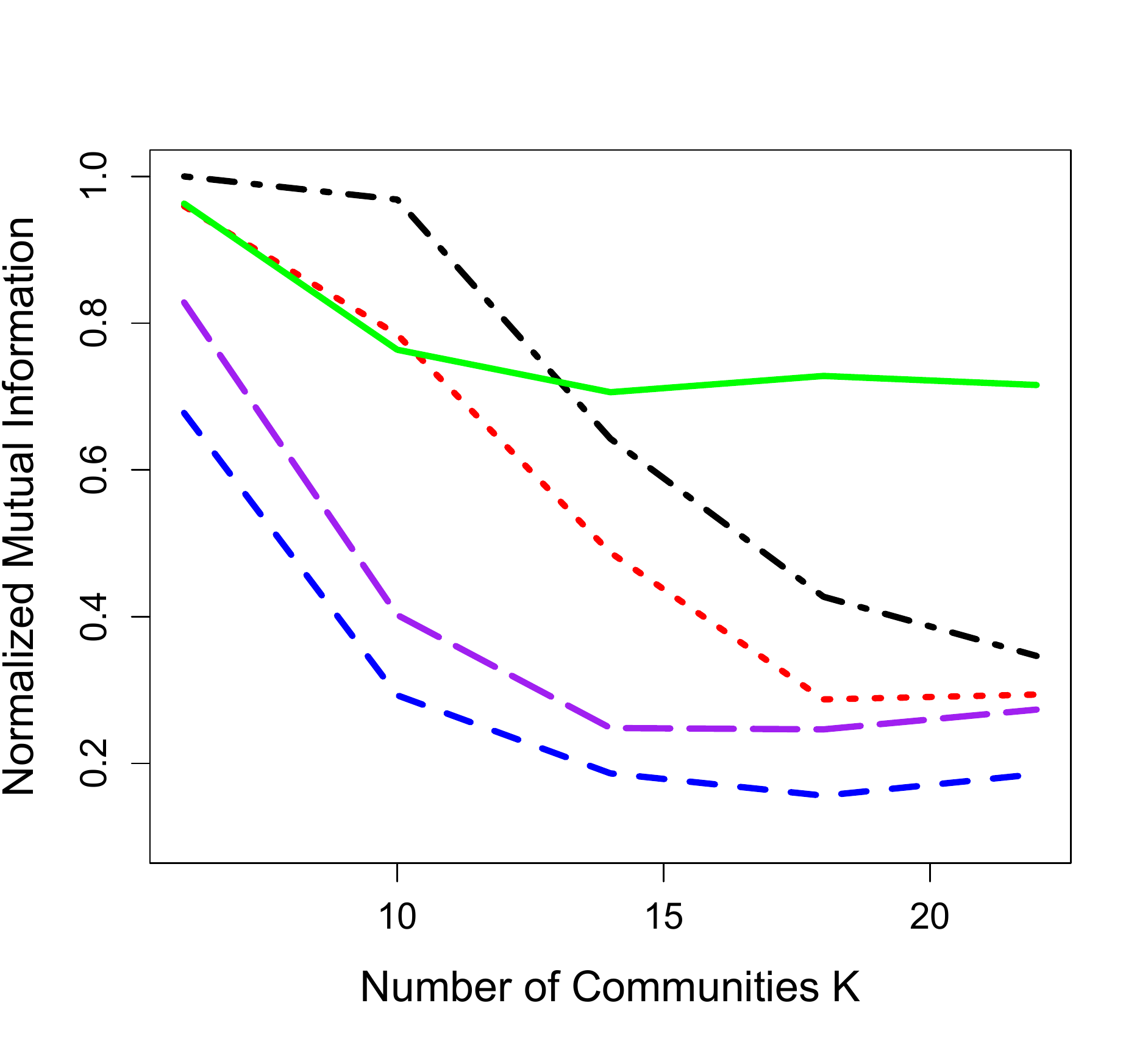} \label{fig:inckfixn}
\end{subfigure}%
\begin{subfigure}{0.35\textwidth}
\centering{}
\includegraphics[width=.95\linewidth]{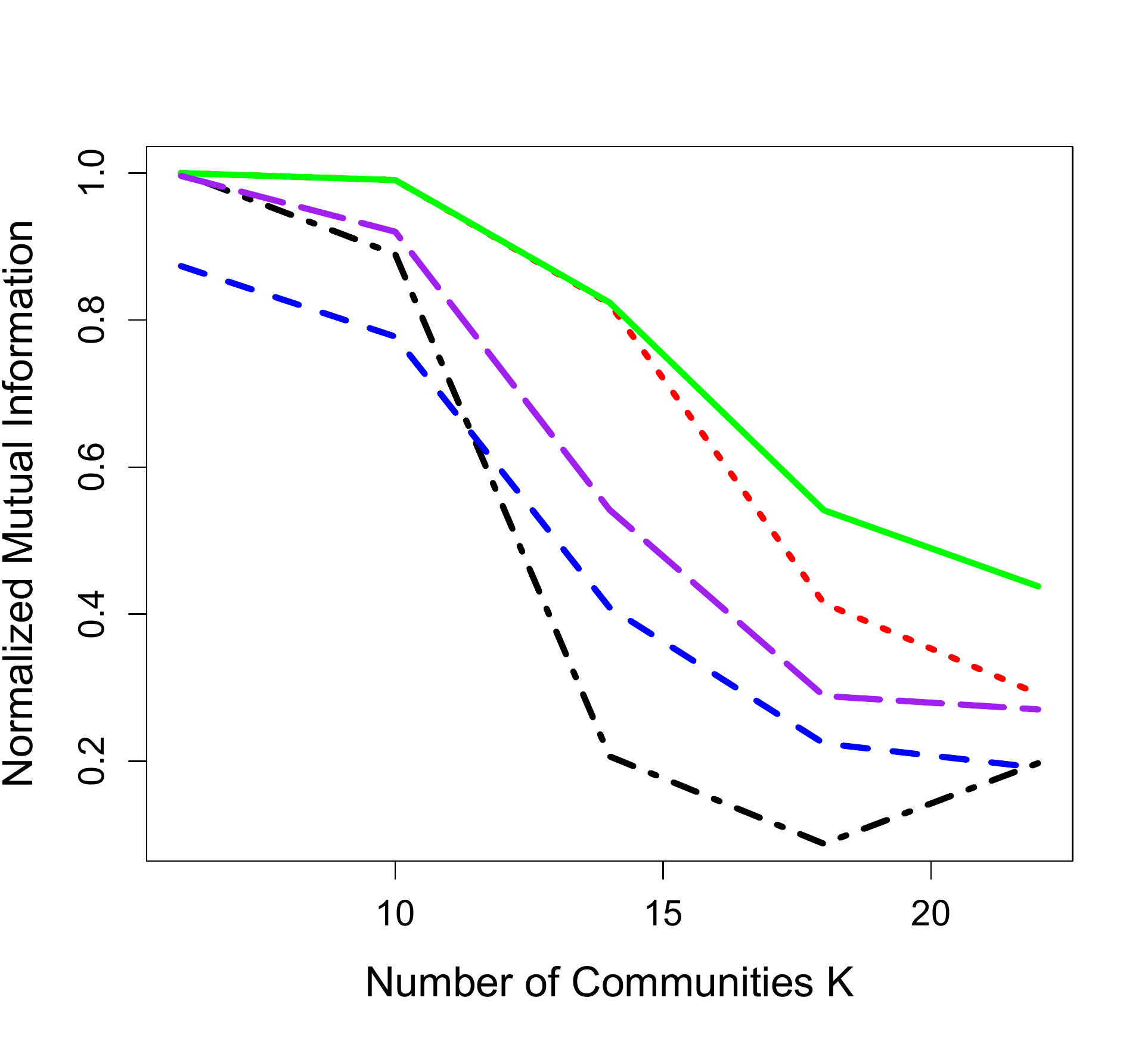} \label{fig:inckfixn}
\end{subfigure}
\begin{center} (c) \hspace{160pt} (d) \end{center}
\begin{subfigure}{0.35\textwidth}
\centering{}
\includegraphics[width=.95\linewidth]{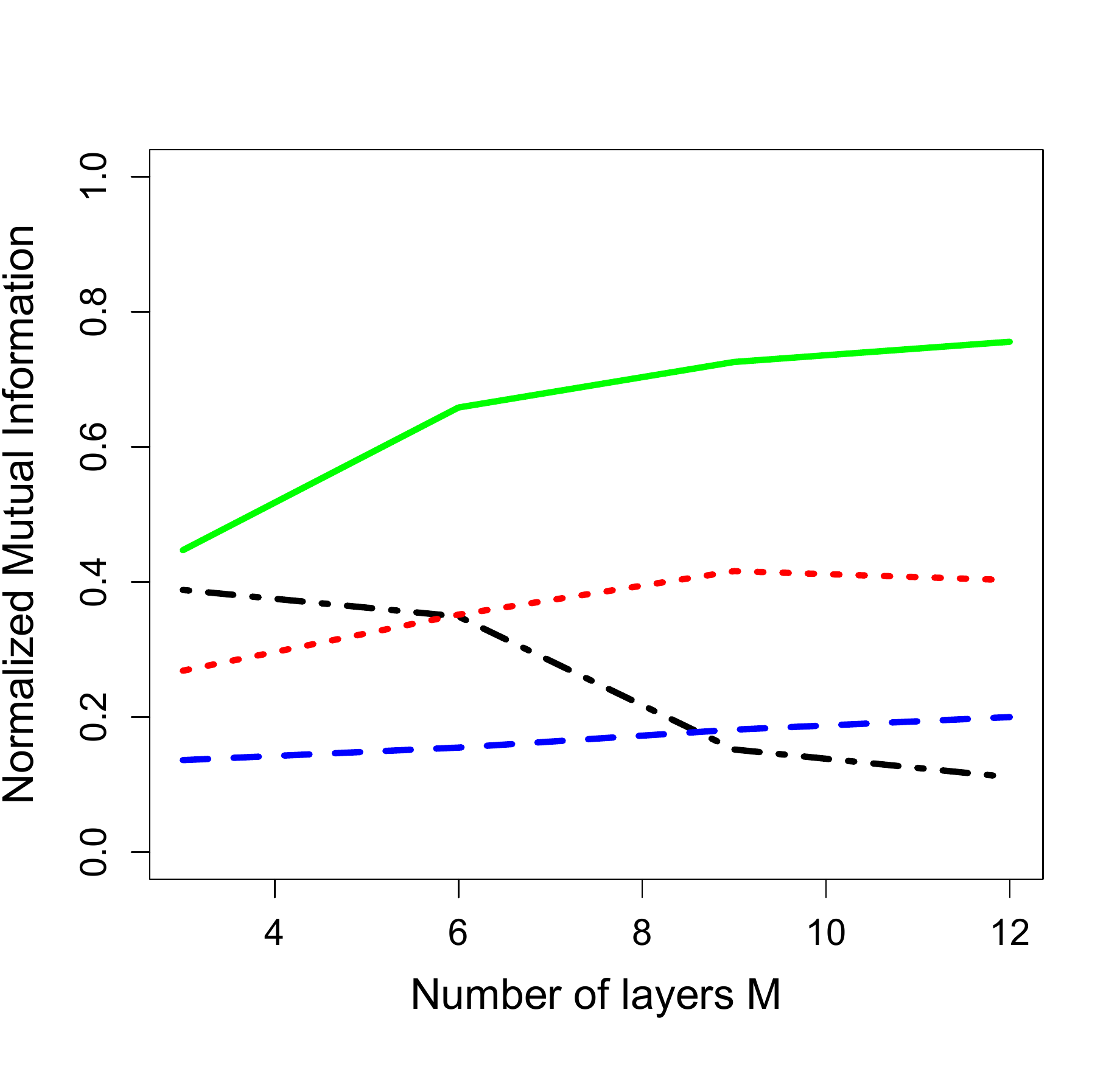}
\end{subfigure}%
\begin{subfigure}{0.35\textwidth}
\centering{}
\includegraphics[width=.95\linewidth]{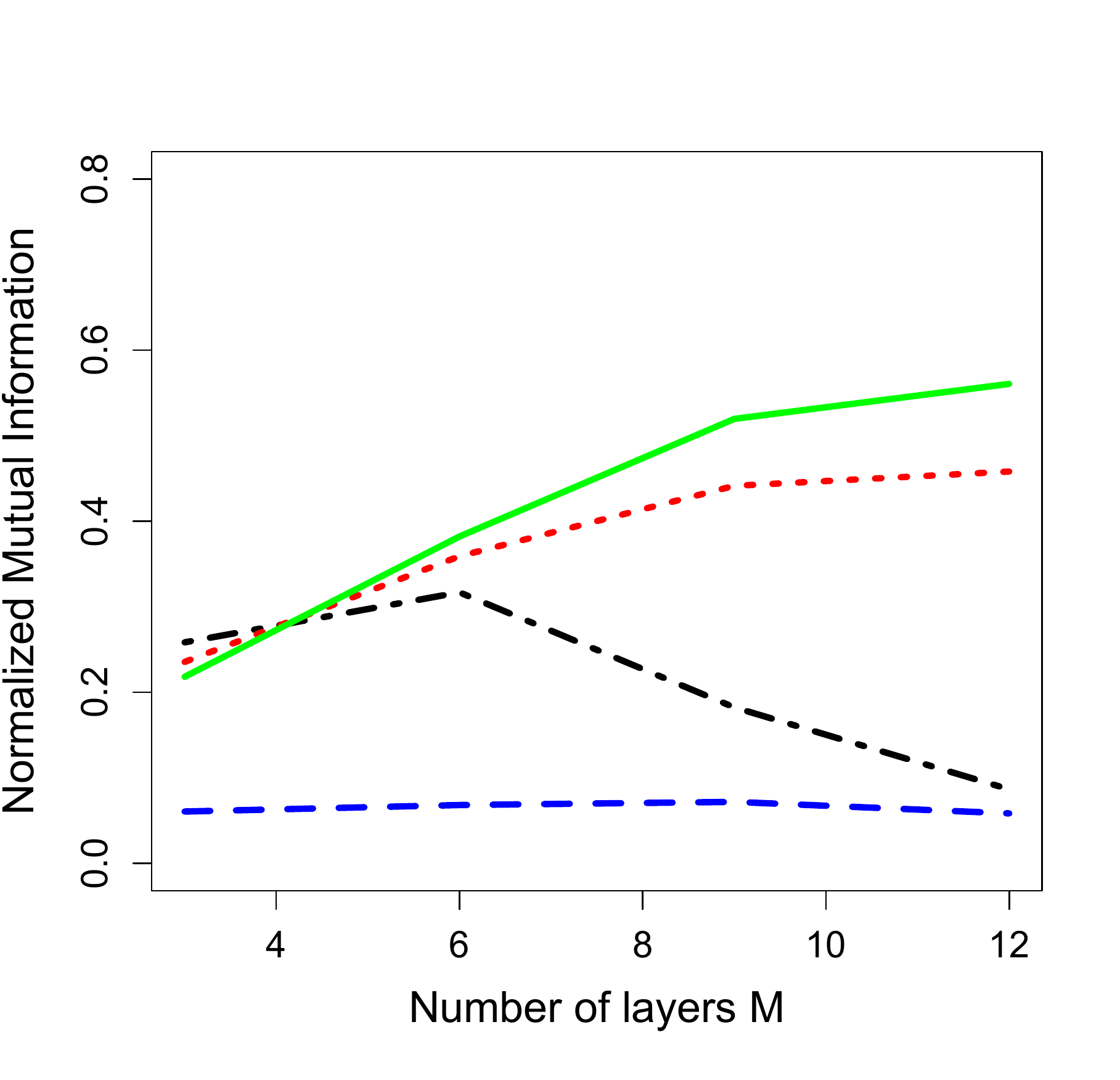}
\end{subfigure}
\begin{center} (e) \hspace{160pt} (f) \end{center}
\caption{\small Comparison of the performance of various methods for three simulation settings under two scenarios: all layers are sparse and have strong SNR (left column: (a)(c)(e)), and the layers are mixed in terms of sparsity and SNR (right column: (b)(d)(f)).
(a, b) fixed $K=10$ and $M=5$ while $N$ increases
from 100 to 600; (c, d) fixed $N=400$ and $M=5$ while $K$ increases from 6 to 22;
(e, f) fixed $N=300$ and $K=15$ while $M$ increases from 3 to 12. The legend in Figure (b) is common to all figures. SBM\_best indicates the result from the best performing MLE in the single layer SBMs.}
\label{simulation_figure}
\end{figure}

\section{Simulation results}
In this section we numerically test the asymptotic results and compare the performance of the methods through a simulation study.
We generate data from the more general model, MLSBM. We then compare the relative performance of the two multi-layer methods (MLE and RMLE) between themselves as well as with single layer methods and baseline methods such as majority voting and MLE in aggregate SBM. The comparison is done under
various settings on the number of nodes $N$, the number of communities $K$,
the number of types of relations $M$, and the expected total number of edges $L$.

Since the true class labels of the nodes are known in simulated data,
we compare the class assignments from different methods with the true
labels. We use correct clustering rate (CCR) and normalized mutual information (NMI) as measures of
similarity between partitions. The CCR counts the fraction of nodes whose cluster assignment matches the true class label (as determined by the true class label of the majority of nodes in that cluster).
The higher the CCR, the better the performance of the clustering method.
The NMI is an information theoretic measure of the mutual dependence or similarity of two random variables. The NMI takes values in the range of $0$ to
$1$, with $0$ indicating random cluster assignment with respect to the true class labels,
and $1$ indicating perfect match between the
true and assigned clusters.
If NMI is 0, it means even though the
cluster assignment was not completely random and done according to
some algorithm, the solution presents no information regarding the
true class labels. Since the results in terms of CCR are very similar to that of NMI, we omit those results here to save space.

In all the simulation studies we repeat the experiments $50$ times
and take the average of our measures across them. We first generate the node labels independently from a multinomial distribution with probabilities
$P(Z_{i}=k)=\alpha_{k}$. Then we generate the data using the node labels and $M$ different connectivity matrices, all of which give
larger probability to connections within groups in comparison
to the connections between groups. However, we vary the ``signal to noise ratio" (SNR) from layer to layer by varying the ratio of the diagonal and off diagonal elements of the parameter matrix.

We consider two scenarios: (i) all layers are sparse and have strong SNR, (ii) the layers are mixed in terms of sparsity and signal strength in the following way: two layers are sparse and have strong signal, two layers are dense and have weak signal, and one layer is dense with strong signal. While the first scenario is a rather idealistic scenario where all layers are ``similar" in the sense that they are sparse and strongly informative about the underlying community structure, the second scenario (also considered in \citet{pai13}) is more realistic in applications. For the first scenario, the SNR is kept at 3-4 and sparsity is varied slightly from layer to layer in such a way that variational EM algorithm for community detection on each of the layer individually gives very similar performance. The connectivity matrix parameters are then sampled from a uniform distribution within a small range so as to maintain SNR requirement while having different values for each of the entries of the matrix. For the second scenario, the informative strong signal layers have a SNR of $3$ while the non-informative weak signal layers have a SNR only marginally greater than 1. We again sample the actual values of the parameters from a uniform distribution within a small range.

The initial guess for the variational algorithm in both MLE and RMLE is obtained by a two step procedure. On a randomly selected layer we first run spectral clustering to generate an initial guess and then we use this to run a variational EM algorithm on that layer. We use the class assignment and fitted SBM parameters from that layer as our initial guess for the MLSBM parameters. In our simulation results described below, the final solution of class assignments for both the MLE and the RMLE mostly turns out to be an improved estimate of the true class assignments irrespective of which layer we choose to initialize the method.

\subsection{Fixed $K$ and $M$ while $N$ increases}
In this simulation, we take $M=5$ types of
edges or network layers, each with a separate connectivity matrix
inducing a different network according to the schemes described above. We keep the number of communities $K$ fixed at $10$ and vary the number of nodes $N$ from $100$ to $600$. The aim of this study is to compare the two multi-layer methods with the single layer methods and baseline methods in terms of the number of nodes required to achieve a consistent estimation of community assignment with moderately low number of communities. Figures \ref{simulation_figure}(a) and (b) display the results from this study for the two scenarios respectively. Clearly the MLE in MLSBM and RMLSBM reach NMI of close to 1 faster than the single layer ones as well as majority voting as the number of nodes increases. The algorithm in aggregate layer performs similarly to that in MLSBM and RMLSBM for the first (all strong signal) scenario (Figure \ref{simulation_figure}(a)), however it performs poorly for the second (mixed signals) scenario (Figure \ref{simulation_figure}(b)). This shows that aggregating edges across layers works fine if the information quality is similar across layers, but it is not robust if the information content changes across layers. The accuracy of majority voting behaves similarly to the single layer ones. Moreover, for a small number of nodes, the MLE in RMLSBM performs better than all the other methods considered in both scenarios.

\subsection{Fixed $N$ and $M$ while $K$ increases}

In this simulation, we test the performance of the multi-layer methods against the single layer and baseline methods with increasing number of communities. We fix the number of nodes $N$ and the number of layers $M$ at 400 and 5 respectively, while we let $K$ increase from 6 to 22 in steps of 4. The results from this simulation study are displayed in Figures \ref{simulation_figure}(c) and (d). Whereas the accuracy of community detection in all the single layer methods and the majority voting decreases rapidly with increasing number of communities, the multi-layer methods explored here, especially
the RMLSBM, perform well even with a large number of communities. Between RMLSBM and MLSBM, RMLSBM clearly outperforms MLSBM as the number of communities grows. This simulation also serves as a test of robustness of RMLSBM for small number of communities. We notice that in both scenarios, RMLSBM behaves similarly to MLSBM and does not break down for small number of communities. In the all-strong scenario, the MLE in aggregate SBM outperforms both MLSBM and RMLSBM  for small communities, but similar to MLSBM, its accuracy also quickly drops as $K$ increases (Figure \ref{simulation_figure}(c)). In the mixed signal scenario, the MLE in aggregate SBM performs much worse compared not only to MLSBM and RMLSBM, but also to majority voting and the best performing MLE among the individual layers.
To put things into perspective,
for the all-strong scenario,                                                                                                                                                                                                                                                                                                                                                                while the NMI for MLSBM, aggregate SBM, majority voting and the single layer SBMs reduce below 0.5, it settles to a value close to 0.8 for RMLSBM as the number of communities increases to 20.

\subsection{Fixed $N$ and $K$ while $M$ increases}
In this simulation, we keep the number of nodes $N$ and the number of communities $K$ fixed at 300 and 15 respectively, while we increase the number of layers $M$ gradually from 3 to 12.  For this simulation, each layer of the multi-layer network was generated from a $K$-class SBM with a simple connectivity matrix given by $P_{K \times K} =\lambda I_{K} + \epsilon 1_{K\times K} -\epsilon I_K$. In the first scenario, the parameters are $\epsilon=0.10 + U(-0.02,\ 0.02)$ and $\lambda =3\epsilon$, while in the second scenario, the parameters are  $\epsilon=0.09 + U(-0.03,\ 0.03)$ and $\lambda =U(1.5,\ 3) \epsilon$. Here $U(a,b)$ is a random number generated from the uniform distribution between $a$ and $b$. Note that in the first scenario, all layers are sparse and have strong signals, while in the second scenario, we let both sparsity and signal strength vary across the layers. This second scenario would be a good test of the robustness of different multi-layer methods.

We compare the performance of MLE in MLSBM and RMLSBM with majority voting and aggregate SBM in terms of the accuracy of community detection in Figures \ref{simulation_figure}(e) and (f). The curves for majority votes in both figures remain almost flat with increasing number of layers, indicating that the accuracy of community detection does not improve with more layers. The MLE of aggregate SBM performs well initially, but its accuracy quickly falls with increasing number of layers as the model assumption that $\sum_{m}A_{ij}^{(m)}>1$ with vanishing probability breaks down. For MLSBM, the accuracy increases initially, however the improvement quickly slows down and both the curves in Figures \ref{simulation_figure}(e) and (f) flatten with increasing layers. This is because the number of parameters to be estimated also keeps on increasing fast with increasing number of layers, which contributes to less efficiency. For RMLE, the accuracy of community detection generally increases with increasing number of layers and is almost always higher than all other methods.

The three studies clearly point out the advantages of the multi-layer methods over the single layer ones and the baseline ones, as well as the relative advantage of RMLSBM over MLSBM within the scope of the simulations.

\section{Twitter UK politics dataset}

In this section we test our methods on a real dataset on interactions
between British Members of Parliament (MPs) in the social networking
site Twitter curated by \citet{gp13}. Although the original dataset consists of 419 nodes, we only considered the largest subset that is connected across all layers for our analysis. Hence our multi-layer network consists of 381 nodes. The  different
layers of network we have correspond to three direct relations:
``mentions'', ``follows'' and ``retweets'', and three derived relations:
``mentioned by the same person (co-mentions)'', ``followed by the
same person (co-follows)'', and ``retweeted by the same person (co-retweets)''.
All relations are assumed to be binary by assigning one if the relation
is true for at least one case (e.g., if at least one person follows
both MP $i$ and MP $j$, then the relation ``co-follows'' between
the two MPs is true). All the relations individually can be represented
as graphs. For the graphs with direct relations, ``mentions'', ``follows'' and
``retweets'', a directed edge from node $i$ to node $j$ implies
that MP $i$ mentioned, followed or retweeted respectively MP $j$
at least once in his/her tweets. We converted all directed edges into undirected edges for this analysis. Average degrees of nodes in different
network layers are presented in Table \ref{tab:degree}. Note that among
the direct layers, ``follows'' is relatively dense compared to ``mentions''
and ``retweets'', while the derived networks are overall much
denser compared to the direct ones.

\begin{table}[h]
\protect\caption{Average degrees of nodes in different network layers for Twitter UK politics data}
\centering{}%
\begin{tabular}{cccccc}
\hline
Mentions & Follows & Retweets & Co-Mentions & Co-Follows & Co-Retweets\tabularnewline
\hline
58.48 & 98.34 & 31.88 & 361.51 & 297.21 & 147.56\tabularnewline
\hline
\end{tabular} \label{tab:degree}
\end{table}

\begin{table}[h]
\protect\caption{The NMI and CCR for Twitter UK politics data}
\begin{centering}
\begin{tabular}{c|cccccc}
\hline
Measure & Mentions & Follows & Retweets & Co-Mentions & Co-Follows & Co-Retweets \tabularnewline
\hline
NMI & 0.4522 & 0.5992 & 0.4610 & 0.3449 & 0.2520 & 0.4009\tabularnewline
CCR & 0.8182 & 0.9022 & 0.7926 & 0.7565 & 0.7053 & 0.8136\tabularnewline
\hline
\end{tabular}
\begin{center} (a) Individual network layers \end{center}
\end{centering}

\begin{centering}
{\footnotesize
\begin{tabular}{c|cccc|cccc}
\hline
& \multicolumn{4}{c}{NMI} & \multicolumn{4}{|c}{CCR}\tabularnewline
\hline
 &Majority & Aggregate SBM & MLSBM & RMLSBM & Majority & Aggregate SBM & MLSBM & RMLSBM\tabularnewline
Direct & 0.5213 & 0.5819 & 0.6764 & 0.6821& 0.8477 & 0.8871 & 0.9527 & 0.9553 \tabularnewline
All & 0.3825 & 0.3326 & 0.5428 & 0.6250 & 0.7217 & 0.7506 & 0.8393 & 0.9107\tabularnewline
\hline
\end{tabular}
}
\end{centering}
\begin{center} (b) Combined network layers \end{center} \label{tab:twitter}
\end{table}

The goal here is to cluster the MPs into communities based on the
information about their twitter activities. The ground truth communities
are known to be consisting of five communities corresponding to the political
affiliations of the MPs: 152 Conservative, 178 Labour, 39 Liberal Democrat, 5 SNP and 7 Other MPs.
The clustering quality is assessed through
NMI and CCR as before.

Part (a) of Table \ref{tab:twitter} reports the performance of the algorithm for the
six individual layers considered. Note that the performance of the
derived networks is  worse compared to the direct ones
despite being denser. Clearly the signal in favor of the ground
truth is stronger in the ``direct networks'' compared to the ``derived
networks''. The performance
of majority vote, MLEs in aggregate SBM, MLSBM and RMLSBM
on multi-layer
networks constructed from the three direct layers
and all layers together are given in part (b) of Table \ref{tab:twitter}. In
both cases the multi-layer methods outperform the baseline methods,
and between the two multi-layer methods,  RMLE outperforms MLE. From
the results for direct networks, we note that
the performance of multi-layer methods is not affected by inclusion of
relatively sparse layers (``mentions", ``retweets") and multi-layer methods perform better than the densest layer (``follows"),
as long as all the signal strength is high. However the performance
deteriorates as the signal quality becomes bad with the inclusion
of poor performing derived networks.
RMLSBM is more
robust towards such layers with poor signal compared to MLSBM. The MLE in aggregate SBM performs poorly in the full network due to the number of layers in that network being too large.

\section{Discussions}

In this paper we extended the stochastic block model to the multi-layer
settings with two related models, MLSBM and its restricted version RMLSBM. The community assignments through maximum likelihood estimation in both models
are consistent under data generated from the more general model MLSBM with suitable conditions on the growth rate of the number of communities, the number of types of layers,
and the total number of edges of the multi-layer graph. We also derived minimax rates of error and sharp thresholds for consistency of community detection in MLSBM, RMLSBM and a baseline model, the SBM obtained by aggregating the layers. We compared the proposed methods with the MLEs in single layer networks as well as two baseline methods, MLE in the aggregate SBM and majority voting, through results on asymptotic consistency and simulation.

We demonstrate advantages of the MLE in RMLSBM over the MLEs from single-layer SBMs as well as the majority voting and the MLE in MLSBM, both in the asymptotic consistency analysis and the simulation studies, when either the number of communities is large or the graph layers are relatively sparse. This includes the case when the individual layers have bounded average degree, which is an extremely challenging case for single layer networks. We would like to emphasize that handling the bounded degree case would not be possible with the usual MLSBM extension. Both the baseline methods suffer from deficiencies that limit their abilities to detect communities in multi-layer networks effectively. While the aggregation of graphs performs poorly if the community structure information contained in different layers are heterogeneous, the majority voting fails to infer community structure correctly from a large number of layers with week signals. The observations of this paper are in line with previous work in regression settings where a parsimonious model with similar accuracy is preferred over a model with a large number of parameters. The RMLSBM approximates the MLSBM quite well with fewer parameters for most multi-layer networks which are sparse or have a large number of communities. Hence in such cases the RMLSBM outperforms the MLSBM.

\begin{center}
APPENDIX A
\end{center}

\section*{Derivation of variational inference for RMLSBM}
We  derive the update rules for RMLSBM. Note that for the restricted model, the complete
data log likelihood is given by
\begin{eqnarray*}
l(A,Z) &=& l(A|Z)+l(Z) \\
&=& \sum_{i}\sum_{q}Z_{iq}\alpha_{q}+\frac{1}{2}\sum_{i\neq j}\sum_{q,l}\sum_{m}Z_{iq}Z_{jl}\{A_{ij}^{(m)}(\hat{\pi}_{ql}+\hat{\beta}_{m})\\
&& -\log(1+\exp(\hat{\pi}_{ql}+\hat{\beta}_{m})\}.
\end{eqnarray*}
The likelihood of the observed data can be obtained by summing the
complete data likelihood over all possible values of the unobserved
missing class assignment labels $Z$. However, note that the number of
all possible assignments grows exponentially as $K^{N}$, and
the sum quickly becomes computationally intractable even for moderate
$N$. Hence instead we use the EM algorithm for mixture models, where
the unobserved class assignments are treated as missing values. However
one needs to compute the conditional distribution of the missing values
(class assignments here) given the observed data, i.e., $P(Z|A)$. Unfortunately,
as argued by \citet{dpr08}, $P(Z|A)$ is itself intractable, since
the probability of the latent class assignments of a node depends
not only on the observed edges connected to that node, but also on
the connectivity pattern of the whole network.

The variational
approximation concentrates the search for optimal class assignments
to a smaller set by assuming that the class assignments follow a multinomial
distribution with parameters known as variational parameters. It aims
at maximizing an expression containing the log likelihood and the
negative of the Kullback-Liebler (KL) divergence between the true
probability distribution of $P(Z|A)$ and its variational approximation $R_{A}(\cdot)$.
If the approximation to the distribution coincides with the distribution,
then the KL divergence is zero and the variational approximation is
the same as the regular EM. So the new objective function to be optimized
as a lower bound of $l(A)$ is
\[
J(R_{A})=\log l(A)-KL[R_{A}(\cdot),\ P(\cdot|A)].
\]
Here we constraint $R_{A}$ to have
the following form of the product of multinomial densities
\[
R_{A}(Z)=\prod_{i}\prod_{q}\tau_{iq}^{Z_{iq}}.
\]
The variational distribution $R_{A}(Z)$ has the interpretation of
being an approximation of $P(Z|A)$.

\begin{figure}[h]
\begin{algorithm}[H]
\While{either convergence criterion on parameters not met
 or $t<t_{max}$}{
\tcp {E-step: Compute variational estimates $\tau=\{\tau_{iq}\}$}

\While{either convergence criteria on $\tau$ are not met or $s<s_{max}$}{

\For{$i\leftarrow\{1,2,\ldots,N\}$}{

\For{$q\leftarrow\{1,2,\ldots,K\}$}{
$\hat{\tau}_{iq}^{(s+1)}=\exp[\hat{\alpha}_{q}^{(t)}\underset{i<j}{\sum}\underset{l}{\sum}\underset{m}{\sum}\hat{\tau}_{jl}^{(s)}\{A_{ij}^{(m)}\hat{\pi}_{qlm}^{(t)}+(1-A_{ij}^{(m)})(1-\hat{\pi}_{qlm}^{(t)})\}]$

$s=s+1$

}}}

$\hat{\tau}_{iq}^{(t+1)}=\hat{\tau}_{iq}^{(t+1)}/\overset{K}{\underset{q=1}{\sum}}\hat{\tau}_{iq}^{(t+1)}$

\tcp {M-step: Estimate the parameters}

\For{$q\leftarrow1$ to $K$}{

$\hat{\alpha}_{q}^{(t+1)}=\frac{1}{N}\overset{N}{\underset{i=1}{\sum}}\hat{\tau}_{iq}^{(t+1)}$

\For{$m\leftarrow1$ to $M$}{

\For{$l\leftarrow1$ to $K$}{

$\hat{\pi}_{qlm}^{(t+1)}=\frac{\underset{i<j}{\sum}\hat{\tau}_{iq}^{(t+1)}\hat{\tau}_{jl}^{(t+1)}A_{ij}^{(m)}}{\underset{i<j}{\sum}\hat{\tau}_{iq}^{(t+1)}\hat{\tau}_{jl}^{(t+1)}}$

} } }

$t=t+1$

}
\caption{Variational EM algorithm for MLSBM}
\end{algorithm}
\end{figure}

\begin{figure}[h]
\begin{algorithm}[H]

\While{either convergence criteria on parameters are not met or $t<t_{max}$}{

\tcp{E-Step: Compute variational estimates $\tau=\{\tau_{iq}\}$}

\While{either convergence criteria on $\tau$ are not met or $s<s_{max}$}{

\For{$i\leftarrow\{1,2,\ldots,N\}$}{

\For{$q\leftarrow\{1,2,\ldots,K\}$}{

$\hat{\tau}_{iq}^{(s+1)}=\exp[\hat{\alpha}_{q}^{(t)}\underset{i<j}{\sum}\underset{l}{\sum}\underset{m}{\sum}\hat{\tau}_{jl}^{(s)}\{A_{ij}^{(m)}(\hat{\pi}_{ql}^{(t)}+\hat{\beta}_{m}^{(t)})-\log(1+\exp(\hat{\pi}_{ql}^{(t)}+\hat{\beta}_{m}^{(t)}))\}]$

$s=s+1$

}}}

\tcp{Normalize the variational estimates so that they sum to 1 for each
$i$}

$\hat{\tau}_{iq}^{(t+1)}=\hat{\tau}_{iq}^{(t+1)}/\overset{K}{\underset{q=1}{\sum}}\hat{\tau}_{iq}^{(t+1)}$

\tcp{M-step: Estimate the parameters}

\For{$q\leftarrow1$ to $K$}{

 $\hat{\alpha}_{q}^{(t+1)}=\frac{1}{N}\overset{N}{\underset{i=1}{\sum}}\hat{\tau}_{iq}^{(t+1)}$

}

\tcp{ Use BFGS optimization method to find the parameters}

$(\hat{\pi}^{(t+1)},\hat{\beta}^{(t+1)})=\underset{\pi,\beta}{\arg\max}J(\pi,\beta)$

$t=t+1$

}
\caption{Variational EM algorithm for RMLSBM}
\end{algorithm}

\end{figure}



In the E step of the following
variational EM algorithm, we compute the variational approximation
estimates of the probabilities of class assignments for each node.
Given the model parameters $\alpha,$ $\pi$, $\beta$, the variational
parameters $\tau$ can be computed by minimizing the function
\begin{align}
J(R_{A})&=\sum_{i}\sum_{q}\tau_{iq}\log(\alpha_{q})+\frac{1}{2}\sum_{i\neq j}\sum_{q,l}\sum_{m}Z_{iq}Z_{jl}\{A_{ij}^{(m)}(\hat{\pi}_{ql}+\hat{\beta}_{m}) \\
&-\log(1+\exp(\hat{\pi}_{ql}+\hat{\beta}_{m})\}-\sum_{i}\sum_{q}\tau_{iq}\log(\tau_{iq})\nonumber
\end{align}
with the constraint that $\sum_{q}\tau_{iq}=1$ for all $i.$ The
solution for the $(t+1)$th EM step can be readily obtained as
\begin{equation*}
\hat{\tau}_{iq}^{(t+1)}=\exp\Big[\hat{\alpha}_{q}^{(t)}\underset{i<j}{\sum}\underset{l}{\sum}\underset{m}{\sum}\hat{\tau}_{jl}^{(t)}\{A_{ij}^{(m)}(\hat{\pi}_{ql}^{(t)}+\hat{\beta}_{m}^{(t)}) \log(1+\exp(\hat{\pi}_{ql}^{(t)}+\hat{\beta}_{m}^{(t)}))\}\Big].
\end{equation*}

In the M step we estimate the parameters of the model by maximizing
the approximate likelihood. Since we do not have a closed form solution
for the parameters $\pi$ and $\beta$, we use a gradient descent algorithm
(BFGS optimization algorithm) to simultaneously optimize the objective
function with respect to all the parameters. The gradients of the
objective function with respect to $\pi$ and $\beta$ are
\begin{equation}
\frac{\partial}{\partial \beta_{m}^{(t)}}:=\sum_{i\neq j}\sum_{q,l}\hat{\tau}_{iq}^{(t)}\hat{\tau}_{jl}^{(t)}\left(A_{ij}^{(m)}-\frac{\exp(\hat{\pi}_{ql}^{(t)}+\hat{\beta}_{m}^{(t)})}{1+\exp(\hat{\pi}_{ql}^{(t)}+\hat{\beta}_{m}^{(t)})}\right),
\end{equation}
\begin{equation}
\frac{\partial}{\partial \pi_{ql}^{(t)}}:=\sum_{i\neq j}\sum_{m}\hat{\tau}_{iq}^{(t)}\hat{\tau}_{jl}^{(t)}\left(A_{ij}^{(m)}-\frac{\exp(\hat{\pi}_{ql}^{(t)}+\hat{\beta}_{m}^{(t)})}{1+\exp(\hat{\pi}_{ql}^{(t)}+\hat{\beta}_{m}^{(t)})}\right).
\end{equation}
The two algorithms corresponding to the two models are described in Algorithm 1 and Algorithm 2 respectively.

\section*{Proofs of consistency results}

\subsection*{Proof of Equation (3.16)}

\begin{align}
& l^{R}(A;z)-\bar{l}_{P}^{R}(z) \nonumber \\
=& \sum_{m} \sum_{i<j} \Bigg \{A_{ij}^{(m)} \log \left(\frac{\hat{\phi}_{z_i z_j}^{(m)}}{\bar{\phi}_{z_i z_j}^{(m)}}\right)+(1-A_{ij}^{(m)})\log \left(\frac{1-\hat{\phi}_{z_i z_j}^{(m)}}{1-\bar{\phi}_{z_i z_j}^{(m)}}\right)\Bigg \} + X-E(X) \nonumber \\
=& \sum_{m} \sum_{i<j} \Bigg \{A_{ij}^{(m)} (\hat{\pi}_{ql} + \hat{\beta} _{m}-\bar{\pi}_{ql} -\bar{\beta}_{m}) - \log \left(\frac{1+\exp (\hat{\pi}_{ql} + \hat{\beta} _{m}}{ 1+ \exp (\bar{\pi}_{ql} + \bar{\beta} _{m})}\right) \Bigg \} + X-E(X) \nonumber \\
=& \sum_{q \leq l} (\hat{\pi}_{ql} -\bar{\pi}_{ql}) \sum_{m} \sum_{i<j} A_{ij}^{(m)} 1\{z_i=q,z_j =l\} + \sum_{m} ( \hat{\beta} _{m}-\bar{\beta}_{m}) \sum_{i<j} A_{ij}^{(m)} \nonumber \\
&-\sum_{m} \sum_{q \leq l}n_{ql} \log \left(\frac{1+\exp (\hat{\pi}_{ql} + \hat{\beta} _{m}}{ 1+ \exp (\bar{\pi}_{ql} + \bar{\beta} _{m})}\right)  + X-E(X)  \nonumber \\
=& \sum_{q \leq l} (\hat{\pi}_{ql} -\bar{\pi}_{ql}) n_{ql} \sum_{m} \hat{\phi}_{(z)ql}^{(m)}+ \sum_{m} ( \hat{\beta} _{m}-\bar{\beta}_{m}) \sum_{q \leq l} n_{ql} \hat{\phi}_{(z)ql}^{(m)}  \nonumber \\
&-\sum_{m} \sum_{q \leq l}n_{ql} \log \left(\frac{1+\exp (\hat{\pi}_{ql} + \hat{\beta} _{m})}{ 1+ \exp (\bar{\pi}_{ql} + \bar{\beta} _{m})}\right)  + X-E(X) \nonumber \\
=& \sum_{m} \sum_{q \leq l} n_{ql} \Bigg \{ \hat{\phi}_{(z)ql}^{(m)} \log \left(\frac{\hat{\phi}_{(z)ql}^{(m)}}{\bar{\phi}_{(z)ql}^{(m)}}\right)+(1-\hat{\phi}_{(z)ql}^{m})\log \left(\frac{1-\hat{\phi}_{(z)ql}^{(m)}}{1-\bar{\phi}_{(z)ql}^{(m)}}\right)\Bigg \} + X-E(X) \nonumber \\
=& \sum_{m} \sum_{q\leq l} n_{ql} D\left(\hat{\phi}_{(z)ql}^{(m)}\ ||\ \bar{\phi}_{(z)ql}^{(m)}\right)+X-E(X),
\end{align}

\subsection*{Proofs of main results}
Before we describe the proves of Theorems 1 and 2, we need the following lemma.

\begin{lem}
\label{thm:sizeofset}
For a fixed $z$, let $\hat{\pi }_{(z)}=\{\hat{\pi}_{(z)ql}^{(m)};\ q,l\in\{1,\ldots,K\},\ m\in\{1,\ldots,M\}\}$ denote the MLE of the parameters of MLSBM, and let $\hat{\pi}^{R}_{(z)}=\{(\hat{\pi}_{(z)ql},\hat{\beta}_{(z)m});\ q\leq l,\ q,l\in\{1,\dots,K\},\ m\in\{1,\ldots,M\}\}$ be the MLE of the parameters of RMLSBM. Then for any $z$, we have the size of the set of all possible
values that $\hat{\pi}_{(z)}$ can take as
\[
|\hat{\Pi }_{(z)}| \leq \left(\frac{N}{K}+1\right)^{MK(K+1)},
\]
and that
 $\hat{\pi}^{R}_{(z)}$
can take as
\[
|\hat{\Pi}^{R}_{(z)}| \leq \left(M^{1/2}\left(\frac{N}{K}+1\right)\right)^{K^{2}+K}\left(\frac{N(N+1)}{2}+1\right)^{M},
\]
where $\hat{\Pi}_{(z)}$ and $\hat{\Pi}^{R}_{(z)}$ denote the range of $\hat{\pi}_{(z)}$ and $\hat{\pi}^{R}_{(z)}$ respectively for a fixed $z$.
\end{lem}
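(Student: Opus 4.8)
The idea is that, for a fixed partition $z$, the MLE in each model is a deterministic function of the data only through a short list of integer-valued block counts, so $|\hat{\Pi}_{(z)}|$ (resp.\ $|\hat{\Pi}^{R}_{(z)}|$) is bounded by the number of distinct values that list can take; the only nonroutine ingredient is an AM--GM step that converts a product of $K(K+1)/2$ per-block range bounds into a power of $N/K+1$, using $\sum_{q\le l}n_{ql}=\binom{N}{2}$.

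For MLSBM, \eqref{eq:pihat} gives $\hat{\pi}_{(z)ql}^{(m)}=S_{ql}^{(m)}/n_{ql}$ with $S_{ql}^{(m)}=\sum_{i<j}A_{ij}^{(m)}1\{z_i=q,z_j=l\}$ an integer in $\{0,1,\dots,n_{ql}\}$, so for each layer $m$ the block vector $(\hat{\pi}_{(z)ql}^{(m)})_{q\le l}$ takes at most $\prod_{q\le l}(n_{ql}+1)$ values. Since there are $K(K+1)/2$ pairs $q\le l$ and $\sum_{q\le l}n_{ql}=\binom{N}{2}$, AM--GM gives $\prod_{q\le l}(n_{ql}+1)\le\big(\tfrac{\sum_{q\le l}(n_{ql}+1)}{K(K+1)/2}\big)^{K(K+1)/2}=\big(\tfrac{\binom{N}{2}}{K(K+1)/2}+1\big)^{K(K+1)/2}$, and since $\tfrac{\binom{N}{2}}{K(K+1)/2}=\tfrac{N(N-1)}{K(K+1)}\le (N/K)^2\le (N/K+1)^2-1$, this is $\le (N/K+1)^{K(K+1)}$. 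The full estimate $\hat{\pi}_{(z)}$ is determined by the $(S_{ql}^{(m)})$ across all $m$, so $|\hat{\Pi}_{(z)}|\le\big((N/K+1)^{K(K+1)}\big)^{M}=(N/K+1)^{MK(K+1)}$.

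For RMLSBM there is no closed form, but the estimating equations \eqref{eq:phihat}--\eqref{eq:phihat-1} exhibit $\hat{\phi}_{(z)}$ — hence $\hat{\pi}^{R}_{(z)}$ via \eqref{eq:phi} — as a function of the complete sufficient statistic, whose coordinates are $T_1^{(m)}=\sum_{i<j}A_{ij}^{(m)}\in\{0,\dots,\binom{N}{2}\}$ for $m=1,\dots,M$ and $T_2^{ql}=\sum_m\sum_{i<j}A_{ij}^{(m)}1\{z_i=q,z_j=l\}\in\{0,\dots,Mn_{ql}\}$ for $q\le l$. Thus $|\hat{\Pi}^{R}_{(z)}|\le\prod_m(\binom{N}{2}+1)\cdot\prod_{q\le l}(Mn_{ql}+1)\le\big(\tfrac{N(N+1)}{2}+1\big)^{M}\big(\tfrac{M\binom{N}{2}}{K(K+1)/2}+1\big)^{K(K+1)/2}$ by AM--GM; since $M\ge 1$, $\tfrac{M\binom{N}{2}}{K(K+1)/2}+1\le M(N/K)^2+1\le\big(M^{1/2}(N/K+1)\big)^2$, and substituting, together with $K(K+1)=K^2+K$, gives exactly the stated bound. (The linear dependence $\sum_{q\le l}T_2^{ql}=\sum_m T_1^{(m)}$ would give a slightly sharper count but is not needed.)

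The main point requiring care is the clause ``$\hat{\pi}^{R}_{(z)}$ is determined by $(T_1,T_2)$'': this relies on the RMLE in the restricted exponential-family model being essentially unique given the sufficient statistic and respecting the imposed boundedness constraints, so two data configurations with the same $(T_1,T_2)$ yield the same estimate. This is the fact already used to assert that \eqref{eq:phihat}--\eqref{eq:phihat-1} determine the MLE, so one only needs to be explicit about existence/uniqueness and boundary behavior; beyond it, everything reduces to the two elementary AM--GM estimates above.
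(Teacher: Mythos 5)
Your proposal is correct and follows essentially the same route as the paper: count the possible values of the per-block sums (for MLSBM) and of the right-hand sides of the estimating equations / sufficient statistics (for RMLSBM), then bound the product $\prod_{q\le l}(n_{ql}+1)$ (resp.\ $\prod_{q\le l}(Mn_{ql}+1)$) subject to $\sum_{q\le l}n_{ql}=\binom{N}{2}$ by equalizing the $n_{ql}$, which is exactly your AM--GM step. Your explicit remark that the RMLE must be a (well-defined) function of the sufficient statistic is the same point the paper handles by asserting that the system of estimating equations determines $\hat{\pi}^{R}$, so nothing is missing.
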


\begin{proof}

We first determine the size of the set of all possible values that the MLE of the parameter array $\pi$ can take in the MLSBM. Notice
that from Equation (3.5) the estimate $\hat{\pi}^{(m)}$ of the parameter matrix for
any layer $m$ can take any of the $\prod_{q\leq l}(n_{ql}+1)$
values, since its $K(K+1)/2$ upper diagonal components ($\hat{\pi}_{ql}^{(m)}, \, q\leq l, \, q,l\in\{1,\ldots,K\}$)
can take any of the $n_{ql}+1$ values in the set $\{0,1/n_{ql},\ldots ,1\}$ independently.
Hence, $|\hat{\Pi}|=\underset{m}{\prod}\underset{q\leq l}{\prod}(n_{ql}+1)$.
However this is subject to the constraint that $\underset{q\leq l}{\sum}n_{ql}=\begin{pmatrix}N\\
2
\end{pmatrix}$. This implies that $|\hat{\Pi}|$ is a product of $\begin{pmatrix}K+1\\
2
\end{pmatrix}$ positive terms whose sum is fixed. So $|\hat{\Pi}|$ is maximized
when the terms are all equal, i.e., $n_{ql}=\begin{pmatrix}N\\
2
\end{pmatrix}\Big/\begin{pmatrix}K+1\\
2
\end{pmatrix}$ uniformly across all $m$. Hence we have the following inequality
\begin{align*}
|\hat{\Pi}| &\leq\left(\begin{pmatrix}N\\
2
\end{pmatrix}\Big/\begin{pmatrix}K+1\\
2
\end{pmatrix}+1\right)^{MK(K+1)/2}
\\
&<\left(\frac{N^{2}}{K^{2}}+1\right)^{MK(K+1)/2}
<\left(\frac{N}{K}+1\right)^{MK(K+1)}.
\end{align*}

Now we turn our attention to the set of values the MLE of the parameter array in RMLSBM can take.
Note that Equations (3.13) and (3.14) together represent ${K(K+1)}/{2}+M$ equations involving partial
sums of the MLEs of the ${K(K+1)}/{2}+M$ elements in the parameter array $\pi^{R}$ (although the equations are written in terms of the transformation $\phi$ for convenience, they actually represent the same equations as Equations (3.10) and (3.11). The right hand side of the equations together are the sufficient statistics under the RMLSBM. Note that due to the identifiablility constraint, we have only ${K(K+1)}/{2}+M-1$ free parameters. On the other hand, one of the equations in the set of equations is also redundant, since adding together the first $M$ equations represented by Equation (3.13) and adding the remaining $K(K+1)/2$ equations represented by Equation (3.14) yield the same equation and hence there is one linear dependence. This set
of equations determines the MLE of ${\pi}^{R}$. Hence the size of the set of all distinct solutions $\hat{\pi}^{R}$
is at most the number of possible sets of system of equations. To determine the later, we notice that the right hand side
of each of the first set of $M$ equations can take ${N(N+1)}/{2}+1$
values from the set $\{0,2/[N(N+1)], \ldots, 1\}$, while the right hand
side of each of the next set of ${K(K+1)}/{2}$ equations can
take $Mn_{ql}+1$ values from the set $\{0,1/(Mn_{ql}),\ldots,1\}$. So the size
of the set of possible values the estimated parameter array $\hat{\pi}^{R}$ can take is
\[
|\hat{\Pi}^{R}| \leq \prod_{q\leq l}(Mn_{ql}+1)\prod_{m=1}^{M}\left(\frac{N(N+1)}{2}+1\right).
\]
The first term is maximized as before when all the $n_{ql}$'s are equal,
i.e., $n_{ql}=\begin{pmatrix}N\\
2
\end{pmatrix}\Big/\begin{pmatrix}K+1\\
2
\end{pmatrix}$. The second term is a fixed quantity. So we have
\begin{align*}
|\hat{\Pi}^{R}| &\leq\left(M\begin{pmatrix}N\\
2
\end{pmatrix}\Big/\begin{pmatrix}K+1\\
2
\end{pmatrix}+1\right)^{K(K+1)/2}\left(\frac{N(N+1)}{2}+1\right)^{M}
\\
& \leq\left(M\frac{N^{2}}{K^{2}}+1\right)^{K(K+1)/2}\left(\frac{N(N+1)}{2}+1\right)^{M}\\
& \leq\left(M^{1/2}\frac{N}{K}+1\right)^{K(K+1)}\left(\frac{N(N+1)}{2}+1\right)^{M}.
\end{align*}
Lastly notice that the transformation defined by Equation (3.1) is an onto function but not necessarily one-to-one, so one or more parameter arrays $\pi^{R}$ map to one $\phi$. Hence for every estimate $\hat{\phi}$ there exists a corresponding estimate array $\hat{\pi}^{R}$. Therefore we have
\[
|\hat{\Phi}|\leq |\hat{\Pi}^{R}|\leq\left(M^{1/2}\frac{N}{K}+1\right)^{K(K+1)}\left(\frac{N(N+1)}{2}+1\right)^{M}.
\]

\end{proof}

For brevity of notation henceforth we remove the subscript $(z)$ from $\pi_{(z)}$, $\pi^{R}_{(z)}$ and $\phi_{(z)}$, denoting the set of parameters of MLSBM, RMLSBM and the transformation of the set of parameters of RMLSBM respectively for a fixed $z$. We also remove the subscript $(z)$ from $\hat{\Pi}_{(z)}$ and $\hat{\Pi}^R_{(z)}$.

\subsubsection*{Proof of Theorem 1}
The proof for the unrestricted case follows  the structure of the proof of Theorem 1 in \citet{cwa12}.
Following the arguments in the aforementioned paper, we first notice that for a fixed $z$, each estimate ${\hat{\pi}}_{ql}^{(m)}$ is a sum of $n_{ql}$ independent Bernoulli random variables with mean ${\bar{\pi}}_{ql}^{(m)}$. Hence the probability that ${\hat{\pi}}_{ql}^{(m)}=\nu $, where $\nu \in \{0,1/n_{ql},\ldots ,1\}$ can be bounded as
\[
P(\hat{\pi}_{ql}^{(m)}=\nu)\leq \exp \left(-n_{ql}D(\nu\ ||\ \bar{\pi}{}_{ql}^{(m)})\right),
\]
and by the independence of $A_{ij}^{(m)}$, the bound on the probability of any realization $\hat{\pi} $ is
\[
P(\hat{\pi})\leq \exp \left(-\sum_{q\leq l}n_{ql}\sum_{m}D(\hat{\pi}_{ql}^{(m)}\ ||\ \bar{\pi}{}_{ql}^{(m)})\right).
\]

Recall $\hat{\Pi}$ denotes the set of values the estimate array
$\hat{\pi}$ can take for a fixed class assignment $z$. In Lemma \ref{thm:sizeofset}, we have bounded the size of this set as $|\hat{\Pi}|\leq \left(\frac{N}{K}+1\right)^{MK(K+1)}$.
Now we consider the event that
$\sum_{q\leq l}n_{ql}\sum_{m}D(\hat{\pi}_{ql}^{(m)}\ ||\ \bar{\pi}{}_{ql}^{(m)})$
is at least as large as some $\epsilon>0$, and derive an upper bound
for its probability of occurrence:
\begin{align*}
P(\hat{\Pi}_{\epsilon}) &=P\left(\hat{\pi}_{}\in\hat{\Pi};\ \sum_{q\leq l}n_{ql}\sum_{m}D(\hat{\pi}_{ql}^{(m)}\ ||\ \bar{\pi}{}_{ql}^{(m)})\geq\epsilon\right)=\underset{\hat{\pi}\in\hat{\Pi}_{\epsilon}}{\sum}P(\hat{\pi})\\
&\leq\underset{\hat{\pi}\in\hat{\Pi}_{\epsilon}}{\sum} \exp \left(-\sum_{q\leq l}n_{ql}\sum_{m}D(\hat{\pi}_{ql}^{(m)}\ ||\ \bar{\pi}{}_{ql}^{(m)})\right)\leq\underset{\hat{\pi}\in\hat{\Pi}_{\epsilon}}{\sum} \exp(-\epsilon)\\
& =|\hat{\Pi}_{\epsilon}| \exp(-\epsilon)\leq|\hat{\Pi}| \exp(-\epsilon)\leq\left(\frac{N}{K}+1\right)^{MK(K+1)} \exp(-\epsilon)
\end{align*}
Hence for all $\epsilon >0 $, we have over all $K^{N} $ possible class assignments $z$,
\[
P\left(\underset{z}{\max}\sum_{q\leq l}n_{ql}\sum_{m}D(\hat{\pi}_{ql}^{(m)}\ ||\ \bar{\pi}{}_{ql}^{(m)}) \geq \epsilon\right) \leq  P\left(\bigcup_z \left\{\sum_{q\leq l}n_{ql}\sum_{m}D(\hat{\pi}_{ql}^{(m)}\ ||\ \bar{\pi}_{ql}^{(m)}) \geq \epsilon\right\}\right)
\]
\[
\leq K^{N} \exp\left(MK(K+1) \log \left(\frac{N}{K}+1\right)-\epsilon\right)
\leq \exp \left(N\mbox{log}K+M(K^{2}+K) \log \left(\frac{N}{K}+1\right)-\epsilon\right).
\]

The proof for the restricted case, although follows the same structure as before, is more involved as we need to deal with estimating equations instead of closed form solutions. Note that for a fixed $z$, the left hand side of each of the $M$ estimating equations in (3.13) is $\frac{1}{N(N+1)/2} \sum_{q\leq l} n_{ql}{\hat{\phi}}_{ql}^{(m)}$, which is a sum of ${N(N+1)}/{2}$ independent Bernoulli random variables with mean $\frac{1}{N(N+1)/2}\sum_{q\leq l}n_{ql}{\bar{\phi}}_{ql}^{(m)}$ respectively. Hence the probability that\\
$\frac{1}{N(N+1)/2}\sum_{q\leq l} n_{ql}{\hat{\phi}}_{ql}^{(m)}=\nu_{m}$, where $\nu_{m} \in \{0,2/[N(N+1)], \ldots, 1\}$ can be bounded as
\[
P\left(\frac{\sum_{q\leq l} n_{ql} \hat{\phi}_{ql}^{(m)}}{N(N+1)/2}=\nu_{m}\right)
\leq \exp\left(-\frac{N(N+1)}{2} D\left(\nu_{m}\ \Big|\Big|\ \frac{\sum_{q\leq l}n_{ql}\bar{\phi}{}_{ql}^{(m)}}{N(N+1)/2}\right)\right),
\]
for $m\in \{1,\ldots, M\}$.

Similarly the left hand side of each of the ${K(K+1)}/{2} $ estimating equations in (3.14) is $\frac{1}{M} \sum_{m} {\hat{\phi}}_{ql}^{(m)}$, which is a sum of
$Mn_{ql}$ independent Bernoulli random variables with mean $\frac{1}{M} \sum_{m} {\bar{\phi}}_{ql}^{(m)}$. Hence the probability that
$\frac{1}{M}\sum_{m} {\hat{\phi}}_{ql}^{(m)}=\nu_{ql}$, where $\nu_{ql} \in \{0,1/(Mn_{ql}),\ldots,1\}$  can be bounded as
\[
P\left(\frac{1}{M}\sum_{m}\hat{\phi}_{ql}^{(m)}=\nu_{ql}\right)\leq \exp\left(-Mn_{ql}D\left(\nu_{ql}\ \Big|\Big|\ \frac{1}{M} \sum_{m}\bar{\phi}{}_{ql}^{(m)}\right)\right),
\]
for $q\leq l$, $q,l \in \{1,\ldots, K\}$.

Now since these ${K(K+1)}/{2}+M$ estimating equations together determine the MLE $\hat{\pi}^{R}$ of RMLSBM, the probability of any realization of $\hat{\pi}^{R}$ is bounded by the joint probability of the occurrence of the estimating equations. Note that although the equations within the
two sets (3.13) and (3.14) are independent of each other, the two sets of equations are not independent of each other. Hence because of the inequalities that $P(A \cap B) \leq P(A) $ and $P(A \cap B) \leq P(B) $, we have
\begin{align}
\nonumber
P(\hat{\pi}^{R}) &\leq \prod_{m} P\left(\frac{1}{N(N+1)/2}\sum_{q\leq l}n_{ql}\hat{\phi}_{ql}^{(m)}\right)  \\ 
& \leq \exp\left(-\sum_{m}\frac{N(N+1)}{2}D\left(\frac{\sum_{q\leq l}n_{ql}\hat{\phi}_{ql}^{(m)}}{N(N+1)/2}\ \Big|\Big|\ \frac{\sum_{q\leq l}n_{ql}\bar{\phi}{}_{ql}^{(m)}}{N(N+1)/2}\right)\right), 
\label{E1}
\end{align}
and
\begin{align}
\nonumber
P(\hat{\pi}^{R}) &\leq  \prod_{q\leq l} P\left(\frac{1}{M}\sum_{m}\hat{\phi}_{ql}^{(m)}\right) \\ 
& \leq \exp\left(-\sum_{q \leq l}Mn_{ql}D\left(\frac{1}{M}\sum_{m}\hat{\phi}_{ql}^{(m)}\ \Big|\Big|\ \frac{1}{M}\sum_{m}\bar{\phi}{}_{ql}^{(m)}\right) \right). 
\label{E2}
\end{align}

For brevity, we call the right hand sides of
Equations (\ref{E1}) and (\ref{E2}) as $\exp(-E_1)$ and $\exp(-E_2)$ respectively.
From Lemma 1, we have the size of set of all possible values $\hat{\pi}^{R}$ can take
\[
|\hat{\Pi}^{R}|\leq\left(M^{1/2}\frac{N}{K}+1\right)^{K(K+1)}\left(\frac{N(N+1)}{2}+1\right)^{M}.
\]
Now we consider the event that $E_i$
is at least as large as some $\epsilon>0$ for $i=1,2$ respectively.
\begin{align*}
&P(\hat{\Pi}_{\epsilon}^{R})=P(\hat{\pi}^{R}\in\hat{\Pi}^{R};E_i \geq \epsilon)=\underset{\hat{\pi}^{R}\in\hat{\Pi}_{\epsilon}^{R}}{\sum}P(\hat{\pi}^{R})\leq\underset{\hat{\pi}^{R}\in\hat{\Pi}_{\epsilon}^{R}}{\sum} \exp(-E_i) \\
&\leq|\hat{\Pi}^{R}|\exp(-\epsilon)\leq\left(M^{1/2}\frac{N}{K}+1\right)^{K(K+1)}\left(\frac{N(N+1)}{2}+1\right)^{M} \exp(-\epsilon).
\end{align*}
Hence for all $\epsilon >0 $, we have over all $K^{N} $ possible class assignments $z$,
\begin{align*}
&P\Bigg(\underset{z}{\max} \Bigg\{ \sum_{m}\frac{N(N+1)}{2}D\left(\frac{\sum_{q\leq l}n_{ql}\hat{\phi}_{ql}^{(m)}}{N(N+1)/2}\ \Big|\Big|\ \frac{\sum_{q\leq l}n_{ql}\bar{\phi}_{ql}^{(m)}}{N(N+1)/2}\right) \Bigg\} \geq \epsilon\Bigg) \\
 & \leq \exp \left(N\mbox{log}K+(K^{2}+K)\log\left(M^{1/2}\frac{N}{K}+1\right)+M\log \left(\frac{N(N+1)}{2}+1\right)-\epsilon \right),
\end{align*}
and
\begin{align*}
&P\Bigg(\underset{z}{\max} \Bigg\{\sum_{q \leq l}Mn_{ql}D\left(\frac{1}{M}\sum_{m}\hat{\phi}_{ql}^{(m)}\ \Big|\Big|\ \frac{1}{M}\sum_{m}\bar{\phi}{}_{ql}^{(m)}\right)\Bigg\}\geq \epsilon\Bigg)  \\
 & \leq \exp \left( N\log K+(K^{2}+K)\log\left(M^{1/2}\frac{N}{K}+1\right)+M\log \left(\frac{N(N+1)}{2}+1\right)-\epsilon \right).
\end{align*}

\subsubsection*{Proof of Theorem 2}
First we note that $X$, as defined in Equation (3.9),
is a sum of bounded independent random variables, because
each element $X_{ij}^{(m)}$ in the sum is bounded by $C=2 \log (\sqrt{M}N)$ in
absolute value. So we can use a Bernstein type inequality for
sums of bounded independent random variables \citep{cl06} to obtain
\begin{align*}
P(|X-E(X)|>\epsilon) &\leq\exp\left(-\frac{\epsilon^{2}}{2\underset{m}{\sum}\underset{i<j}{\sum}E[X_{ij}^{(m)2}]+\frac{2}{3}\epsilon C}\right)\\
&\leq\exp\left(-\frac{\epsilon^{2}}{8L\log^{2}(\sqrt{M} N)+\frac{4}{3}\epsilon\log(\sqrt{M} N)}\right),
\end{align*}
since $\underset{m}{\sum}\underset{i<j}{\sum}E[X_{ij}^{(m)2}]=\underset{m}{\sum}\underset{i<j}{\sum}P_{ij}^{(m)}\log^{2}(\bar{\pi}_{ql}^{(m)}/(1-\bar{\pi}_{ql}^{(m)}))<4L\log^{2} (\sqrt{M} N)$.
Combining this inequality with the result in Theorem 1,
we have over all possible $K^{N}$ class assignments $z$,
\begin{align*}
&\underset{z}{\max}P(|l(A;z)-\bar{l}_{P}(z)|>2\epsilon L)\\
&\leq\underset{z}{\max}\left(P\left(\sum_{q \leq l}n_{ql}\sum_{m}D(\hat{\pi}_{ql}^{(m)}\ ||\ \bar{\pi}_{ql}^{(m)})>\epsilon L\right)+P\left(|X-E(X)|>\epsilon L\right)\right)\\
&\leq\exp\left(N\mbox{log}K+M(K^{2}+K)\mbox{log}\left(\frac{N}{K}+1\right)-\epsilon L\right)\\
&+\exp\left(N\log K-\frac{\epsilon^{2}L}{8\log^{2}(\sqrt{M} N)+\frac{4}{3} \epsilon \log(\sqrt{M} N)}\right),
\end{align*}
which goes to zero asymptotically as $N$ grows under the growth conditions
mentioned on $K$ and $L$. So we have
\[
\underset{z}{\mbox{max}}|l(A;z)-\bar{l}_{P}(z)|=o_{P}(L).
\]

\subsubsection*{Proof of Theorem 3}

The proof for the RMLSBM will be a slight modification of the earlier
proof for MLSBM. As before we need to bound the two terms in the decomposition of the difference between maximized likelihood and its expected value defined in Equation (3.16). For that we write the first part in the right hand side of (3.16), which we call $E_3$ here for brevity, in terms of the quantities we have already bounded in Theorem 1. We begin by noticing that, since the Kullback-Liebler divergence $D(a||b)$ is convex,
we can use a reverse of Jensen's inequality \citep{s09,bdp01} to write
\[
\sum_{q\leq l}n_{ql}D\left(\hat{\phi}_{ql}^{(m)}\ ||\ \bar{\phi}_{ql}^{(m)}\right)
\leq \frac{N(N+1)}{2} D\left(\frac{\sum_{q\leq l}n_{ql}\hat{\phi}_{ql}^{(m)}}{N(N+1)/2}\ \Big|\Big|\ \frac{\sum_{q\leq l}n_{ql}\bar{\phi}{}_{ql}^{(m)}}{N(N+1)/2}\right)+\log (MN^2),
\]
and
\[
\sum_{m}n_{ql}D\left(\hat{\phi}_{ql}^{(m)}\ ||\ \bar{\phi}_{ql}^{(m)}\right)
\leq Mn_{ql} D\left(\frac{1}{M}\sum_{m}\hat{\phi}_{ql}^{(m)}\ \Big|\Big|\ \frac{1}{M}\sum_{m}\bar{\phi}{}_{ql}^{(m)}\right)+\log (MN^2).
\]
To derive the inequality, we used $- \log ({\hat{\phi}_{ql}^{(m)}}/{\bar{\phi}_{ql}^{(m)}})$ as our convex function of ${\hat{\phi}_{ql}^{(m)}}/{\bar{\phi}_{ql}^{(m)}}$ on the interval $[{1}/{(MN^2)}, 1-{1}/{(MN^2)}]$ to obtain a reverse of the ``log-sum inequality".
Summing the two inequalities over $m$ and $q,l$ respectively, we have
\[
E_3\leq 2 \sum_{m} \frac{N(N+1)}{2} D\left(\frac{\sum_{q\leq l}n_{ql}\hat{\phi}_{ql}^{(m)}}{N(N+1)/2}\ \Big|\Big|\ \frac{\sum_{q\leq l}n_{ql}\bar{\phi}{}_{ql}^{(m)}}{N(N+1)/2}\right) +o(M (\log(\sqrt{M}N))^{1+\delta}),
\]
and
\[
E_3\leq 2 \sum_{q\leq l} Mn_{ql} D\left(\frac{1}{M}\sum_{m}\hat{\phi}_{ql}^{(m)}\ \Big|\Big|\ \frac{1}{M}\sum_{m}\bar{\phi}{}_{ql}^{(m)}\right) +o(K^{2} (\log(\sqrt{M}N))^{1+\delta}).
\]
Hence $E_3$ is bounded by the minimum of the above two upper bounds.
Since the first part in the right hand side of the above two inequalities is bounded by the same quantity,
we will take the inequality for which the second part is smaller. Under the conditions on the growth of $L$ in the theorem, the minimum of the two second parts is $o(L)$. Consequently,
\begin{align*}
&\underset{z}{\max }P(|l^{R}(A;z)-\bar{l}_{P}^{R}(z)| >2\epsilon L) \\ & \leq \exp \left( N \log K+(K^{2}+K)\log \left(M^{1/2}\frac{N}{K}+1\right)  + M \log \left(\frac{N(N+1)}{2}+1\right) -\epsilon L \right) \\& + \exp\left(N\log K-\frac{\epsilon^{2}L}{8\log ^{2} (\sqrt{M} N)+\frac{4}{3} \epsilon \log N}\right),
\end{align*}
so under the growth conditions mentioned under different
asymptotic settings,
\[
\underset{z}{\max}|l^{R}(A;z)-\bar{l}_{P}^{R}(z)|=o_{P}(L).
\]

\subsubsection*{Proof of Theorem 4}

For MLSBM, if the conclusion $\underset{z}{\mbox{max}}|l(A;z)-\bar{l}_{P}(z)|=o_{P}(L)$
of Theorem 2 holds, the data are generated according to a $K$-class
blockmodel with membership vector $\bar{z}$ and probability matrix
$\bar{\pi}$, and the maximum-likelihood $K$-class blockmodel class assignment estimator is $\hat{z}$, then it is easy to see
\begin{align}
 &\bar{l}_{P}(\bar{z})-\bar{l}_{P}(\hat{z}) \leq  \bar{l}_{P}(\bar{z})-\bar{l}_{P}(\hat{z}) +l(A,\hat{z})-l(A,\bar{z}) \label{eq:misclus} \\
 &\leq |\bar{l}_{P}(\bar{z})-l(A,\bar{z}) | + |\bar{l}_{P}(\hat{z})- l(A,\hat{z})|=o_{P}(L). \nonumber
\end{align}
Note that the terms $\bar{l}_{P}(\bar{z})-\bar{l}_{P}(\hat{z})$ and $l(A,\hat{z})-l(A,\bar{z})$ are positive quantities as mentioned earlier. 

The rest of the proof requires the concepts of partition and refinement as laid out
in \citet{cwa12}. We briefly review the concepts here and apply them to MLSBM and its regularized version RMLSBM. Let $[N]$
denote the set of integers $\{1,2,\ldots,N\}$. Any multi-layer blockmodel
induces a partition of the $M$ upper triangular probability matrices. Formally we define a partition of
$\{P_{ij}^{(m)}\}_{i<j}$ into $U$ subsets $\{S_{1},\ldots,S_{U}\}$
by the following mapping
\[
\Theta:(i,j)_{i\in[N],\ j\in[N],\ i<j}\rightarrow[U].
\]
Note that the partitions induced on all $M$ probability matrices
are the same, since the partition is a function only of the indices
and not of the type of edges. There exists a bijection between the set $[U]$ and the upper triangular part of the parameter matrices of MLSBM, so we can write $\pi_{\Theta(i,j)}=\pi_{z_iz_j}$.

In MLSBM, for a general partition, we define $S_{u}=\{(i,j):\ \Theta(i,j)=u,\ i<j\}$ and  $\bar{\pi}_{u}=|S_{u}|^{-1}\underset{m}{\sum} \underset{\Theta(i,j)=u,i<j}{\sum}P_{ij}^{(m)}$, so that we can define the log likelihood under this partition as
\[
\bar{l}_{P}^{*}(\Theta)=\sum_{m=1}^{M}\sum_{i<j}\{P_{ij}^{(m)}\mbox{log \ensuremath{\bar{\pi}_{\Theta(i,j)}^{(m)}}}+(1-P_{ij}^{(m)})\mbox{log \ensuremath{(1-\bar{\pi}_{\Theta(i,j)}^{(m)}})}\}.
\]
It is easy to see that $\bar{l}_{P}^{*}(\Theta^{z})=\bar{l}_{P}(z)$, where $\Theta^{z}$ is the
partition corresponding to block model assignment
$z$. A refinement $\Theta'$ of partition $\Theta$ further subdivides
the partitions in $\Theta$ into subgroups or sub-partitions so that
$\Theta^{'}(i_{1},j_{1})_{i_{1}<j_{1}}=\Theta^{'}(i_{2},j_{2})_{i_{2}<j_{2}}$
$\Rightarrow\Theta(i_{1},j_{1})_{i_{1}<j_{1}}=\Theta(i_{2},j_{2})_{i_{2}<j_{2}}$. From Lemma A2 of \citet{cwa12}, it can be easily obtained
\[
\bar{l}_{P}^{*}(\Theta)\leq\bar{l}_{P}^{*}(\Theta').
\]

One such refinement is constructed in the following way \citep{cwa12}. We consider a $K$ class MLSBM with membership vector $\bar{z}$ and let $\Theta^{z}$ denote a partition of $\{P_{ij}^{(m)}\}_{i<j}$ for any $z$. Now, for a given membership class under $z$, partition the
corresponding set of nodes into subclasses according to the true class
assignment $\bar{z}$ of each node. Then remove one node from each
of the two largest subclasses so obtained, and group them together
as a pair; continue this pairing process until no more than one nonempty
subclass remains. If pair ($i,j)$ is chosen from the above procedure,
then $z_{i}=z_{j}$ and $\bar{z}_{i}\neq\bar{z}_{j}$. Define $C_{1}$
as the number of $(i,j)$ pairs selected by the above method. Since
at least one of $i$ or $j$ is misclustered, we have $N_{e}(z)/2\leq C_{1}\leq N_{e}(z)$.

Next, for each $C_{1}$ pairs find all other distinct indices $k$
for which condition (3.26) of the theorem is satisfied.
Let $C_{2}$ denote the total number of distinct triples that can
be formed in this manner. For each of the $C_{2}$ such triples $(i,j,k)$, we
remove $P_{ik}$ and $P_{jk}$ from their previous subset assignment under $\Theta^{z}$ and place them in a new distinct two element subset. This partition so created
is a refinement of the original partition $\Theta^{z}$, and we call this refined partition $\Theta^{'z}$. The condition (3.26)
of the theorem implies that for each pair of classes $(q,l)$, there
exists at least one class $c$ that satisfies,
\begin{equation}
D\left(\bar{\pi}{}_{qc}^{(m)}\ \Big|\Big|\ \frac{\bar{\pi}{}_{qc}^{(m)}+\bar{\pi}{}_{lc}^{(m)}}{2}\right)+D\left(\bar{\pi}{}_{lc}^{(m)}\ \Big|\Big|\ \frac{\bar{\pi}_{qc}^{(m)}+\bar{\pi}_{lc}^{(m)}}{2}\right)\geq \frac{LK}{MN^2}.\label{eq:probuse}
\end{equation}
Consequently for any of the $C_{1}$ pairs of nodes under the true partition, we obtain triples at least
as large as the cardinality of the smallest class. Hence $C_{2}$ is at
least as large as $C_{1}s$, where $s$ the size of the smallest class.
Now as per assumption, $s=\Omega({N}/{K})$. Hence we can bound
the difference in the likelihood:
\begin{align*}
\bar{l}_{P}(\bar{z})-\bar{l}_{P}^{*}(\Theta^{'z}) &=\sum_{m}\sum_{i<j}D\left(P_{ij}^{(m)}\ ||\ \pi_{\Theta^{'z}(i,j)}^{(m)}\right)=C_{2}M\Omega\left(\frac{LK}{MN^2}\right) \\
&=C_{1}M\Omega\left(\frac{N}{K}\frac{LK}{MN^2}\right) =\frac{N_{e}(z)}{2}\Omega(L)\frac{MNKL}{KLMN^2}=\frac{N_{e}(z)}{N}\Omega(L).
\end{align*}

Since the above procedure is valid for any class assignment vector $z$, we can apply it for the maximum likelihood estimate $\hat{z}$ as well. Note that $\hat{z}$ induces
partition $\Theta^{\hat{z}}$ of the probability matrices $\{P_{ij}^{(m)}\}_{i<j,\ m=\{1,\ldots,M\}}$
and its refinement $\Theta^{'\hat{z}}$ increases the likelihood,
i.e., $\bar{l}_P^{*}(\Theta^{\hat{z}})\leq\bar{l}_P^{*}(\Theta^{'\hat{z}})$.
Also we have $\bar{l}_P^{*}(\Theta^{\hat{z}})=\bar{l}_{P}(\hat{z})$. Consequently
we have,
\[
\bar{l}_{P}(\bar{z})-\bar{l}_{P}(\hat{z})\geq\bar{l}_{P}(\bar{z})-\bar{l}_{P}^{*}(\Theta^{'\hat{z}}) =\frac{N_{e}(\hat{z})}{N}\Omega(L).
\]
Combining this with the result from Equation (3.25), we have
\[
N_{e}(\hat{z})=o_{P}(N).
\]

\subsubsection*{Proof of Theorem 5}
Before we proceed with the proof we need two lemmas. The first lemma bounds the difference between the maximized expected likelihoods from the unrestricted and the restricted models under the true partition. The second lemma uses this result along with the result of Theorem 3 to bound the difference between the maximized expected likelihood for the restricted model under the RMLE and the maximized expected likelihood for the unrestricted model under the true partition.

\begin{lem}
Under the true partition $\bar{z}$, if any of the five sets of conditions in Theorem 3 on the growth of multi-layer blockmodel parameters holds, then  $\bar{l}_{P}(\bar{z})-\bar{l}_{P}^{R}(\bar{z})=o_{P}(L)$, where $L$ is the expected number of edges in the multi-layer graph under the corresponding set of conditions.
\label{lem2}
\end{lem}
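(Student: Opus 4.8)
The plan is to convert the claim into a deterministic estimate for a weighted sum of Kullback--Leibler divergences and then to control that sum with a single well-chosen point of the restricted parameter space. Subtracting the formula for the maximized restricted expected likelihood from that for $\bar l_P(\bar z)$ (the two displays just before the lemma, the first maximized by $\bar\pi$, the second by $\bar\phi$), every term except the $D$-terms cancels, leaving
\[
\bar l_P(\bar z)-\bar l_P^R(\bar z)=\sum_{m=1}^{M}\sum_{q\le l}n_{ql}\,D\!\left(\bar\pi_{ql}^{(m)}\ \big|\big|\ \bar\phi_{ql}^{(m)}\right).
\]
Since $\bar l_P(\bar z)$ does not depend on the restricted parameters, $\bar\phi$ (the maximizer of $\bar l_P^R(\bar z,\cdot)$ over $\Phi$) is exactly the minimizer of this sum over $\Phi$; hence it suffices to exhibit \emph{one} feasible $\widetilde\phi_{ql}^{(m)}=\mathrm{logit}^{-1}(\widetilde\pi_{ql}+\widetilde\beta_m)\in\Phi$ with $\sum_{m,q\le l}n_{ql}D(\bar\pi_{ql}^{(m)}\,||\,\widetilde\phi_{ql}^{(m)})=o(L)$.

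The surrogate I would use is the crudest one: $\widetilde\beta_m=0$ for every $m$ and $\widetilde\pi_{ql}=\mathrm{logit}(\bar\pi_{ql}^{+})$, where $\bar\pi_{ql}^{+}:=\tfrac1M\sum_m\bar\pi_{ql}^{(m)}$ is the layer-averaged probability of the class pair $(q,l)$ (truncated into $(-C\log(MN^2),C\log(MN^2))$ if necessary, which is harmless since all true probabilities already lie in $[(MN^2)^{-1},1-(MN^2)^{-1}]$). Then $\widetilde\phi_{ql}^{(m)}=\bar\pi_{ql}^{+}$ for all $m$ and the task reduces to bounding $\sum_{m,q\le l}n_{ql}D(\bar\pi_{ql}^{(m)}\,||\,\bar\pi_{ql}^{+})$. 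Two features of this choice are what make it usable: off a small exceptional set of triples $Q$ it keeps $\bar\pi_{ql}^{+}$ below the probability ceiling $\rho_{\max}$ supplied by the sparsity restriction (3.3), and it undershoots $\bar\pi_{ql}^{(m)}$ by at most a factor $M$, since $\bar\pi_{ql}^{(m)}\le\sum_{m'}\bar\pi_{ql}^{(m')}=M\bar\pi_{ql}^{+}$.

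The estimate splits the double sum according to whether the pair $(q,l)$ meets $Q$. For a tame pair, $\bar\pi_{ql}^{(m)},\bar\pi_{ql}^{+}\in[(MN^2)^{-1},\rho_{\max}]$, and the elementary inequalities $D(a||b)\le 2b$ for $b\ge a$ and $D(a||b)\le a\log(a/b)\le a\log M$ for $b<a$ give $D(\bar\pi_{ql}^{(m)}\,||\,\bar\pi_{ql}^{+})\le C\rho_{\max}\log(MN^2)$; summing over the at most $\tfrac12 MN^2$ tame triples (using $\sum_{q\le l}n_{ql}\le\tfrac12 N^2$) yields a tame contribution at most $C\,MN^2\rho_{\max}\log(MN^2)$, which is $o(L)$ because $\rho_{\max}$ carries the $L'$-scale divided by a polylogarithm that dominates $\log(MN^2)$ while $L=\omega(L')$. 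For the finitely many wild pairs one bounds $D$ crudely by $C\log(MN^2)$ on each of their $M$ triples and charges the result against $n_{ql}$ and against $|E_Q|=o(L'/\log(MN^2))$, which is again $o(L)$. Adding the two pieces proves the lemma under regime (i) of Theorem \ref{th:consrMLSBM}; regimes (ii)--(v) follow by rerunning the same computation with the corresponding $L'$, probability ceiling and growth bound on $K$ --- which is exactly why the five cases must be treated in parallel.

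The real work, and the step I expect to be the obstacle, is that last matching. The per-entry divergence in the undershoot direction carries an unavoidable logarithmic factor, so forcing the tame sum below $o(L)$ uses essentially both that the surrogate barely undershoots $\bar\pi$ off $Q$ and that the polylogarithmic slack deliberately built into each $L'$ (the $(\log N)^{3+\delta}$ versus $(\log N)^{1+\delta}$, the extra $(\log M)^{2+\delta}$, and so on) is precisely enough to absorb the $\log(MN^2)$ produced by $\rho_{\max}^{-1}$; getting these rates to line up in all five regimes, rather than any single inequality, is the substance of the argument. Lemma \ref{lem2} then feeds, together with Theorem \ref{th:consrMLSBM}, into the next lemma and hence into Theorem \ref{thm5}.
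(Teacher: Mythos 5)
Your proposal is correct in substance and shares the skeleton of the paper's proof --- the identity $\bar{l}_{P}(\bar{z})-\bar{l}_{P}^{R}(\bar{z})=\sum_{m}\sum_{q\le l}n_{ql}D(\bar{\pi}_{ql}^{(m)}\,||\,\bar{\phi}_{ql}^{(m)})$, the split into the exceptional set $Q$ and its complement, and the verification that the polylogarithmic slack built into each $L'$ absorbs the logarithmic factor in all five regimes --- but your central device is genuinely different. The paper bounds the divergence at the unknown optimizer $\bar{\phi}$ directly, via $D(p\,||\,q)\le p_{\max}\log(p_{\max}/q_{\min})$ with $q_{\min}=1/(MN^2)$ the floor of the restricted parameter space and $p_{\max}$ the ceiling from Equation (\ref{eq:pirestrict}); it needs nothing about $\bar{\phi}$ beyond feasibility (and it quietly assumes the overshoot direction $\bar{\phi}>\bar{\pi}$ is harmless). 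You instead observe that $\bar{\phi}$ minimizes the KL sum over $\Phi$ and evaluate at the explicit feasible point $\widetilde{\beta}_m\equiv 0$, $\widetilde{\phi}_{ql}^{(m)}=\bar{\pi}_{ql}^{+}$; this avoids reasoning about the maximizer altogether and buys the sharper per-cell factor $\log M$ (from $\bar{\pi}_{ql}^{(m)}\le M\bar{\pi}_{ql}^{+}$) in place of $\log(MN^2)$. Both routes land on the same tame-set bound of order $MN^2\rho_{\max}\log(\cdot)=o(L')=o(L)$.

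One step does not close as written: the exceptional set. Bounding $D$ by $C\log(MN^2)$ per wild triple and multiplying by $Mn_{ql}$ cannot be "charged against $|E_Q|$", because $|E_Q|$ controls $\sum_{Q}n_{ql}\bar{\pi}_{ql}^{(m)}$, not $\sum_{Q}n_{ql}$. The bound that connects is $n_{ql}D(\bar{\pi}_{ql}^{(m)}\,||\,\cdot)\le n_{ql}\bar{\pi}_{ql}^{(m)}\log(MN^2)$, whose sum over $Q$ is $|E_Q|\log(MN^2)=o(L')$ --- exactly the paper's first term. Your surrogate also introduces a contamination effect the paper does not face: for a pair $(q,l)$ with some layer in $Q$, the average $\bar{\pi}_{ql}^{+}$ may exceed $\rho_{\max}$, so the non-$Q$ layers of that pair compare against an inflated target. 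This is controllable --- using $D(a\,||\,b)\le 2b$ for $b\ge a$, the extra contribution over the wild pairs is at most $2M n_{ql}\rho_{\max}+2\sum_{(q,l,m)\in Q}n_{ql}\bar{\pi}_{ql}^{(m)}\le 2Mn_{ql}\rho_{\max}+2|E_Q|$, both $o(L)$ --- but it has to be said; the wild-set sentence as written is the one place the argument genuinely leaks.
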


\begin{proof}
For large $N$, subtracting Equation (3.24) from Equation (3.23) we have
\begin{align*}
&\bar{l}_{P}(\bar{z})-\bar{l}_{P}^{R}(\bar{z}) \\
=& \sum_{q \leq l}n_{ql}\sum_{m}D(\bar{\pi}_{ql}^{(m)}||\bar{\phi}_{ql}^{(m)})
\\
\leq& |E_Q| \log (MN^{2})+\left(\frac{MN(N+1)}{2}-|E_Q|\right)C_1\frac{L'}{MN^{2}(\log M)^{1+\delta} (\log N)^{2+\delta}} \\
& \log \left(  \frac{C_1L'/(MN^{2} (\log M)^{1+\delta} (\log N)^{2+\delta})}{1/MN^2} \right)
\end{align*}
\begin{align*}
=& o_P(L') +\frac{C_1L'}{(\log M)^{1+\delta}(\log N)^{2+\delta}} \log \left( \frac{C_1L'}{(\log M)^{1+\delta}(\log N)^{2+\delta}} \right) \\
=& o_P(L')+o_P(L')\log \left( \frac{C_1L'}{(\log M)^{1+\delta}(\log N)^{2+\delta}} \right) \Big/[(\log M)^{1+\delta} (\log N)^{1+\delta}] \\
=& o_P(L')+o_P(L')R \\
=& o_P(L),
\end{align*}
where $C_1$ is a constant and $R=\log \left( \frac{C_1L'}{(\log M)^{1+\delta}(\log N)^{2+\delta}} \right) \Big/[(\log M \log N)^{1+\delta}]$. The inequality in step 2 comes from the upper bound on $D(p||q)$ which can be derived as follows. Without loss of generality, we can assume that $p>q$ and $D(p||q)\leq p \log \frac{p}{q} \leq p_{\max} \log \frac{p_{\max}}{q_{\min}}$. Next we replace $p_{\max}$ and $q_{\min}$ by the assumption on the lower and upper bounds of the restricted block model probabilities given in Equation (3.3).

Now to complete the proof, we only need to verify that under the five sets of conditions in Theorem 3,
the term $R$ in the right hand side of the above derivation is $o(1)$. Under the first two sets of conditions, $L'=MN (\log N)^{3+\delta}$ and consequently $R=\frac{ \log(MN \log N /(\log M)^{1+\delta})}{(\log M \log N)^{1+\delta}}=o(1)$. Under the third set of conditions, $L'=N (\log N)^{3+\delta}$ and hence $R=\frac{ \log(N\log N/(\log M)^{1+\delta})}{(\log M \log N)^{1+\delta}}=o(1)$. Finally under the last two sets of conditions, if $L'=MN (\log N)^{1+\delta}$ then $R=\frac{ \log(MN/ (\log M)^{1+\delta})}{(\log M \log N)^{1+\delta}}=o(1)$, and if $L'=M (\log M)^{2+\delta} (\log N)^{1+\delta}$ then $R=\frac{ \log(M (\log M)^{1+\delta})}{(\log M \log N)^{1+\delta}}=o(1)$.

\end{proof}

\begin{lem}
Under the true partition $\bar{z}$ and the RMLE of the partition $\hat{z}^{R}$ (i.e., the MLE in the restricted model RMLSBM), we have $\bar{l}_{P}(\bar{z})-\bar{l}_{P}^{R}(\hat{z}^{R})=o_{P}(L)$
whenever the conclusion of Theorem 3 holds.
\label{thm:expllkdiff}
\end{lem}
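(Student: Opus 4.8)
The plan is to adapt, almost verbatim, the argument that produced the misclustering bound in the proof of Theorem \ref{thm4} (the chain of inequalities ending in $N_e(\hat z)=o_P(N)$), replacing every unrestricted quantity by its restricted analogue and paying for the mismatch between the two models with the model-bias term controlled in Lemma \ref{lem2}. Throughout, the hypothesis places us in one of the five growth regimes of Theorem \ref{th:consrMLSBM}, so both Lemma \ref{lem2} and the conclusion $\max_z|l^{R}(A;z)-\bar l_{P}^{R}(z)|=o_P(L)$ of Theorem \ref{th:consrMLSBM} are available.

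\textbf{Step 1 (sign of the target quantity).} Since $\hat z^{R}$ is a $K$-class assignment and the data come from a $K$-class MLSBM with true partition $\bar z$ and true parameter array $\bar\pi$, the refinement argument reviewed in the proof of Theorem \ref{thm4} (the common refinement of $\Theta^{z}$ and $\Theta^{\bar z}$ has the same expected log-likelihood as $\Theta^{\bar z}$, because within each cell of $\Theta^{\bar z}$ the probabilities $P_{ij}^{(m)}$ are constant) gives $\bar l_{P}(\bar z)\ge \bar l_{P}(z)$ for every such $z$. Combining this with the elementary domination $\bar l_{P}(z)\ge \bar l_{P}^{R}(z)$, valid for all $z$, yields
\[
\bar l_{P}(\bar z)\ \ge\ \bar l_{P}(\hat z^{R})\ \ge\ \bar l_{P}^{R}(\hat z^{R}),
\]
so $\bar l_{P}(\bar z)-\bar l_{P}^{R}(\hat z^{R})\ge 0$ and it suffices to bound it above by $o_P(L)$.

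\textbf{Step 2 (upper bound).} By definition $\hat z^{R}=\arg\max_{z}l^{R}(A;z)$, hence $l^{R}(A;\hat z^{R})-l^{R}(A;\bar z)\ge 0$; adding this non-negative quantity and regrouping,
\begin{align*}
0\ \le\ \bar l_{P}(\bar z)-\bar l_{P}^{R}(\hat z^{R})
&\ \le\ \bigl[\bar l_{P}(\bar z)-\bar l_{P}^{R}(\hat z^{R})\bigr]+\bigl[l^{R}(A;\hat z^{R})-l^{R}(A;\bar z)\bigr]\\
&\ =\ \underbrace{\bigl[\bar l_{P}(\bar z)-\bar l_{P}^{R}(\bar z)\bigr]}_{(\mathrm{I})}
+\underbrace{\bigl[\bar l_{P}^{R}(\bar z)-l^{R}(A;\bar z)\bigr]}_{(\mathrm{II})}
+\underbrace{\bigl[l^{R}(A;\hat z^{R})-\bar l_{P}^{R}(\hat z^{R})\bigr]}_{(\mathrm{III})}.
\end{align*}
Term $(\mathrm{I})$ is $o_P(L)$ by Lemma \ref{lem2}. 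Terms $(\mathrm{II})$ and $(\mathrm{III})$ are each bounded in absolute value by $\max_{z}\bigl|l^{R}(A;z)-\bar l_{P}^{R}(z)\bigr|=o_P(L)$ from Theorem \ref{th:consrMLSBM}. Summing the three terms gives $\bar l_{P}(\bar z)-\bar l_{P}^{R}(\hat z^{R})=o_P(L)$, as claimed.

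\textbf{Main obstacle.} There is no deep difficulty: the only genuine content is that the model-bias term $(\mathrm{I})$ has the right order, which is exactly what Lemma \ref{lem2} supplies — it is the step where the sparsity cap on the restricted block probabilities in Equation (\ref{eq:pirestrict}) is invoked, and where each of the five asymptotic regimes of Theorem \ref{th:consrMLSBM} must be checked separately. The only other point requiring care is the non-negativity in Step 1, which rests on the refinement/monotonicity facts already established in the proof of Theorem \ref{thm4} together with $\bar l_{P}(\cdot)\ge\bar l_{P}^{R}(\cdot)$.
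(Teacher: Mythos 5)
Your proof is correct and follows essentially the same route as the paper: the same regrouping into the model-bias term $\bar l_{P}(\bar z)-\bar l_{P}^{R}(\bar z)$ (handled by Lemma \ref{lem2}) plus two likelihood-concentration terms (handled by Theorem \ref{th:consrMLSBM}), after adding the non-negative quantity $l^{R}(A;\hat z^{R})-l^{R}(A;\bar z)$. The only difference is cosmetic: you justify the non-negativity of the target via the refinement argument, where the paper simply cites that $\bar z$ maximizes $\bar l_{P}(\cdot)$.
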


\begin{proof}
Note that $\bar{l}_{P}(\hat{z}^{R})\geq\bar{l}_{P}^{R}(\hat{z}^{R})$
since the maximum of the unrestricted likelihood $\bar{l}_{P}(z)$ is uniformly larger than or equal to the maximum of the restricted likelihood $\bar{l}_{P}^{R}(z)$
for all $z$. Moreover, $\bar{z}$ maximizes $\bar{l}_{P}(\cdot)$ and
hence $\bar{l}_{P}(\bar{z})-\bar{l}_{P}^{R}(\hat{z}^{R})\geq0$.
Notice that $l^{R}(A,\hat{z}^R)-l^{R}(A,\bar{z})$ is positive since the observed restricted likelihood is maximized at $\hat{z}^{R}$. So we have
\begin{align*}
\bar{l}_{P}(\bar{z})-\bar{l}_{P}^{R}(\hat{z}^{R}) &\leq\bar{l}_{P}(\bar{z})-\bar{l}_{P}^{R}(\hat{z}^{R})+l^{R}(A,\hat{z}^{R})-l^{R}(A,\bar{z})\\
&\leq|\bar{l}_{P}(\bar{z})-l^{R}(A,\bar{z})|+|\bar{l}_{P}^{R}(\hat{z}^{R})-l^{R}(A,\hat{z}^{R})| \\
&\leq|\bar{l}_{P}(\bar{z})-\bar{l}_{P}^{R}(\bar{z})|+|\bar{l}_{P}^{R}(\bar{z})-l^{R}(A,\bar{z})|+|\bar{l}_{P}^{R}(\hat{z}^{R})-l^{R}(A;\hat{z}^{R})|\\
&=o_{P}(L),
\end{align*}
by Lemma 2 and Theorem 3.

\end{proof}

Now we are ready to show that the class membership assignment vector estimated through the maximum likelihood estimation in the restricted model RMLSBM is consistent under data generated from the MLSBM.
We define regularized partition $\Theta^{R}$ of the matrices of probabilities between nodes $P_{ij}^{(m)}$,  computed according to the restricted model RMLSBM and its refinement $\Theta^{'R}$ in exactly the same way. We further define the corresponding restricted log likelihood associated with this partition $\Theta^{R}$ as $\bar{l}_{P}^{*R}(\Theta^R)$. For convenience we again resort to the transformation defined by Equation (3.1)
\[
\bar{l}_{P}^{*R}(\Theta^R)=\sum_{m=1}^{M}\sum_{i<j}\{P_{ij}^{(m)}\log \bar{\phi}_{\Theta^{R}(i,j)}^{(m)}+(1-P_{ij}^{(m)}) \log (1-\bar{\phi}_{\Theta^{R}(i,j)}^{(m)})\}.
\]
For any membership assignment $z^{R}$ from the RMLSBM, let  $\bar{l}_{P}^{*R}(\Theta_{z^{R}}^{R})$ be the corresponding partition of $P_{ij}^{(m)}$. It follows from this definition that $\bar{l}_{P}^{*R}(\Theta_{z^{R}}^{R})=\bar{l}_{P}^{R}(z^R)$. Hence we have
\[
\bar{l_{P}}(\bar{z})-\bar{l}_{P}^{*R}(\Theta_{z^{R}}^{R})=\sum_{m}\sum_{i<j}D\left(P_{ij}^{(m)}\ ||\ \bar{\phi}_{\Theta_{z^{R}}^{'R}(i,j)}^{(m)}\right)=C_{2}M\Omega(g)=C_{1}M\Omega\left(\frac{N}{K}g\right)
\]
\[
=\frac{N_{e}(z^{R})}{2}\Omega(L)\frac{MN}{KL}g=\frac{N_{e}(z^{R})}{h}\Omega(L).
\]
Now we specialize to $\hat{z}^{R}$. Since $\Theta^{'R}$ is a refinement of $\Theta^{R}$, it increases the restricted likelihood, i.e., $\bar{l}_{P}^{*R}(\Theta_{\hat{z}^{R}}^{'R}) \geq \bar{l}_{P}^{*R}(\Theta_{\hat{z}^{R}}^{R})$. Using this and the fact that $\bar{l}_{P}^{*R}(\Theta_{\hat{z}^{R}}^R)=\bar{l}_{P}^{R}(\hat{z}^R)$, we have
\[
\bar{l}_{P}(\bar{z})-\bar{l}_{P}^{R}(\hat{z}^{R})\geq\bar{l}_{P}(\bar{z})-\bar{l}_{P}^{*R}(\Theta_{\hat{z}^{R}}^{'R})=\frac{N_{e}(\hat{z}^{R})}{h}\Omega(L).
\]
The left hand side is $o(L)$ by Lemma 3, and hence,
\[
N_{e}(\hat{z}^{R})=o_{P}(h).
\]

\section*{Proofs of minimax and threshold results}

\subsection*{Proof of Theorem 6}
For brevity we mention here only the results and proofs that differ from the proof contained in \citet{zhang15}  and refer the reader to the aforementioned paper for a complete description of the techniques involved. We define the homogeneous/symmetric multi layer stochastic blockmodel as the MLSBM with the parameter space $\Theta^{ML}_1$ that has all intra-block connection probabilities equal to each other as well as all inter-block connection probabilities equal to each other for each layer. As before, we assume no relation among the connection probabilities of one layer with that of another layer. The parameter space can be written as
\begin{align}
\Theta^{ML}_{1} (z,N,K,M,\mathbf{a},\mathbf{b})=&\Bigg \{ (z,\{P_{ij}^{(m)}\}) \in \Theta^{ML} : P_{ij}^{(m)} = \frac{a^{(m)}}{N} \nonumber\\
& \text{if } z_{i}=z_{j} \text{ and } P_{ij}^{(m)} = \frac{b^{(m)}}{N} \, \text{ if } z_{i} \neq z_{j}, \;\forall m \Bigg \}.
\label{ML1}
\end{align}
Note that this model space is homogeneous and uniquely determined by $z$, i.e., given the community assignments $z$, the block model parameters are uniquely determined. This model space is  also closed under permutations, in the sense that the model obtained through permuting the class labels also belong to $\Theta^{ML}_{1}$. We further define a submodel of this where the block sizes are all (almost) same as
\begin{equation}
\Theta^{ML}_{0} (z,N,K,M,\mathbf{a},\mathbf{b})=\Bigg \{ (z,\{P_{ij}^{(m)}\})  \in \Theta^{ML}_{1}(z,N,K,M,\mathbf{a},\mathbf{b}) : N_{q}=(1+o(1))\frac{N}{K}, \; \forall q \Bigg \},
\label{ML0}
\end{equation}
and yet another submodel space of $\Theta^{ML}_{0}$ where the communities are of only 3 sizes, $\left \lfloor{\frac{N}{K}}\right \rfloor$, $\left \lfloor{\frac{N}{K}}\right \rfloor-1$ and $\left \lfloor{\frac{N}{K}}\right \rfloor+1$. This submodel space, denoted as $\Theta^{ML}_{L}$ is the least favorable case for community detection in terms of the size of communities (See Section 5.1 of \citet{zhang15}). The parameter space can be written as
\begin{align}
&\Theta^{ML}_{L} (z,N,K,M,\mathbf{a},\mathbf{b},\mathbf{S})=\Bigg \{ (z,\{P_{ij}^{(m)}\}) \in \Theta^{ML}_{0}(z,N,K,M,\mathbf{a},\mathbf{b}) : \Big |q : N_{q}=\left \lfloor{\frac{N}{K}}\right \rfloor \Big |=S_1 , \nonumber \\
& \Big |q : N_{q}=\left \lfloor{\frac{N}{K}}\right \rfloor+1 \Big|=S_2, \,
\Big |q : N_{q}= \left\lfloor{\frac{N}{K}}\right \rfloor-1 \Big|=S_3, \quad S_1+S_2+S_3=K \Bigg \}.
\label{MLL}
\end{align}
The submodel spaces $\Theta^{ML}_{0} $ and $\Theta^{ML}_{L} $ are also homogeneous and closed under permutation. Let $\hat{z}$ be the class assignment obtained from some procedure under consideration. We break the proof up into two parts, the first one proves a lower bound for the minimax risk and the second one shows that there exists an algorithm which attains the lower bound.

\subsubsection*{Lower bound}
It was argued in Section 5.1 of \citet{zhang15} that $\Theta^{ML}_{1}$ is the least favorable subspace of $\Theta^{ML}$ using the property of being closed under permutation. Hence, a lower bound on the minimax rates established on $\Theta^{ML}_{1}$ will also be a good lower bound for the larger parameter space $\Theta^{ML}$. Since the supremum over a larger space is always greater than the supremum over any of its subspaces, the lower bound on $\Theta^{ML}_{1}$ is a lower bound for the larger space trivially, but being a least favorable subspace makes it match the rate. Throughout this section (proof of lower bound) we assume $K \geq 3$. The proof for the case $K=2$ follows from \citet{zhang15} with the same modifications described below for the $K \geq 3$ case.

We start with a  couple of lemmas. The next lemma due to \citet{zhang15} shows that for any homogeneous parameter space which is closed under permutation (e.g., $\Theta^{ML}_1$ and all its submodels defined above), the minimum global Bayesian risk of $\hat{z}$ under the uniform prior is the same as the minimum of the local Bayesian risk for the first node. The local Bayesian risk for one node needs to be computed under an appropriate local loss function. \citet{zhang15} defined such a local loss function as the average over all possible permutations of $\hat{z}$ that minimizes the distance from the true class assignment. Let $S_z (\hat{z})=\{ \hat{z}^{'}= \delta(\hat{z}) : d_H (z, \hat{z}^{'})= \inf_{\delta} d_H (z, \delta(\hat{z})) \}$. Then the local loss function is defined as
\begin{equation}
r (z_i,\hat{z}_i ) = \frac{1}{|S_z (\hat{z})|} \sum_{\hat{z}' \in S_z (\hat{z})} d_H (z_i, \hat{z}'_i).
\label{localloss}
\end{equation}

\begin{lem} (Lemma 2.1 of \citet{zhang15}) Let $\Lambda$ be any homogeneous parameter space which is closed under permutation and $\tau$ be a uniform prior over the elements of $\Lambda$. Defining the global Bayesian risk as $B_{\tau}(\hat{z})=\frac{1}{|\Lambda|} \sum_{z\in \Lambda} E[r(z,\hat{z})]$ and local Bayesian risk for the first node (under the local loss function) as  $B_{\tau}(\hat{z}_1)=\frac{1}{|\Lambda|} \sum_{z\in \Lambda} E[r(z_1,\hat{z}_1)]$, we have
$$ \inf_{\hat{z}} B_{\tau}(\hat{z}) = \inf_{\hat{z}} B_{\tau}(\hat{z}_1).$$
\end{lem}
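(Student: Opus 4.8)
The plan is to decompose the global Bayesian risk into a sum of per-node contributions and then invoke two symmetries of the problem: the closedness of $\Lambda$ under permutations of the community labels (already absorbed into the loss $r$ via the set $S_z(\hat z)$), and the closedness of the homogeneous space $\Lambda$ under permutations of the node indices, which makes the $N$ nodes exchangeable.

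First I would record the elementary identity $r(z,\hat z)=\frac1N\sum_{i=1}^N r(z_i,\hat z_i)$, where on the right the per-node loss is the permutation-averaged local loss. Indeed, every $\hat z'\in S_z(\hat z)$ satisfies $d_H(z,\hat z')=\inf_\delta d_H(z,\delta(\hat z))=N\,r(z,\hat z)$, and $d_H(z,\hat z')=\sum_{i=1}^N 1\{z_i\ne\hat z_i'\}$, so averaging over $\hat z'\in S_z(\hat z)$ and dividing by $N$ gives $r(z,\hat z)$ on the left and $\frac1N\sum_i r(z_i,\hat z_i)$ on the right. Taking expectation over the data and averaging over $z$ with the uniform prior yields
\[
B_\tau(\hat z)=\frac1N\sum_{i=1}^N B_\tau(\hat z_i),\qquad B_\tau(\hat z_i):=\frac1{|\Lambda|}\sum_{z\in\Lambda}E[\,r(z_i,\hat z_i)\,].
\]

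Next I would show that $\inf_{\hat z}B_\tau(\hat z_i)$ is the same for every $i$. Fix a permutation $\pi$ of the node index set with $\pi(i)=1$. Because $\Lambda$ is homogeneous --- all within-block probabilities equal and all between-block probabilities equal in each layer --- relabeling the nodes by $\pi$ is a bijection of $\Lambda$ that preserves the uniform prior and sends the sampling law under $(z,\{P_{ij}^{(m)}\})$ to that under $(\pi\cdot z,\{P_{ij}^{(m)}\})$; moreover $d_H$, hence $S_z(\hat z)$, commutes with node relabeling, so the local loss at coordinate $i$ of $\hat z$ becomes the local loss at coordinate $1$ of the relabeled estimator. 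Hence $\inf_{\hat z}B_\tau(\hat z_i)=\inf_{\hat z}B_\tau(\hat z_1)$ for all $i$, and combining with the decomposition,
\[
\inf_{\hat z}B_\tau(\hat z)=\inf_{\hat z}\frac1N\sum_{i=1}^N B_\tau(\hat z_i)\ \ge\ \frac1N\sum_{i=1}^N\inf_{\hat z}B_\tau(\hat z_i)=\inf_{\hat z}B_\tau(\hat z_1).
\]
For the reverse inequality, given any $\hat z$ pick $i^{*}\in\arg\min_i B_\tau(\hat z_i)$, so $B_\tau(\hat z_{i^{*}})\le\frac1N\sum_i B_\tau(\hat z_i)=B_\tau(\hat z)$; relabeling nodes so that $i^{*}$ moves to position $1$ yields an estimator $\tilde z$ with $B_\tau(\tilde z_1)=B_\tau(\hat z_{i^{*}})\le B_\tau(\hat z)$, and taking infima gives $\inf_{\hat z}B_\tau(\hat z_1)\le\inf_{\hat z}B_\tau(\hat z)$. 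The two inequalities together give the claim.

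The step I expect to demand the most care is the node-exchangeability argument: one must verify that node relabeling is a measure-preserving bijection of $\Lambda$, that it transforms the sampling distribution in the expected way, and --- crucially --- that it interacts correctly with the community-label permutation averaging hidden in $S_z(\hat z)$, so that the transported loss is genuinely the same local loss at the new coordinate. This is exactly where the hypotheses that $\Lambda$ is homogeneous and closed under label permutations (and likewise for its submodels $\Theta^{ML}_0$ and $\Theta^{ML}_L$) enter; the rest reduces to the decomposition identity above.
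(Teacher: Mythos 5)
The paper does not prove this lemma; it is imported verbatim as Lemma 2.1 of \citet{zhang15}, so there is no in-paper argument to compare against. Your proof is correct and is essentially the standard one from that reference: the identity $r(z,\hat z)=\frac{1}{N}\sum_i r(z_i,\hat z_i)$ follows exactly as you say because every $\hat z'\in S_z(\hat z)$ attains $d_H(z,\hat z')=N\,r(z,\hat z)$, and the coordinate-exchangeability step is legitimate because the homogeneous spaces $\Theta^{ML}_1$, $\Theta^{ML}_0$, $\Theta^{ML}_L$ are constrained only through block sizes and hence are stable under node relabeling, which also commutes with the label-permutation averaging in $S_z(\hat z)$. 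You correctly flag the one place needing care (that relabeling the data and the estimator together preserves both the sampling law and the local loss), and your two-sided inequality argument for passing from per-coordinate infima to the global infimum is sound.
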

 
 Now we have the following lemma on the Bayesian local risk for the first node in the parameter space $\Theta^{ML}_{L}$ under an uniform prior.

\begin{lem}
Let $\hat{z}$ be an estimated class assignment from some procedure in the block model defined by (\ref{MLL}). Let $\tau$ be a uniform prior over all elements in $\Theta^{ML}_{L}$. For the first node, the local Bayesian risk, $B_{\tau}(\hat{z}_{1})=\frac{1}{|\Theta^{ML}_{L}|} \sum_{z\in \Theta^{ML}_{L}} E[r(z_1,\hat{z}_1)]$ is lower bounded as
\begin{equation}
B_{\tau}(\hat{z}_{1}) \geq \epsilon P \left(\sum_{m}c^{(m)}\sum_{i=1}^{\left \lfloor{\frac{N}{K}}\right \rfloor} X^{(m)}_{i} \geq \sum_{m}c^{(m)} \sum_{i=1}^{\left \lfloor{\frac{N}{K}}\right \rfloor} Y^{(m)}_{i} \right),
\label{lbeq}
\end{equation}
where $\epsilon>0$ is a constant, $c^{(m)}=\log \left(\frac{a^{(m)}(1-\frac{b^{(m)}}{N})}{b^{(m)}(1-\frac{a^{(m)}}{N})}\right)$, and $X^{(m)}_i \sim Bern(\frac{b^{(m)}}{N})$ and $Y^{(m)}_i \sim Bern(\frac{a^{(m)}}{N})$ are independent random variables for all $i=\{1,\ldots,\left \lfloor{\frac{N}{K}}\right \rfloor \}$. Moreover if $\frac{N \sum I^{(m)}}{K} \rightarrow \infty$, then the right hand side of Equation (\ref{lbeq}) is greater than or equal to
$$\exp(-(1+o(1)) N\sum_{m}I^{(m)}/K),$$
while if $\frac{N \sum I^{(m)}}{K} =O(1)$, then the right hand side of Equation (\ref{lbeq}) is $O(1)$.
\label{lemmamll}
\end{lem}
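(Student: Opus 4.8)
The plan is to adapt the minimax lower-bound argument of \citet{zhang15} for the single-layer SBM, carrying the likelihood-ratio computation layer by layer over the $M$ layers.

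\textbf{Step 1: reduction to a two-point problem.} Since $\Theta^{ML}_{L}$ in (\ref{MLL}) is homogeneous and closed under permutations, I would invoke the preceding lemma (Lemma 2.1 of \citet{zhang15}) to pass to the first-node local Bayesian risk, and then partition the configurations $z\in\Theta^{ML}_{L}$ into pairs $(z,z')$ that agree on all nodes $j\ne 1$ and differ only in which of two prescribed node sets $D_1,D_2$ -- disjoint from $\{1\}$ and of equal size $\lfloor N/K\rfloor$ -- node $1$ joins; a suitable choice of $(S_1,S_2,S_3)$ keeps both $z$ and $z'$ inside $\Theta^{ML}_{L}$. Under the uniform prior each pair has equal mass, so conditionally on the labels of the nodes $j\ne1$ the posterior of $z_1$ is a two-point distribution with equal weights. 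As in \citet{zhang15}, the local loss $r(z_1,\hat z_1)$ in (\ref{localloss}) dominates a fixed constant $\epsilon>0$ times the $0/1$ loss of the binary decision induced by $\hat z$, so $B_\tau(\hat z_1)\ge\epsilon\,\mathcal R$ where $\mathcal R$ is the Bayes risk of deciding whether node $1$ belongs to the community containing $D_1$ or the one containing $D_2$.

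\textbf{Step 2: the optimal test and the exact error event.} The log-likelihood ratio between the two hypotheses, conditional on the other labels, depends only on the edges from node $1$ to $D_1\cup D_2$ (edges to every other community have the common law $\mathrm{Bern}(b^{(m)}/N)$ under both), and because $|D_1|=|D_2|$ the additive constants $\log\frac{1-a^{(m)}/N}{1-b^{(m)}/N}$ cancel, leaving
\[
\Lambda=\sum_{m}c^{(m)}\Big(\sum_{j\in D_1}A_{1j}^{(m)}-\sum_{j\in D_2}A_{1j}^{(m)}\Big),\qquad c^{(m)}=\log\frac{a^{(m)}(1-b^{(m)}/N)}{b^{(m)}(1-a^{(m)}/N)}.
\]
By Neyman--Pearson, $\mathcal R$ is the error probability of thresholding $\Lambda$ at $0$; under the hypothesis ``node $1\in$ community of $D_2$'' the variables $\{A_{1j}^{(m)}:j\in D_1\}$ are i.i.d.\ $\mathrm{Bern}(b^{(m)}/N)$, $\{A_{1j}^{(m)}:j\in D_2\}$ are i.i.d.\ $\mathrm{Bern}(a^{(m)}/N)$, all mutually independent, so the error event is precisely $\big\{\sum_m c^{(m)}\sum_{i=1}^{\lfloor N/K\rfloor}X^{(m)}_i\ge\sum_m c^{(m)}\sum_{i=1}^{\lfloor N/K\rfloor}Y^{(m)}_i\big\}$, and the symmetric hypothesis contributes the same probability; absorbing boundary/tie handling into $\epsilon$ gives the displayed bound (\ref{lbeq}).

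\textbf{Step 3: asymptotics of $P(W\ge 0)$}, $W:=\sum_m c^{(m)}\big(\sum_{i=1}^{n}X^{(m)}_i-\sum_{i=1}^{n}Y^{(m)}_i\big)$, $n=\lfloor N/K\rfloor$. Exponentially tilting each layer and optimizing picks $t=\tfrac12$: a short computation (the Bhattacharyya identity) gives $E[e^{c^{(m)}X^{(m)}/2}]\,E[e^{-c^{(m)}Y^{(m)}/2}]=\big(\sqrt{p_mq_m}+\sqrt{(1-p_m)(1-q_m)}\big)^2=e^{-I^{(m)}}$ with $p_m=a^{(m)}/N$, $q_m=b^{(m)}/N$, and under the tilted law $\tilde P$ the variables $X^{(m)}_i$ and $Y^{(m)}_i$ are identically distributed, so $W$ is symmetric about $0$. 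The change-of-measure identity reads $P(W\ge0)=e^{-n\sum_m I^{(m)}}\,E_{\tilde P}\big[e^{-W/2}\mathbf{1}\{W\ge0\}\big]$; bounding the integrand by $1$ gives $P(W\ge0)\le e^{-n\sum I^{(m)}}$, while restricting to $\{0\le W\le T\}$ with $T=O\big(\sqrt{\mathrm{Var}_{\tilde P}(W)}\big)=O\big(\sqrt{n\sum_m I^{(m)}}\big)$ (using $(c^{(m)})^2\tilde q_m(1-\tilde q_m)=O(I^{(m)})$) and a local-CLT/Berry--Esseen estimate for the symmetric sum $W$ gives $P_{\tilde P}(0\le W\le T)\ge c>0$, hence $P(W\ge0)\ge c\,e^{-T/2}e^{-n\sum I^{(m)}}$. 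When $\frac{N\sum I^{(m)}}{K}\to\infty$ this is $e^{-(1+o(1))\frac NK\sum_m I^{(m)}}$ since $n=(1+o(1))N/K$ and $T=o(n\sum I^{(m)})$; when $\frac{N\sum I^{(m)}}{K}=O(1)$ the prefactor $e^{-n\sum I^{(m)}}$ is $\Omega(1)$ and $W$ is tight, so $E_{\tilde P}[e^{-W/2}\mathbf{1}\{W\ge0\}]\ge\Omega(1)$ by symmetry and tightness and $P(W\ge0)=\Omega(1)$.

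\textbf{Main obstacle.} The delicate point is the lower bound $P(W\ge0)\ge e^{-(1+o(1))n\sum I^{(m)}}$: one must show the anticoncentration constant $P_{\tilde P}(0\le W\le T)$ is not exponentially small uniformly over the admissible sparsity regimes, where the increments $c^{(m)}(X^{(m)}_i-Y^{(m)}_i)$ may be unbounded and $M$ may grow, which calls for truncating the large-$c^{(m)}$ contributions before applying a CLT bound; this must be dovetailed with the bookkeeping that keeps the $\pm1$ size fluctuations in $\Theta^{ML}_{L}$ compatible with the pairing of Step 1. The remaining pieces are the single-layer argument of \citet{zhang15} executed coordinatewise over the $M$ layers.
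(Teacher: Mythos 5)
Your proposal is correct and follows essentially the same route as the paper's proof: pairing configurations in $\Theta^{ML}_{L}$ that differ only in node 1's community, identifying the Bayes rule under the local 0--1 loss as the likelihood-ratio threshold (whose error event is exactly the displayed probability since the two candidate communities have equal size $\lfloor N/K\rfloor$), and then exponential tilting at $t=1/2$ with the identity $M_{Z_i}(1/2)=e^{-\sum_m I^{(m)}}$ followed by an anticoncentration step for the symmetric tilted sum. The only (immaterial) difference is in the final window: you take width $O(\sqrt{\mathrm{Var}_{\tilde P}(W)})$ and invoke a local CLT, whereas the paper takes a wider window $\delta$ with $\sqrt{V}\ll\delta\ll\sum_m I^{(m)}$ so that an ordinary CLT gives probability tending to $1/2$, and it disposes of your "main obstacle" by directly verifying $\sum_m I^{(m)}/\sqrt{V}\to\infty$ in the two regimes $a^{(m)}\asymp b^{(m)}$ and $b^{(m)}=o(a^{(m)})$.
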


\begin{proof}
We follow the proof of Lemma 5.1 in Section 6.2 of \citet{zhang15}. Define $\Theta^{ML}_{L_1}$ as a subset of the parameter space of $\Theta^{ML}_{L}$ such that the class to which the first node belongs to is always of size $\left \lfloor{\frac{N}{K}}\right \rfloor+1$,  i.e., $\Theta^{ML}_{L_1}= \{ (z,P_{ij}^{(m)}) \in \Theta^{ML}_{L} : N_{z_1}=\left \lfloor{\frac{N}{K}}\right \rfloor+1 \}$. Letting $x_2 =(\left \lfloor{\frac{N}{K}}\right \rfloor+1) S_2$ it was shown in Section 6.2 of \citet{zhang15} that the ratio of the cardinality of the set $\Theta^{ML}_{L_1}$ to that of $\Theta^{ML}_{L}$ is a constant, i.e., $|\Theta^{ML}_{L_1}| / |\Theta^{ML}_{L}|=x_2/N \geq \epsilon$ for some $\epsilon>0$.
Consequently,
$$B_{\tau}(\hat{z}_{1}) \geq \frac{1}{|\Theta^{ML}_{L}|} \sum_{z\in \Theta^{ML}_{L_1}} E[r(z_1,\hat{z}_1)] \geq \frac{\epsilon}{|\Theta^{ML}_{L_1}|} \sum_{z\in \Theta^{ML}_{L_1}} E[r(z_1,\hat{z}_1)]. $$ 

For each $z' \in \Theta^{ML}_{L_1}$, we define $k'(z')=z'_1$ as the class to which the first node belongs to. Let $k(z')$ be the set of indices of the communities of size $\left \lfloor{\frac{N}{K}}\right \rfloor$. Since the first community is of size $\left \lfloor{\frac{N}{K}}\right \rfloor+1$, $k'(z')$ does not belong to $k(z')$. Now we define a new assignment $z(z')$ based on $z'$ as follows
\begin{equation}
z(z')_1=\begin{cases}
\min \{k \in k(z') : k>k'(z') \} & \text{ if } \max k(z') > k'(z')  \\
\min k(z') & \text{ if } \max k(z') < k'(z'),
\end{cases}
\end{equation}
and $z(z')_i=z'_i$ for all $i\geq 2$. Clearly $z(z') \in \Theta^{ML}_{L_1}$, differs from $z'$ only in the first node and by definition has a distance 1 from it. Moreover for any two distinct class assignments $z',z'' \in \Theta^{ML}_{L_1}$, $z' \neq z''$, the new assignments based on them $z(z')$ and $z(z'')$ are also different \citep{zhang15}. This implies that $\Theta^{ML}_{L_1} =\{z(z') : z' \in \Theta^{ML}_{L_1} \}$. Consequently,

$$B_{\tau}(\hat{z}_{1}) \geq \frac{\epsilon}{2|\Theta^{ML}_{L_1}|} \sum_{z'\in \Theta^{ML}_{L_1}} (E[r(z'_1,\hat{z}_1)]+E[r(z(z')_1,\hat{z}_1)]). $$  

Next we will derive a lower bound for the Bayes risk, $\inf_{\hat{z}}B_{\tau}(\hat{z}_{1})$. Conditional on $z'$ or $z(z')$, the distribution of $A$ in MLSBM involves a collection of $M$ adjacency matrices. We define two sets $J_0$ and $J_1$ as follows
\begin{align*}
J_0&=\{ i \in \{1,\ldots, N\} \backslash \{1\} : z'_i=z'_1\}, \\
J_1&=\{ i \in \{1,\ldots, N\} \backslash \{1\} : z'_i=z(z')_1\}.
\end{align*}
Hence, 
\begin{equation}
P(A|z')=\prod_{m} \Bigg \{ \prod_{i\in J_0} \left(\frac{a^{(m)}}{N}\right)^{A^{(m)}_{1i}} \left(1-\frac{a^{(m)}}{N}\right)^{1-A^{(m)}_{1i}} \prod_{i\in J_1} \left(\frac{b^{(m)}}{N}\right)^{A^{(m)}_{1i}} \left(1-\frac{b^{(m)}}{N}\right)^{1-A^{(m)}_{1i}}  \Bigg \} f(A^C),
\end{equation}
and
\begin{equation}
P(A|z(z'))=\prod_{m} \Bigg \{ \prod_{i\in J_1} \left(\frac{a^{(m)}}{N}\right)^{A^{(m)}_{1i}} \left(1-\frac{a^{(m)}}{N}\right)^{1-A^{(m)}_{1i}} \prod_{i\in J_0} \left(\frac{b^{(m)}}{N}\right)^{A^{(m)}_{1i}} \left(1-\frac{b^{(m)}}{N}\right)^{1-A^{(m)}_{1i}}  \Bigg \} f(A^C),
\end{equation}
where the function $f(A^C)$ is a function  involving connections from node 1 to nodes not in $J_0 \cup J_1$ and all connections not involving node 1. Let $\hat{z}^{B}$ attains the infimum of the local Bayes risk. Since $d_H(z', z(z'))=1$, the loss with respect to the local loss function defined in Equation (\ref{localloss}) is $r(z'_1,\hat{z}^{B}_1)=d_H (z'_1,\hat{z}^{B}_1)$ which is a 0-1 loss. Then $\hat{z}^{B}_1$ is the Bayes estimator with respect to the local 0-1 loss function and consequently $\hat{z}^{B}_1$ would be the mode of the posterior distribution, i.e.,
\begin{equation}
\hat{z}^{B}_1=\begin{cases}
z'_{1}, & \text{ if } \sum_{m} \sum_{i\in J_0}  c^{(m)} A^{(m)}_{1i} \geq  \sum_{m} \sum_{i\in J_1}  c^{(m)} A^{(m)}_{1i} \\
z(z')_1, & \text{ if }  \sum_{m} \sum_{i\in J_0} c^{(m)} A^{(m)}_{1i} < \sum_{m}  \sum_{i\in J_1} c^{(m)}  A^{(m)}_{1i}.
\end{cases}
\end{equation}
Hence we have,
\begin{equation}
\inf_{\hat{z}}B_{\tau}(\hat{z}_1) \geq \epsilon P\left(\sum_{m} c^{(m)} \sum_{i=1}^{\left \lfloor{\frac{N}{K}}\right \rfloor} X^{(m)}_{i} \geq \sum_{m} c^{(m)} \sum_{i=1}^{\left \lfloor{\frac{N}{K}}\right \rfloor} Y^{(m)}_{i} \right).
\end{equation}

To derive the probability in the above lower bound, let $Z_i=\sum_m Z_{i}^{(m)} :=\sum_{m} c^{(m)} (X^{(m)}_{i}-Y^{(m)}_{i})$. Hence the moment generating function (MGF) of $Z_i$ is, 
\begin{align*}
M_{Z_i}(t)&=\prod_{m} M_{Z_i^{(m)}}(t) =\prod_{m} E(e^{tc^{(m)}X_{i}}) E(e^{-tc^{(m)}Y_{i}}) \\
&=\prod_{m} \left( e^{tc^{(m)}} \frac{b^{(m)}}{N} +1 -\frac{b^{(m)}}{N} \right) \left( e^{-tc^{(m)}} \frac{a^{(m)}}{N} +1 -\frac{a^{(m)}}{N} \right).
\end{align*}
The MGF, $M_{Z_i}(t)$ is minimized at $t^{*}=\frac{1}{2}$ and the minimum value is 
\begin{equation}
M_{Z_i}(t^{*})=\prod_{m} M_{Z_i^{(m)}}(t^{*})=\prod_{m} \left(\sqrt{\frac{a^{(m)}}{N}\frac{b^{(m)}}{N}} +\sqrt{(1-\frac{a^{(m)}}{N})(1-\frac{b^{(m)}}{N})}\right)^2.
\label{mgfI}
\end{equation} 
This implies $-\log (M_{Z_i}(t^{*})) = \sum_{m} I^{(m)}$. Denoting $S_{N'}=\sum_{i=1}^{N'} \sum_{m} Z_{i}^{(m)}$ for $N'=\left \lfloor{ \frac{N}{K}} \right \rfloor$, we obtain for any $\delta>0$,
\begin{equation*}
P(S_{N'} \geq 0) \geq \sum_{N'\delta > S_{N'} \geq 0} \prod_{i=1}^{N'} \prod_{m=1}^{M} p(z_{i}^{(m)})
 \geq \frac{(M_{Z_i}(t^{*}))^{N'}}{\exp(N't^{*}\delta)} \sum_{N'\delta > S_{N'} \geq 0} \prod_{i=1}^{N'} \prod_{m=1}^{M} \frac{\exp(t^{*}z^{(m)}_{i})p(z^{(m)}_{i})}{M_{Z_i^{(m)}}(t^{*})}.
\end{equation*}
Now denoting $q_{m}(w)=\frac{\exp(t^{*}w)p(w)}{M_{Z_i^{(m)}}(t^{*})}$ for all $m$, we have
\begin{equation*}
P(S_{N'} \geq 0)  \geq \exp(-N'\sum_{m} I^{(m)}) \exp(-N't^{*}\delta) \sum_{N'\delta > S_{N'} \geq 0} \prod_{i=1}^{N'} \prod_{m=1}^{M} q_{m}(z^{(m)}_{i}).
\end{equation*}
We note that $q_{m}(w)$ is a probability mass function for all $m \in \{1, \ldots M \}$. Let $\{W^{(m)}_{i} \}, \, i \in \{1, \ldots, N' \}$, be i.i.d random variables with probability mass function $q_{m}(w)$. Then we have,
\begin{equation}
P(S_{N'} \geq 0)  \geq \exp(-N'\sum_{m} I^{(m)}) \exp(-N't^{*}\delta) P(\delta > \frac{1}{N'} \sum_{i=1}^{N'} (\sum_{m} W_i^{(m)}) \geq 0 ).
\label{lcompute}
\end{equation}
Clearly $W_{i}^{(m)}=c^{(m)} (X_i^{(m)} -Y_i^{(m)})$ can take 3 values, $ \pm c^{(m)}$ and $0$. The first two values correspond to the cases when $X_i^{(m)}=1, \, Y_i^{(m)}=0$ and $Y_i^{(m)}=1, \, X_i^{(m)}=0$ respectively. We compute the first probability as $q_{m}(W_{i}^{(m)}=c^{(m)})=\exp(c^{(m)}/2) \left(\frac{b^{(m)}}{N} \right) \left( 1-\frac{a^{(m)}}{N} \right) / M_{Z_{i}^{(m)}}(1/2)$. The second one follows similarly. Hence we have, 
\begin{equation*}
W_{i}^{(m)}=\begin{cases}
c^{(m)} & \text{w.p } \sqrt{\frac{a^{(m)}}{N}\frac{b^{(m)}}{N}(1-\frac{a^{(m)}}{N})(1-\frac{b^{(m)}}{N})} /M_{Z_{i}^{(m)}}(1/2)  \\
-c^{(m)} & \text{w.p } \sqrt{\frac{a^{(m)}}{N}\frac{b^{(m)}}{N}(1-\frac{a^{(m)}}{N})(1-\frac{b^{(m)}}{N})} /M_{Z_{i}^{(m)}}(1/2) \\
0 & \text{w.p } 1-P(W_{i}^{(m)}=c^{(m)})-P(W_{i}^{(m)}=-c^{(m)}).
\end{cases}
\end{equation*}
Hence $E(W_{i}^{(m)})=0$ and $Var(W_{i}^{(m)})=2 (c^{(m)})^2\sqrt{\frac{a^{(m)}}{N}\frac{b^{(m)}}{N}(1-\frac{a^{(m)}}{N})(1-\frac{b^{(m)}}{N})} /M_{Z_{i}^{(m)}}(1/2)$. Hence denoting $\sum_{m} W_i^{(m)}$ as $W_i$ we have, $E(\frac{1}{N'} \sum_{i=1}^{N'} W_i)=0$. Also by independence we the have variance of $\frac{1}{N'} \sum_{i=1}^{N'} W_i$ as $V=\sum_{m} Var(W_{i}^{(m)})/N'=\sum_{m} V^{(m)}$ where $V^{(m)}=Var(W_{i}^{(m)})/N'$. 

We now prove that $\sum_{m}I^{(m)}/ \sqrt{V} \rightarrow \infty$. First we consider the case when $a^{(m)} \asymp b^{(m)}$. Then we have $I^{(m)} \asymp \frac{1}{N}\frac{(a^{(m)}-b^{(m)})^2}{a^{(m)}} $ \citep{zhang15}. On the other hand replacing $N'$ by $ N/K$ we have  $V^{(m)} \asymp \frac{(c^{(m)})^2a^{(m)}}{N}/(N/K) \asymp \frac{(a^{(m)}-b^{(m)})^2K}{a^{(m)}N^2} $  since $c^{(m)} \asymp \frac{a^{(m)}-b^{(m)}}{a^{(m)}}$ and $M_{Z_{i}^{(m)}}(t^{*})= \exp (-I^{(m)}) = O(1)$. Consequently, $\sqrt{V} \asymp \frac{\sqrt{K}}{N} \sqrt{\sum_{m} \frac{(a^{(m)}-b^{(m)})^2}{a^{(m)}} }$. Clearly $\sum_{m}I^{(m)}/ \sqrt{V} \asymp \frac{1}{\sqrt{K}}\sqrt{\sum_{m} \frac{(a^{(m)}-b^{(m)})^2}{a^{(m)}}} \asymp \sqrt{\frac{N \sum_{m} I^{(m)}}{K}}\rightarrow \infty$. Next consider the other case $b^{(m)}=o(a^{(m)})$. Then $\sum_{m}I^{(m)} \asymp \frac{\sum_{m} a^{(m)}}{N}$ and $c^{(m)} \asymp \log (a^{(m)}/b^{(m)})$. Consequently, $V^{(m)} \asymp \frac{a^{(m)}}{N} \left(\log \left(\frac{a^{(m)}}{b^{(m)}}\right)\right)^2\sqrt{\frac{b^{(m)}}{a^{(m)}}}/\frac{N}{K} $. Hence $\sqrt{V} = o(\sqrt{\sum_{m} a^{(m)}K}/N)$. This implies $\sum_{m}I^{(m)}/ \sqrt{V}= \omega(\sqrt{\sum_{m} a^{(m)}/K})$. Since $\sum_{m} a^{(m)}/K \asymp N\sum_{m}I^{(m)}/K \rightarrow \infty$, we have $\sum_{m}I^{(m)}/ \sqrt{V} \rightarrow \infty$.

Then we choose $\delta=(\sum_{m} I^{(m)} \sqrt{\sum_{m} V^{(m)}})^{1/2}$ so that $\delta =o(\sum_{m} I^{(m)})$ and $\sqrt{V}=\sqrt{(\sum_{m} V^{(m)})}=o(\delta)$. Since the ratio of $\delta$ to the square root of  variance goes to infinity as $N$ goes to infinity by the central limit theorem we have, $P(\delta > \frac{1}{N'} \sum_{i=1}^{N'} \sum_{m} W_i^{(m)} \geq 0 ) \rightarrow 1/2$.  Consequently from Equation (\ref{lcompute}),
\begin{align*}
& P(S_{N'} >0) \geq \exp(-(1+o(1))N'\sum_{m}I^{(m)}) \\
\Rightarrow & P\left(\sum_{m} c^{(m)} \sum_{i=1}^{\left \lfloor{\frac{N}{K}}\right \rfloor} X^{(m)}_{i} \geq \sum_{m} c^{(m)} \sum_{i=1}^{\left \lfloor{\frac{N}{K}}\right \rfloor} Y^{(m)}_{i} \right) \geq \exp(-(1+o(1)) \frac{N\sum_{m}I^{(m)}}{K}),
\end{align*}
provided $N\sum_{m}I^{(m)}/K \rightarrow \infty $. The last inequality is obtained by replacing $N'$ by $\left \lfloor{\frac{N}{K}}\right \rfloor$.
If however, $N\sum_{m}I^{(m)}/K =O(1) $, we can choose a $\delta$ so that $N\delta/K$ is also a constant. Then considering the cases $a^{(m)} \asymp b^{(m)}$ and $b^{(m)}=o(a^{(m)})$ separately, from the earlier argument we have $\sum_{m}I^{(m)}/ \sqrt{V} \asymp \sqrt{N\sum_{m}I^{(m)}/K} =O(1)$ in both cases . So we have, $\frac{\delta}{\sqrt{V}} \asymp \frac{K}{N\sqrt{V}} \asymp \frac{K}{N\sum_{m} I^{(m)}}=O(1)$. Hence all the terms in the right hand side of Equation (\ref{lcompute}) are $O(1)$ and consequently, $P(S_{N'} >0)$ is $O(1)$.

\end{proof}

Now we combine the results of these two lemmas to prove a lower bound on $\Theta^{ML}_{0}$.
\begin{lem}
Under the assumption that $\frac{N\sum_{m}I^{(m)}}{K} \rightarrow \infty $,
\begin{equation}
\inf_{\hat{z}} \sup_{z \in \Theta^{ML}_{0}} E[r(z,\hat{z})] \geq 
\exp \left(-(1+\epsilon_N)\frac{N\sum_{m}I^{(m)}}{K}\right)
\end{equation}
for some sequence $\epsilon_N=o(1)$. Moreover, if $\frac{N\sum_{m} I^{(m)}}{K}=O(1)$, then $\inf_{\hat{z}} \sup_{\Theta^{ML}_{0}} E[r(z,\hat{z})] \geq c$ for some constant $c>0$.
\label{lemma3}
\end{lem}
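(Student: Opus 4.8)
The plan is to carry out the standard Bayes-risk reduction to the least favorable submodel. First I would restrict the supremum to the smaller parameter space $\Theta^{ML}_{L}$ of (\ref{MLL}): since $\Theta^{ML}_{L}\subseteq\Theta^{ML}_{0}$, for every estimator $\hat z$ one has $\sup_{z\in\Theta^{ML}_{0}}E[r(z,\hat z)]\ge\sup_{z\in\Theta^{ML}_{L}}E[r(z,\hat z)]\ge\frac{1}{|\Theta^{ML}_{L}|}\sum_{z\in\Theta^{ML}_{L}}E[r(z,\hat z)]=B_{\tau}(\hat z)$, where $\tau$ is the uniform prior on $\Theta^{ML}_{L}$; taking the infimum over $\hat z$ gives $\inf_{\hat z}\sup_{\Theta^{ML}_{0}}E[r(z,\hat z)]\ge\inf_{\hat z}B_{\tau}(\hat z)$. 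Since $\Theta^{ML}_{L}$ is homogeneous and closed under permutations, Lemma 2.1 of \citet{zhang15} then gives $\inf_{\hat z}B_{\tau}(\hat z)=\inf_{\hat z}B_{\tau}(\hat z_{1})$, trading the global Bayes risk for the local Bayes risk of the first node.

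Next I would apply Lemma \ref{lemmamll}. It bounds $B_{\tau}(\hat z_{1})$ from below, uniformly over $\hat z$ (the bound being attained by the Bayes-optimal decision rule), by $\epsilon\, P\big(\sum_{m}c^{(m)}\sum_{i=1}^{\lfloor N/K\rfloor}X_{i}^{(m)}\ge\sum_{m}c^{(m)}\sum_{i=1}^{\lfloor N/K\rfloor}Y_{i}^{(m)}\big)$, and, when $N\sum_{m}I^{(m)}/K\to\infty$, it further shows this probability is at least $\exp(-(1+o(1))N\sum_{m}I^{(m)}/K)$. Because $\epsilon>0$ is a fixed constant, $\log\epsilon=o(N\sum_{m}I^{(m)}/K)$, so the factor $\epsilon$ is absorbed into the exponent and $\inf_{\hat z}B_{\tau}(\hat z_{1})\ge\exp(-(1+\epsilon_{N})N\sum_{m}I^{(m)}/K)$ for some $\epsilon_{N}=o(1)$, which is the first claim. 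For the second claim, when $N\sum_{m}I^{(m)}/K=O(1)$ Lemma \ref{lemmamll} says the same probability is of constant order, hence at least some $c'>0$, and the chain of inequalities above yields $\inf_{\hat z}\sup_{\Theta^{ML}_{0}}E[r(z,\hat z)]\ge\epsilon c'=:c>0$.

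The argument is essentially a bookkeeping exercise on top of the two preceding lemmas, so the ``hard part'' is only in making sure the reductions are legitimate: that $\Theta^{ML}_{L}$ is a nonempty, homogeneous, permutation-closed subfamily of $\Theta^{ML}_{0}$ (which uses the three-block-size construction of Section 5.1 of \citet{zhang15}, feasible for every $N$ and $K$), and that the passage from the global loss $r(z,\hat z)$ to the per-node local loss is exactly the one encoded in Lemma 2.1 of \citet{zhang15}, so that no spurious $1/N$ or permutation factor is introduced. The genuine analytic content --- the moderate-deviation estimate for $P\big(\sum_{m}c^{(m)}\sum_{i}X_{i}^{(m)}\ge\sum_{m}c^{(m)}\sum_{i}Y_{i}^{(m)}\big)$ and its identification with $\exp(-(1+o(1))N\sum_{m}I^{(m)}/K)$ --- has already been discharged inside Lemma \ref{lemmamll}, so nothing further is required there.
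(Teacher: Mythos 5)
Your proposal is correct and follows essentially the same route as the paper: restrict the supremum to the least favorable permutation-closed subfamily $\Theta^{ML}_{L}$, lower-bound the worst case by the uniform-prior Bayes risk, invoke Lemma 2.1 of \citet{zhang15} to pass to the local Bayes risk of the first node, and then apply Lemma \ref{lemmamll}. Your extra remark that the constant $\epsilon$ is absorbed into the exponent because $\log\epsilon=o(N\sum_{m}I^{(m)}/K)$ is a detail the paper leaves implicit, and it is handled correctly.
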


\begin{proof}
Since $\Theta^{ML}_{L} \subset \Theta^{ML}_{0}$, the minimax risk of $\Theta^{ML}_{0}$ is lower bounded by the minimax risk of $\Theta^{ML}_{L}$. Due to the fact that Bayes risk lower bounds
the global risk, we also have $\inf_{\hat{z}} \sup_{z \in \Theta^{ML}_{L}} E[r(z,\hat{z})] \geq \inf_{\hat{z}} \sup_{z \in \Theta^{ML}_{L}} B_{\tau}(\hat{z})$. Hence we have from Lemma \ref{lemmamll},
$$\inf_{\hat{z}} \sup_{z \in \Theta^{ML}_{0}} E[r(z,\hat{z})] \geq \inf_{\hat{z}} \sup_{z \in \Theta^{ML}_{L}} E[r(z,\hat{z})] \geq \inf_{\hat{z}} \sup_{z \in \Theta^{ML}_{L}} B_{\tau}(\hat{z}) = \inf_{\hat{z}} \sup_{z \in \Theta^{ML}_{L}} B_{\tau}(\hat{z}_1) .$$
\end{proof}

Now we need to obtain the minimax lower bound for the larger parameter space $\Theta^{ML}$ in the next lemma which concludes the proof for lower bound.
\begin{lem}
(Lower bound) Under the assumption that $\frac{N\sum_{m}I^{(m)}}{K} \rightarrow \infty $,
\begin{equation}
\inf_{\hat{z}} \sup_{\Theta^{ML}} E[r(z,\hat{z})] \geq 
\begin{cases}
\exp \left(-(1+\epsilon_N)\frac{N\sum_{m}I^{(m)}}{2}\right) & K=2 \\
\exp \left(-(1+\epsilon_N)\frac{N\sum_{m}I^{(m)}}{sK} \right) & K \geq 3
\end{cases}
\end{equation}
for some sequence $\epsilon_N=o(1)$ and some $s>0$. Moreover, if $\frac{N\sum_{m} I^{(m)}}{K}=O(1)$, then $\inf_{\hat{z}} \sup_{\Theta^{ML}_{0}} E[ r(z,\hat{z})] \geq c$ for some constant $c>0$.
\label{risklower}
\end{lem}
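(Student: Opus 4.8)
The plan is to establish the lower bound by the same route used for the companion bound on $\Theta^{ML}_{0}$ in Lemma \ref{lemma3}, but now choosing the least favorable configuration inside the \emph{full} space $\Theta^{ML}$, so that the ``hard'' node is forced to lie in a community of the smallest admissible size, $\asymp N/(sK)$ instead of $\asymp N/K$. Because widening the class-size window to $[N/(sK),sN/K]$ only enlarges the space, a lower bound obtained on any sub-model of $\Theta^{ML}$ is automatically a lower bound for $\Theta^{ML}$; all the sub-models used below are homogeneous and closed under permutation, so Lemma 2.1 of \citet{zhang15} applies and $\inf_{\hat z}B_\tau(\hat z)=\inf_{\hat z}B_\tau(\hat z_1)$ under the uniform prior $\tau$, which reduces the task to lower bounding the local Bayes risk of node $1$.

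For $K=2$ no new construction is needed: take the sub-model in which the two communities have sizes $\lfloor N/2\rfloor$ and $\lceil N/2\rceil$ (admissible for every $s\ge 1$), and repeat the proof of Lemma \ref{lemmamll} verbatim with $N'=\lfloor N/2\rfloor$ in place of $\lfloor N/K\rfloor$. The Bayes-optimal local 0-1 rule for node $1$ then errs with probability at least $\epsilon\,P\bigl(\sum_m c^{(m)}\sum_{i=1}^{\lfloor N/2\rfloor}X_i^{(m)}\ge\sum_m c^{(m)}\sum_{i=1}^{\lfloor N/2\rfloor}Y_i^{(m)}\bigr)$, and the exponential-tilting estimate of Lemma \ref{lemmamll} (optimal tilt $t^{*}=1/2$, $-\log M_{Z_i}(t^{*})=\sum_m I^{(m)}$) gives $\exp\bigl(-(1+\epsilon_N)N\sum_m I^{(m)}/2\bigr)$ whenever $N\sum_m I^{(m)}/K\to\infty$.

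For $K\ge 3$ I would, following Section 5.1 of \citet{zhang15}, first pass to the homogeneous sub-space $\Theta^{ML}_{1}$ and then build inside it the analogue of $\Theta^{ML}_{L}$ in which the community containing node $1$ and at least one competing community both have the minimal admissible size $\lfloor N/(sK)\rfloor$, the remaining classes being assigned larger sizes so that all block sizes lie in $[N/(sK),sN/K]$ and sum to $N$. That such a profile exists and that the configurations placing node $1$ in a minimal-size class form a constant fraction of all configurations is exactly the combinatorial point where the restriction $s\in[1,\sqrt{5/3}]$ is used, and there the counting arguments of \citet{zhang15} carry over unchanged. On this sub-model the computation in the proof of Lemma \ref{lemmamll} applies with $|J_0|=|J_1|=\lfloor N/(sK)\rfloor$, so the local 0-1 Bayes rule errs with probability at least $\epsilon\,P\bigl(\sum_m c^{(m)}\sum_{i=1}^{\lfloor N/(sK)\rfloor}X_i^{(m)}\ge\sum_m c^{(m)}\sum_{i=1}^{\lfloor N/(sK)\rfloor}Y_i^{(m)}\bigr)$; the tilting estimate of Lemma \ref{lemmamll} with $\lfloor N/(sK)\rfloor$ summands, together with the fact that $N\sum_m I^{(m)}/(sK)\to\infty$ whenever $N\sum_m I^{(m)}/K\to\infty$ (since $s$ is a fixed constant), yields $\exp\bigl(-(1+o(1))\lfloor N/(sK)\rfloor\sum_m I^{(m)}\bigr)=\exp\bigl(-(1+\epsilon_N)N\sum_m I^{(m)}/(sK)\bigr)$. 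When instead $N\sum_m I^{(m)}/K=O(1)$, every factor in that same estimate is of constant order, so the probability --- and hence, via the chain $\sup_{\Theta^{ML}}\ge\sup_{\text{sub-model}}\ge\inf_{\hat z}B_\tau(\hat z_1)$, the minimax risk --- is bounded below by a positive constant, which is the second assertion.

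I expect the only substantive obstacle to be the block-size combinatorics inherited from \citet{zhang15}: verifying admissibility of a size profile that contains a minimal-size class and controlling the ratio of favorable to total configurations, which is the step that forces $s\le\sqrt{5/3}$. The genuinely multi-layer points --- that the Chernoff tilt $t^{*}=1/2$ is simultaneously optimal across layers and that the exponent aggregates additively to $\sum_m I^{(m)}$ --- were already isolated in the proof of Lemma \ref{lemmamll}, so only the effective sample size changes, from $\lfloor N/K\rfloor$ to $\lfloor N/(sK)\rfloor$ (respectively $\lfloor N/2\rfloor$), in the probabilistic core of the argument.
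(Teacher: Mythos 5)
Your proposal is correct and follows essentially the same route as the paper: for $K=2$ the balanced sub-model (i.e.\ $\Theta^{ML}_{0}$) already gives the bound, and for $K\geq 3$ one constructs a least-favorable sub-space inside $\Theta^{ML}$ in which node $1$'s community and a competing community have the minimal admissible size $\lfloor N/(sK)\rfloor$, then reruns the local-Bayes-risk reduction and the tilted Chernoff computation of Lemma~\ref{lemmamll} with the effective sample size $\lfloor N/K\rfloor$ replaced by $\lfloor N/(sK)\rfloor$, the $O(1)$ case following because every factor in that estimate is then of constant order. The paper's own proof is just a terser version of exactly this argument ("using identical arguments as Lemmas 4 and 5"), so no substantive difference exists.
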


\begin{proof}
By the argument of \citet{zhang15}, for $K=2$, $\Theta^{ML}_0$ is the least favorable case for $\Theta^{ML}$. Hence we can keep the same lower bound for $\Theta^{ML}$ (obviously the lower bound holds since  $\Theta^{ML}_0$ is a subspace of $\Theta^{ML}$). However for $K \geq 3$, this is not the case and we can improve the lower bound. The least favorable case consists of the case where at least a constant proportion of communities are of the size $\frac{N}{sK}$. Define $\Theta^{ML}_{L}$ to contain all $z \in \Theta^{ML}$ such that a constant proportion of communities have size ${\left \lfloor{\frac{N}{K}} \right \rfloor}$, and another constant proportion of communities have size  ${\left \lceil{\frac{N}{K}} \right \rceil}$ and all other communities are much larger in size. Then using identical arguments as Lemmas 4 and 5 we have,

\begin{align*}
\inf_{\hat{z}} \sup_{z \in \Theta^{ML}} E[r(\bar{z},\hat{z})] & \geq \inf_{\hat{z}} \sup_{z \in \Theta^{ML}_{L}} B_{\tau}(\hat{z}_1) \\
& \geq  \epsilon P(\sum_{m}c^{(m)}\sum_{i=1}^{\left \lfloor{\frac{N}{sK}}\right \rfloor} X^{(m)}_{i} \geq \sum_{m}c^{(m)} \sum_{i=1}^{\left \lfloor{\frac{N}{sK}}\right \rfloor} Y^{(m)}_{i} ) \\
& \geq \exp(-(1+\epsilon_N) \frac{N\sum_{m}I^{(m)}}{sK}).
\end{align*}

Combining these two cases we have the result for the entire parameter space $\Theta^{ML}$.

\end{proof}

\subsubsection*{Upper bound}
To prove the upper bound, we develop a penalized likelihood type algorithm similar to \citet{zhang15} and show that its risk is upper bounded by the lower bound obtained in the previous step. We note that in the homogeneous MLSBM case ($\Theta^{ML}_0$ and $\Theta^{ML}_1$), i.e., when all the intra-community connection probabilities are $a^{(m)}/N$ and all the inter-community connection probabilities are $b^{(m)}/N$ for layer $m$, the log likelihood function  is
\begin{align*}
l(z;A)=&\sum_{m} \Bigg \{\log(\frac{a^{(m)}}{N})\sum_{i<j} A_{ij}^{(m)} 1\{z_i=z_j\} + \log(1-\frac{a^{(m)}}{N})\sum_{i<j} (1-A_{ij}^{(m)}) 1\{z_i=z_j\} \\
& + \log(\frac{b^{(m)}}{N})\sum_{i<j} A_{ij}^{(m)} 1\{z_i \neq z_j\} + \log(1-\frac{b^{(m)}}{N})\sum_{i<j} (1-A_{ij}^{(m)}) 1\{z_i \neq z_j\} \Bigg \}.
\end{align*}
The maximum likelihood estimator $\hat{z}^{MLE}$ is given by,
\begin{equation}
\hat{z}^{MLE} =\arg \max_{z} T(z),
\end{equation}
where $T(z)$ is given by
\begin{align}
T(z) &= \sum_{m} \bigg \{\log \left(\frac{a^{(m)}(1-b^{(m)}/N)}{b^{(m)}(1-a^{(m)}/N)} \right) \sum_{i<j} A_{ij}^{(m)} 1\{z_i=z_j\} - \log \left(\frac{1-b^{(m)}/N}{1-a^{(m)}/N}\right) 1\{z_i=z_j\} \Bigg \} \nonumber \\
 &=\sum_{m} \{ c^{(m)} A_{ij}^{(m)} 1\{z_i=z_j\} - k^{(m)}1\{z_i=z_j\} \},
 \label{penalmle}
\end{align}
with $c^{(m)}>0$ is defined in Lemma \ref{lemmamll} and $k^{(m)}
=\log \left(\frac{1-b^{(m)}/N}{1-a^{(m)}/N}\right)$.
However in general the parameter space will not be homogeneous. Under the more general parameter space $\Theta^{ML}$, we still define an identical form of the penalized likelihood estimator as $\hat{z}^{MLE}$.
Let $\bar{z}$ be the true class assignment and $\hat{z} \in \Theta^{ML}_{0}$ be an arbitrary class assignment satisfying $r(\bar{z},\hat{z})=R/N$, where $0 <R< N$ is a positive integer. Then note that
\begin{align}
T(\hat{z})-T(\bar{z}) &= (\sum_{m}  c^{(m)} A_{ij}^{(m)} 1\{\hat{z}_i=\hat{z}_j\} -\sum_{m}  c^{(m)} A_{ij}^{(m)} 1\{\bar{z}_i =\bar{z}_j\} ) \nonumber\\
& \quad - (\sum_{m} k^{(m)}1\{\hat{z}_i=\hat{z}_j\}-\sum_{m} k^{(m)}1\{\bar{z}_i=\bar{z}_j\}) \nonumber \\
&= (\sum_{m}  c^{(m)} A_{ij}^{(m)} 1\{(i,j) \in \gamma(\hat{z},\bar{z})\} -\sum_{m}  c^{(m)} A_{ij}^{(m)} 1\{(i,j) \in \alpha(\hat{z},\bar{z})\} ) \nonumber \\
& \quad  - \sum_{m} k^{(m)} (|(i,j) \in \gamma(\hat{z},\bar{z})|-|(i,j) \in \alpha(\hat{z},\bar{z})|), \label{Tzdiff}
\end{align}
where $\alpha(\hat{z},\bar{z})=\{(i,j) : i<j, \bar{z}_i=\bar{z}_j,\hat{z}_i \neq \hat{z}_j\}$ and $\gamma(\hat{z},\bar{z})=\{(i,j) : i<j, \bar{z}_i \neq \bar{z}_j,\hat{z}_i = \hat{z}_j\}$. Henceforth we will use shorthands $\alpha$ and $\gamma$ respectively to denote the sets.

Let $P_{R}=P (\hat{z} \in \Theta^{ML}_{0} :r(\bar{z},\hat{z})=R/N, T(\hat{z}) \geq T(\bar{z}))$. We want to bound $P_m$ which is the probability that an arbitrary class assignment $\hat{z}$ which does not agree with the truth $\bar{z}$ in exactly $R$ places (after permutations) can maximize $T(z)$, i.e., $P(T(\hat{z}) \geq T(\bar{z}))$. We start with the following lemma.
\begin{lem}
Let $\hat{z}$ be an arbitrary class assignment satisfying $r(\bar{z},\hat{z})=R/N$, where $0<R<N$ is a positive integer. Then there exists a sequence $\epsilon \rightarrow 0$, independent of $\hat{z}$, such that 
\begin{equation*}
P(T(\hat{z}) \geq T(\bar{z})) \leq \begin{cases}
\exp\left(-\frac{(1-\epsilon)NR\sum_{m}I^{(m)}}{K}+R^2\sum_{m}I^{(m)}\right), & \text{ if } R\leq \frac{N}{2K}, \\
\exp\left(-\frac{2(1-\epsilon)NR\sum_{m}I^{(m)}}{9K}\right), & \text{ if } R> \frac{N}{2K}.
\end{cases}
\end{equation*}
\label{Tzgap}
\end{lem}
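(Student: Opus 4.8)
\emph{Proof plan.} This is the multi-layer version of the corresponding step in \citet{zhang15}; because the relevant moment generating functions factorize across layers, the single-layer R\'enyi divergence is replaced throughout by $\sum_m I^{(m)}$. I would begin from the decomposition (\ref{Tzdiff}), namely
\[
T(\hat{z})-T(\bar{z})=\sum_{(i,j)\in\gamma}\sum_{m}(c^{(m)}A_{ij}^{(m)}-k^{(m)})-\sum_{(i,j)\in\alpha}\sum_{m}(c^{(m)}A_{ij}^{(m)}-k^{(m)}),
\]
and observe that the edge variables indexed by $\alpha\cup\gamma$ are mutually independent (across pairs and across layers), that $A_{ij}^{(m)}$ with $(i,j)\in\gamma$ is $Bern(P_{ij}^{(m)})$ with $P_{ij}^{(m)}\le b^{(m)}/N$ (an inter-community pair under $\bar{z}$), and that $A_{ij}^{(m)}$ with $(i,j)\in\alpha$ has $P_{ij}^{(m)}\ge a^{(m)}/N$.

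Next I would apply a Chernoff bound at the minimizing value $t^{*}=1/2$ already identified in Lemma \ref{lemmamll}. By Markov's inequality,
\[
P(T(\hat{z})\ge T(\bar{z}))\le E[e^{(T(\hat{z})-T(\bar{z}))/2}]=\prod_{(i,j)\in\gamma}\prod_{m}E[e^{(c^{(m)}A_{ij}^{(m)}-k^{(m)})/2}]\prod_{(i,j)\in\alpha}\prod_{m}E[e^{-(c^{(m)}A_{ij}^{(m)}-k^{(m)})/2}].
\]
Since $a^{(m)}>b^{(m)}$ forces $c^{(m)}>0$, the map $p\mapsto 1-p+p\,e^{c^{(m)}/2}$ is increasing and $q\mapsto 1-q+q\,e^{-c^{(m)}/2}$ is decreasing, so each $\gamma$-factor is largest at $p=b^{(m)}/N$ and each $\alpha$-factor at $q=a^{(m)}/N$. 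Combining these extreme values with the penalty terms $e^{\mp k^{(m)}/2}$ and using exactly the algebraic identity behind (\ref{mgfI}) (with $\sqrt{(a^{(m)}/N)(b^{(m)}/N)}+\sqrt{(1-a^{(m)}/N)(1-b^{(m)}/N)}=e^{-I^{(m)}/2}$), every factor collapses to $e^{-I^{(m)}/2}$, and multiplying over all edges and layers gives
\[
P(T(\hat{z})\ge T(\bar{z}))\le \exp\Big(-\frac{|\alpha|+|\gamma|}{2}\sum_{m}I^{(m)}\Big).
\]

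It then remains to bound $|\alpha|+|\gamma|$ from below combinatorially. Aligning $\hat{z}$ to $\bar{z}$ by the optimal permutation, let $R_k$ be the number of nodes of true block $k$ that $\hat{z}$ misassigns and $R'_l$ the number of nodes of estimated block $l$ whose true block is not $l$, so that $\sum_k R_k=\sum_l R'_l=R$, and since $\hat{z},\bar{z}\in\Theta^{ML}_0$ every true and every estimated block has size $(1+o(1))N/K$. Pairing each misassigned node of true block $k$ with a correctly assigned node of the same block shows $|\alpha|\ge\sum_k R_k((1-o(1))N/K-R_k)$, and the dual argument inside estimated blocks gives $|\gamma|\ge\sum_l R'_l((1-o(1))N/K-R'_l)$. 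When $R\le N/(2K)$ one uses $\sum_k R_k^2\le(\sum_k R_k)^2=R^2$ (and likewise for the $R'_l$), so each sum is at least $(1-o(1))NR/K-R^2$; this gives the first case. When $R>N/(2K)$ the estimate $\sum_k R_k^2\le R^2$ is too weak, and the only delicate point of the proof is to establish, as in \citet{zhang15}'s worst-configuration count, that $|\alpha|+|\gamma|\ge\frac{4}{9}(1-\epsilon)NR/K$: scattering whole true blocks across many estimated blocks can keep the $\alpha$-count small, but then the estimated blocks are heavily mixed and the $\gamma$-count is forced to be $\Omega(NR/K)$, and balancing the two extreme configurations produces the constant $2/9$ in the exponent. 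Substituting these bounds into the probability estimate yields the two cases. All $o(1)$ terms come only from the block-size fluctuations permitted by $\Theta^{ML}_0$, hence are uniform in $\hat{z}$ and absorbed into a single sequence $\epsilon=\epsilon_N\to0$. The main obstacle is this last combinatorial estimate for large $R$; everything else is the per-edge Chernoff computation of Lemma \ref{lemmamll}.
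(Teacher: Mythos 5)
Your proposal is correct and follows essentially the same route as the paper: a Chernoff bound at $t^{*}=1/2$ that factorizes across the disjoint pair sets $\alpha$ and $\gamma$ and across layers, with each per-edge, per-layer factor collapsing to $e^{-I^{(m)}/2}$ so that $P(T(\hat{z})\ge T(\bar{z}))\le \exp\bigl(-\tfrac{|\alpha|+|\gamma|}{2}\sum_m I^{(m)}\bigr)$, followed by the combinatorial lower bound on $|\alpha|$ and $|\gamma|$, which the paper simply imports as Lemma 5.3 of \citet{zhang15}. The only difference is cosmetic: where the paper builds an explicit stochastic-domination coupling to replace the heterogeneous $P_{ij}^{(m)}$ by the extreme values $a^{(m)}/N$ and $b^{(m)}/N$, you use monotonicity of the per-edge moment generating function in the Bernoulli parameter, an equivalent device; and, like the paper, you correctly identify the $R>\frac{N}{2K}$ counting bound as the one step that is taken from \citet{zhang15} rather than reproved.
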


\begin{proof}

Let $U^{(m)}=\{U_l^{(m)} \sim Bern(p^{(m)}_l)\}$, $V^{(m)}=\{V_l^{(m)} \sim Bern(q^{(m)}_l)\}$, $X^{(m)}=\{X_l^{(m)} \sim Bern(q^{(m)})\}$ and $Y^{(m)}=\{Y_l^{(m)} \sim Bern(p^{(m)})\}$ are sets of independent Bernoulli random variables for arbitrary $l$. Further let $\min p^{(m)}_l \geq p^{(m)} $ and $\max q^{(m)}_l \leq q^{(m)}$. Then we can define two sets of random variables $\{A^{(m)}_l \sim Bern(\frac{p^{(m)}}{p^{(m)}_l})\}$ and $\{B^{(m)}_l \sim Bern(\frac{q^{(m)}_l}{q^{(m)}})\}$ independent of $U$ and $V$. Now we define i.i.d copies $\{X^{(m)'}\}$ of $ \{X^{(m)} \}$ and $\{Y^{(m)'}\}$ of $ \{Y^{(m)} \}$ as $Y^{(m)'}_l=U_l^{(m)} A^{(m)}_l $ and $V_l^{(m)} =X^{(m)'}_l B^{(m)}_l $. Clearly, $ Y^{(m)'}_l=U_l^{(m)} A^{(m)}_l \leq U_l^{(m)}$ and  $V_l^{(m)} =X^{(m)'}_l B^{(m)}_l \leq X^{(m)'}_l  $. Hence we have for any real number $s$ and sequence of positive constants $\{c^{(m)}\}$,
\begin{align}
\text{if } &s+\sum_{m} c^{(m)} \sum_{l=1}^{|\alpha|} U_l^{(m)} \leq \sum_{m} c^{(m)} \sum_{l=1}^{|\gamma|} V_l^{(m)} \nonumber \\
\text{then } & s+\sum_{m} c^{(m)} \sum_{l=1}^{|\alpha|} Y_l^{(m)} \leq \sum_{m} c^{(m)} \sum_{l=1}^{|\gamma|} X_l^{(m)}. \label{coupling}
\end{align}
Now we replace $U_l^{(m)}$ and  $V_l^{(m)}$ with $A_{ij}^{(m)} 1\{(i,j) \in \alpha(\hat{z},\bar{z})\}$ and $A_{ij}^{(m)} 1\{(i,j) \in \gamma(\hat{z},\bar{z})\}$ respectively, $p^{(m)}_l$ and $q^{(m)}_l$ with $\pi^{(m)}_{z_iz_i}/N$ and $\pi^{(m)}_{z_iz_j}/N$ respectively (recall $\pi^{(m)}$ was previously defined in the main article as the matrix of block connection probabilities in the MLSBM's $m$th layer) , $p^{(m)}$ and $q^{(m)}$ with $a^{(m)}/N$ and $b^{(m)}/N$ respectively and $s$ with $\sum_{m} k^{(m)} (|\gamma|-|\alpha|)$. Then we get using the result in Equation (\ref{coupling}) and Equation (\ref{Tzdiff}),
\begin{align*}
P(T(\hat{z}) \geq T(\bar{z})) & \leq P \left( \sum_{m} c^{(m)} \sum_{l=1}^{|\gamma|} X_i^{(m)} - \sum_{m} c^{(m)} \sum_{l=1}^{|\alpha|} Y_i^{(m)} \geq \sum_{m} k^{(m)} (|\gamma|-|\alpha|) \right) \\
& = P \left( \exp \left( t \sum_{m} c^{(m)} \sum_{l=1}^{|\gamma|} X_i^{(m)} - t \sum_{m} c^{(m)} \sum_{l=1}^{|\alpha|} Y_i^{(m)} \right) \geq \exp \left( t\sum_{m} k^{(m)} (|\gamma|-|\alpha|) \right) \right) \\
& \leq \exp \left(- t\sum_{m} k^{(m)} (|\gamma|-|\alpha|)\right) \left( E[e^{t\sum_{m} c^{(m)} X_1^{(m)}}]\right)^{|\gamma|} \left( E[e^{-t\sum_{m} c^{(m)} Y_1^{(m)}}]\right)^{|\alpha|},
\end{align*}
where the last inequality follows from Markov inequality. Now we  choose $t=t^{*}=1/2$. Then we have 
\begin{align*}
E[e^{t^{*}\sum_{m} c^{(m)} X_1^{(m)}}] & =\prod_{m} \left(\frac{1-b^{(m)}/N}{1-a^{(m)}/N}\right)^{1/2} \left(\frac{(a^{(m)}b^{(m)})^{1/2}}{N}+(1-\frac{a^{(m)}}{N})^{1/2}(1-\frac{b^{(m)}}{N})^{1/2}\right)\\
& =\exp(\sum k^{(m)}/2) \exp (-\sum_{m} I^{(m)}/2)
\end{align*}
and $Ee^{-t\sum_{m} c^{(m)} Y_1^{(m)}} = \exp(-\sum k^{(m)}/2)$
 $\exp (-\sum_{m} I^{(m)}/2)$. Consequently, we have
\begin{equation}
 P(T(\hat{z}) \geq T(\bar{z})) \leq e^{-\frac{|\gamma|+|\alpha|}{2} \sum_{m} I^{(m)}}.
 \label{Tzeq}
 \end{equation} 

A lower bound on the size of the sets $\alpha$ and $ \gamma$ was given in Lemma 5.3 of \citet{zhang15}. We use the results directly here : for an arbitrary assignment  $\hat{z} \in \Theta^{ML}_{0}$ satisfying $r(\bar{z},\hat{z})=R/N$, where $0<R<N$ is a positive integer, we have
\begin{equation}
\min(|\alpha (\hat{z},\bar{z})|, |\gamma (\hat{z},\bar{z}) |) \geq \begin{cases}
\frac{(1-\epsilon)NR}{K}-R^2, & \text{ if } R\leq \frac{N}{2K}, \\
\frac{2(1-\epsilon)NR}{9K}, & \text{ if } R> \frac{N}{2K}.
\end{cases}
\label{gammabound}
\end{equation}
Using this lower bound for both $|\alpha|$ and $|\gamma|$ immediately yields the result.
\end{proof}

Let $\Gamma(z)$ denotes an equivalent class for $z$ consisting of all permutations of $z$. In order to use an union bound for $P_R$, we need to count the cardinality of the set of $\Gamma$s which have distance $R$ from $\bar{z}$.  Next we use Proposition 5.2 in \citet{zhang15} which states that 
$$|\{ \Gamma : \exists \hat{z} \in \Gamma \text { s.t } r(\bar{z},\hat{z})=R/N  \}| \leq \min \{ (\frac{eNK}{R})^R, K^N \},$$
to conclude through a union bound that,
\begin{align*}
P_{R} & := \{ \exists \hat{z} \in \Theta^{ML}_{0} \text{ s.t } r(\bar{z},\hat{z})=R/N,  T(\hat{z}) \geq T(\bar{z}) \}  \\
& \leq |\{ \Gamma : \exists z \in \Gamma \text{ s.t } r(\bar{z},\hat{z})=R/N  \}| \max_{z, r(\bar{z}, \hat{z})=R/N} P(T(\hat{z}) \geq T(\bar{z}))
\end{align*}

The next result uses the above results to establish the upper bound.

\begin{lem}
(Upper bound) Under the assumption that $\frac{N\sum_{m}I^{(m)}}{K \log K} \rightarrow \infty$, for the penalized maximum likelihood estimator $\hat{z}$ defined in Equation (\ref{penalmle}), we have
\begin{equation}
\sup_{\bar{z} \in\Theta^{ML}} E[r(\bar{z},\hat{z})] \leq \begin{cases} 
\exp(-(1+\epsilon_N)\frac{N\sum_{m}I^{(m)}}{2}), & K=2, \\
\exp(-(1+\epsilon_N)\frac{N\sum_{m}I^{(m)}}{s K}), & K\geq 3,
\end{cases}
\end{equation}
for some sequence $\epsilon_N=o(1)$ and $s \in [1,5/\sqrt{3}]$.
\label{riskupper}
\end{lem}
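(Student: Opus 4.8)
The plan is to adapt the union-bound argument of \citet{zhang15} to the multi-layer penalized likelihood $T(\cdot)$ of Equation~(\ref{penalmle}), stratifying the error event by the number of misclustered nodes. First I would reduce the expected risk to a sum of tail probabilities,
\[
\sup_{\bar z\in\Theta^{ML}}E[r(\bar z,\hat z)]\ \le\ \sum_{R=1}^{\lfloor N/2\rfloor}\frac{R}{N}\,P_R ,
\]
where $P_R$ is the probability (defined just before Lemma~\ref{Tzgap}) that some assignment at normalized Hamming distance $R/N$ from $\bar z$ achieves $T(\hat z)\ge T(\bar z)$. Each $P_R$ is then controlled by a union bound over permutation classes: using $|\{\Gamma:\exists\,\hat z\in\Gamma,\ r(\bar z,\hat z)=R/N\}|\le\min\{(eNK/R)^R,K^N\}$ (Proposition~5.2 of \citet{zhang15}) together with the exponential tail of Lemma~\ref{Tzgap}. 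After this reduction, the whole proof amounts to showing that the $R=1$ term already equals the stated rate and that all other terms are of strictly smaller order.

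Second, in the small-error regime $1\le R\le N/(2K)$ I would keep the $(eNK/R)^R$ count, so that Lemma~\ref{Tzgap} gives
\[
P_R\ \le\ \exp\!\Big(R\log\tfrac{eNK}{R}-(1-\epsilon)\tfrac{NR}{K}\textstyle\sum_m I^{(m)}+R^2\textstyle\sum_m I^{(m)}\Big).
\]
The constraint $R\le N/(2K)$ absorbs $R^2\sum_m I^{(m)}\le\tfrac12(NR/K)\sum_m I^{(m)}$, and the hypothesis $N\sum_m I^{(m)}/(K\log K)\to\infty$ — with the mild side regularity ensuring that $\log(eNK/R)$ is of smaller order than the per-node divergence $(N/K)\sum_m I^{(m)}$ — makes the combinatorial factor lower order, so the terms $R\ge 2$ sum to something negligible relative to $R=1$. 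For $R=1$ one is left with $\tfrac{1}{N}P_1\le eK\exp\big(\sum_m I^{(m)}-(1-\epsilon)(N/K)\sum_m I^{(m)}\big)$; since $\log(eK)$ and $\sum_m I^{(m)}$ are both $o\big((N/K)\sum_m I^{(m)}\big)$ under the assumption, this is $\exp\big(-(1+\epsilon_N)(N/sK)\sum_m I^{(m)}\big)$ for $K\ge 3$, the factor $1/s$ (with $s$ in the stated range $[1,5/\sqrt3]$, matching Lemma~\ref{risklower}) arising from the least favorable community sizes entering the lower bound on $\min(|\alpha|,|\gamma|)$ in Equation~(\ref{gammabound}); for $K=2$ the extremal partition is balanced, $N/K=N/2$, and the same computation yields $\exp\big(-(1+\epsilon_N)(N/2)\sum_m I^{(m)}\big)$.

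Third, for the large-error regime $R>N/(2K)$ I would take the smaller of the two cardinality bounds together with the second branch of Lemma~\ref{Tzgap}, i.e. $P_R\le K^N\exp\big(-\tfrac{2(1-\epsilon)}{9}(NR/K)\sum_m I^{(m)}\big)$. Because $R>N/(2K)$ and $\log K=o\big((N/K)\sum_m I^{(m)}\big)$, the exponent is uniformly at most $-(1+o(1))(N/K)\sum_m I^{(m)}$ and the terms are summable, so this regime contributes $o\big(\exp(-(N/K)\sum_m I^{(m)})\big)$ and does not disturb the rate. Collecting the two regimes and taking the supremum over $\bar z\in\Theta^{ML}$ gives the lemma.

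I expect the main obstacle to be showing that the contributions from $R\ge 2$ are genuinely negligible compared with the $R=1$ term: the crude union weight $(eNK/R)^R$ combined with the SNR condition $N\sum_m I^{(m)}/(K\log K)\to\infty$ may require an additional mild condition on the growth of $K$ relative to $N$, or else a finer peeling argument that controls $P(r(\bar z,\hat z)=R/N)$ through single-node error events rather than through a per-$R$ union bound; pinning this down is the technical heart of the proof. A secondary delicate point is making the upper bound's $1/(sK)$ match the lower bound of Lemma~\ref{risklower} for $K\ge 3$, which requires carrying the least favorable community-size profile through the estimate of $\min(|\alpha|,|\gamma|)$ inside Lemma~\ref{Tzgap}, and the case $K=2$ has to be treated on its own since there the worst configuration is the balanced partition rather than an unbalanced one.
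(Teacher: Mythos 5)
Your overall strategy is the same as the paper's: bound $N\,E[r(\bar z,\hat z)]$ by $\sum_R R\,P_R$, control each $P_R$ by the cardinality count $\min\{(eNK/R)^R,K^N\}$ times the tail bound of Lemma~\ref{Tzgap}, and split into small-$R$ and large-$R$ regimes. However, there is a genuine gap exactly where you suspect one. Your small-$R$ argument requires $\log(eNK/R)=o\bigl((N/K)\sum_m I^{(m)}\bigr)$, which for $R=1$ amounts to $\log N=o\bigl((N/K)\sum_m I^{(m)}\bigr)$. The hypothesis of the lemma is only $\frac{N\sum_m I^{(m)}}{K\log K}\to\infty$, which does \emph{not} imply this; the regime $\frac{N\sum_m I^{(m)}}{K\log N}\le 1$ is squarely inside the hypothesis and is where the claimed rate is an inverse polynomial in $N$ rather than superpolynomially small. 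The paper resolves this not by an extra condition but by a three-case analysis on $\frac{N\sum_m I^{(m)}}{K\log N}$ (greater than, less than, or asymptotically equal to $1$): in the low-SNR cases one gives up on making $P_R$ small for $R$ below a truncation level $R_0=N\exp\bigl(-(1-o(1))N\sum_m I^{(m)}/K\bigr)$, uses the trivial bound $N E[r]\le R_0+\sum_{R\ge R_0}RP_R$, and shows the answer is driven by $R_0$ itself. Without this device (or an equivalent one) your proof only covers the case $\liminf\frac{N\sum_m I^{(m)}}{K\log N}>1$.

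A second, smaller problem is your large-$R$ regime: the explicit bound $P_R\le K^N\exp\bigl(-\tfrac{2(1-\epsilon)}{9}(NR/K)\sum_m I^{(m)}\bigr)$ does not close when $K\to\infty$, since for $R$ just above $N/(2K)$ the exponential term is only $\exp\bigl(-c\,N^2\sum_m I^{(m)}/K^2\bigr)$ and beating $K^N$ would need $\frac{N\sum_m I^{(m)}}{K^2\log K}\to\infty$, which is not assumed. The fix is the one the paper uses: for $R\ge \epsilon N/(3K)$ the count $(eNK/R)^R\le(3eK^2/\epsilon)^R$ contributes only $\exp(O(R\log K))$, which is absorbed because $\log K=o\bigl((N/K)\sum_m I^{(m)}\bigr)$; your parenthetical ``take the smaller of the two cardinality bounds'' would save you if carried out, but the bound you actually wrote down is the wrong branch. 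Your identifications of the $K=2$ case (via $|\alpha|+|\gamma|=R(N-R)$ and the balanced worst case) and of the source of the $1/(sK)$ factor for $K\ge 3$ are consistent with the paper.
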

\begin{proof}
The proof technique is similar to \citet{zhang15}; we only modify the proof in places to suit our objective while keeping the approach the same. We first prove the result for the subspace $\Theta^{ML}_{0}$ and then extend it for $\Theta^{ML}$. We first consider the case $K \rightarrow \infty$, break the assumption $\frac{N\sum_{m}I^{(m)}}{K \log K} \rightarrow \infty$ into 3 parts and verify that in each case $E[r(\bar{z},\hat{z})]$ is bounded by a term of the form  $\exp(-(1+o(1))\frac{N\sum_{m}I^{(m)}}{sK})$. Let $\eta=o(1)$ be a universal sequence independent of $N$ that converges to 0. We note that 
$$N E[r(\bar{z},\hat{z})] \leq \sum_{R=1}^{N} R P_R .$$

(1) If $\liminf_{N \rightarrow \infty} \frac{N\sum_{m}I^{(m)}}{K \log N} >1$, there exists a small constant $\epsilon>0$ such that $\frac{(1-2\eta)N\sum_{m} I^{(m)}}{K \log N} >1+\epsilon$. Let $\eta$ decay slowly such that both $\frac{\eta N\sum_{m} I^{(m)}}{K \log K}$ and $\frac{\eta N}{K}$ 
go to infinity. Let $B=N \exp(-(1-3\eta)N \sum_{m} I^{(m)}/K)$. Clearly, $P_1 =eNK \exp(-(\frac{(1-\eta)N}{K}-1) \sum_{m} I^{(m)}) \leq B$. This follows by replacing both $\log (eK)$ and $\sum_{m} I^{(m)}$ by a bigger term, $\eta N\sum_{m}I^{(m)}/K$.

We will show that $ E[r(\bar{z},\hat{z}^{MLE})]$ is bounded by $O(B/N)$. First let $R \in [2,\frac{\epsilon N}{3K}]$. Then, 
\begin{align*}
P_R &\leq \left( \frac{eNK}{2} \exp \left(-\frac{(1-\eta)N\sum_{m} I^{(m)}}{K} +R\sum_{m} I^{(m)} \right) \right)^R \\
&= \left( \frac{eNK}{2} \exp\left(-\frac{(1-\eta)N\sum_{m} I^{(m)}}{K} \right)\right) \\
& \quad  \left( \frac{eNK}{2} \exp\left(-\frac{(1-\eta)N\sum_{m} I^{(m)}}{K} +(R+\frac{R}{R-1}) \sum_{m} I^{(m)} \right)\right)^{R-1} \\
& \leq N \exp \left(-(1-\eta)N \sum_{m} I^{(m)}/K+\log (eK)\right)\\
&  \quad \left( N \exp \left(-\frac{(1-\eta)N\sum_{m} I^{(m)}}{K} +2 \frac{\epsilon N}{3K} \sum_{m} I^{(m)}+\log (eK)\right)\right)^{R-1} \\
& \leq \left( N \exp\left(-(1-2\eta)N \sum_{m} I^{(m)}/K\right)\right) \left(N \exp\left(-(1+\epsilon) \log N+ \frac{2\epsilon (1+\epsilon) \log N}{3(1-2\eta)}\right)\right)^{R-1}\\
& \leq B N^{(1-(1+\epsilon)(1-3 \epsilon /4))(R-1)}\\
& \leq B N^{- \epsilon (R-1)/6}.
\end{align*}
The penultimate step follows by replacing $1-2\eta$ by $8/9$ and the last step follows since $\epsilon/4-3\epsilon^2/4 \geq \epsilon/6$ for large $N$ and small $\eta$ and $\epsilon$ respectively. Hence
\begin{equation}
N E[r(\bar{z},\hat{z})]= P_1+\sum_{R=2}^{\epsilon N/3K} R P_R \leq P_1 +\sum_{R=2}^{\infty} RBN^{-\epsilon (R-1)/6}= P_1 + B\frac{N^{\epsilon/6}}{(N^{\epsilon/6}-1)^2}=O(B).
\label{smallR}
\end{equation}
The infinite sum in the last step can be obtained by differentiation the infinite series sum $\sum_{R=1}^{\infty}N^{-\epsilon(R)/6}$ with respect to $N$.

 Next we show that the same conclusion holds for $R \in [\frac{\epsilon N}{3K},N]$. First, note that for any $\frac{N}{2K} \geq R \geq \frac{\epsilon N}{3K}$, we have $\frac{2(1-\eta)N\sum_{m} I^{(m)}}{9K} \leq \frac{(1-\eta)N\sum_{m} I^{(m)}}{K} -R\sum_{m} I^{(m)}$. Hence,
\begin{align*}
P_R &\leq \left( \frac{eNK}{\epsilon N/3K} \exp(-\frac{2(1-\eta)N\sum_{m} I^{(m)}}{9K}\right)^R \\
& \leq \left(\exp(-\frac{(1-2\eta)N\sum_{m} I^{(m)}}{9K} - \frac{N\sum_{m} I^{(m)}}{9K} + \log (\frac{3eK^2}{\epsilon}) \right)^9 \left(\frac{3eK^2}{\epsilon} \exp(-\frac{2(1-\eta)N\sum_{m} I^{(m)}}{9K}\right)^{R-9} \\
& \leq \exp \left(-(1-2\eta)N\sum_{m} I^{(m)}/K\right) \exp\left(-\frac{2(1-2\eta)N\sum_{m} I^{(m)}}{9K}-2 \frac{\eta N\sum_{m} I^{(m)}}{9K} +\log (\frac{3eK^2}{\epsilon})\right)^{R-9} \\
& \leq B \exp\left(-\frac{2(1-2\eta)N\sum_{m} I^{(m)}}{9K} \right)^{R-9}\\
& \leq B \exp(-\frac{2}{9}(1+\epsilon) \log N)^{R-9}\\
& \leq B N^{-2(1+\epsilon)(R-9)/9} \leq B N^{-2 (R-9)/9}.
\end{align*}
By the same reasoning as above, $\sum_{R=\epsilon N/3K}^{N} R P_R \leq \sum_{R=1}^{\infty} R P_R $ is $o(B)$. Hence combining this result with Equation (\ref{smallR}), $ N E[r(\bar{z},\hat{z})]=O(B)$

For the remaining two cases, (2) $\limsup \frac{N\sum_{m}I^{(m)}}{K \log N} <1$ and (3) $ \frac{N\sum_{m}I^{(m)}}{K \log N} =1+o(1)$, the proof follows from the corresponding cases in \citet{zhang15} ( Proof of Theorem 3.2). Hence we omit the details and only write the results.

 (2) If $\limsup_{N \rightarrow \infty} \frac{N\sum_{m}I^{(m)}}{K \log N} <1$,  then there exists a small constant $\epsilon>0$ such that $\frac{(1-\eta)N\sum_{m} I^{(m)}}{K \log N} >1-\epsilon$.  Define $R_0= N \exp (-(1-K^{-\epsilon/2}) (1-\eta) N \sum_{m} I^{(m)}/K)$  and $R'= N/ K^{1+\epsilon}$. We have,
 \begin{equation*}
 P_R \leq \begin{cases}
(\frac{eNK}{R_0} \exp (-\frac{(1-\eta)N\sum_{m} I^{(m)}}{K}+R' \sum_{m} I^{(m)}) )^{R} \leq \exp (- \frac{(1-\eta)NR \sum_{m} I^{(m)}}{2K^{1+\epsilon/2}}) & R_0 \leq  R \leq R' \\
(\frac{eNK}{R'} \exp (-\frac{2(1-\eta)N \sum_{m} I^{(m)}}{9K}))^R \leq  \exp(-\frac{NR\sum_{m}I^{(m)}}{9K}) &  R' < R \leq N
 \end{cases}
 \end{equation*}
 and hence from the proof in \citet{zhang15} $ E[r(\bar{z},\hat{z})]=\exp(-\frac{(1-o(1))N \sum_{m} I^{(m)}}{K})$.
 
  (3) If $ \frac{N\sum_{m}I^{(m)}}{K \log N} =1+o(1)$,  then there exists a positive sequence $w=o(1)$ such that $|\frac{N \sum_{m} I^{(m)}}{K \log N}-1| \ll w$ and $\frac{1}{\sqrt{\log N}} \leq w $.  defining $R_0= N \exp (-(1-w) N \sum_{m} I^{(m)}/K)$  and $R'= w^2 N/K$ we have,
 \begin{equation*}
 P_R \leq \begin{cases}
(\frac{eNK}{R_0} \exp (-\frac{(1-\eta)N\sum_{m} I^{(m)}}{K}+R' \sum_{m} I^{(m)}) )^{R} \leq \exp (\frac{w(1-\eta) NR \sum_{m} I^{(m)}}{4K}) & R_0< R \leq R' \\
(\frac{eNK}{R'} \exp (-\frac{2(1-\eta)N \sum_{m} I^{(m)}}{9K}))^R \leq  \exp(-\frac{NR\sum_{m}I^{(m)}}{9K}) &  R' < R \leq N
 \end{cases}
 \end{equation*}
 and hence from the proof in \citet{zhang15} $ E[r(\bar{z},\hat{z})]=\exp(-\frac{(1-o(1))N \sum_{m} I^{(m)}}{K})$.

The proof for finite $K$ is similar and hence omitted.

Now we prove the upper bound result for the entire parameter space $\Theta^{ML}$. The proof for the case $K\geq 3$ is similar to the proof for $\Theta^{ML}_{0}$ with the result in (\ref{gammabound}) being replaced by Lemma A.1. of \citet{zhang15}. However, for $K=2$, we proceed as in Section A.2. of \citet{zhang15} and assume without loss of generality that $\frac{N}{2}=\lfloor{\frac{N}{2} \rfloor}$. Let $r(\bar{z},\hat{z}) =R/N$ and define the sets $\alpha$ and $\gamma$ as before. Note that $R \leq N/2$ since distance between the two class assignments $d(\bar{z},\hat{z})= \min (d_H(\bar{z},\hat{z}),N-d_H(\bar{z},\hat{z}))$. We also have $|\alpha|+|\gamma| =R(N-R)$ if $r(\bar{z},\hat{z}) =R/N$ \citep{zhang15}. Hence from Equation (\ref{Tzeq}) we have,
\begin{equation}
P(T(\hat{z}) \geq T(\bar{z})) \leq \exp\left(-\frac{R(N-R) \sum_{m} I^{(m)}}{2}\right).
\end{equation}

The proof is similar to the one for $\Theta^{ML}_{0}$ and we only specify the specific results here omitting the technicalities. Let $0 \leq \epsilon \leq 1/8$ and recall that our assumption for $K=2$ case is that $\frac{N \sum_{m} I^{(m)}}{2} \rightarrow \infty$. We have the following 3 cases in parallel to the 3 cases earlier,

 (1) If $\frac{N \sum_{m} I^{(m)}}{2 \log N} > (1+\epsilon) $, defining $B= N \exp (-(N-1) \sum_{m} I^{(m)} /2)$ we have $P_1 \leq B$. The for $1< R \leq \epsilon N/2$ we have,
 \begin{align*}
 P_R & \leq (eN)^R \exp (-\frac{R(N-R) \sum_{m} I^{(m)}}{2})  \leq ((eN) \exp (-\frac{(N-\epsilon N/2) \sum_{m} I^{(m)}}{2})^R \\
 & \leq ( eN\exp (-(1-\epsilon/2)(1+\epsilon) \log N))^R \leq B N^{-\epsilon R/4},
 \end{align*}
 and for $ \epsilon N/2 <R \leq N/2$ we have,
  \begin{equation*}
 P_R \leq (\frac{2eN}{eN})^R \exp (-\frac{NR \sum_{m} I^{(m)}}{4}) \leq B \exp(-\frac{N(R-4) \sum_{m}I^{(m)}}{8}).
 \end{equation*}
 and hence $ E[r(\bar{z},\hat{z})]=(1+o(1))B/N$.
 
 (2) If $\frac{N \sum_{m} I^{(m)}}{2 \log N} < (1-\epsilon) $, defining $R_0= N \exp (-(1-e^{-\epsilon N \sum_{m} I^{(m)}/2}) N \sum_{m} I^{(m)}/2)$  and $R'= N \exp (- N \sum_{m} I^{(m)}/8)$ we have,
 \begin{equation*}
 P_R \leq \begin{cases}
(\frac{2eN}{R_0})^R \exp (-\frac{R(N-R') \sum_{m} I^{(m)}}{2}) \leq \exp (-e^{-\epsilon N \sum_{m} I^{(m)}/2} \frac{NR \sum_{m} I^{(m)}}{4}) & R_0< R \leq R' \\
(\frac{2eN}{R'})^R \exp (-\frac{NR \sum_{m} I^{(m)}}{4}) \leq  \exp(-\frac{N(R-4) \sum_{m}I^{(m)}}{16}) &  R' < R \leq N/2 
 \end{cases}
 \end{equation*}
 and hence $ E[r(\bar{z},\hat{z})]=(1+o(1))R_0/N$.
 
  (3) If $\frac{N \sum_{m} I^{(m)}}{2 \log N} =1+o(1) $, then there exists a positive sequence $w =o(1)$ such that $|\frac{N \sum_{m} I^{(m)}}{2 \log N}-1| \ll w$ and $\frac{1}{\sqrt{\log N}} \leq w $.  Defining $R_0= N \exp (-(1-w) N \sum_{m} I^{(m)}/2)$  and $R'= w^2 N$ we have,
 \begin{equation*}
 P_R \leq \begin{cases}
(\frac{2eN}{R_0})^R \exp (-\frac{R(N-R') \sum_{m} I^{(m)}}{2}) \leq \exp (-\frac{wNR \sum_{m} I^{(m)}}{4}) & R_0< R \leq R' \\
(\frac{2eN}{R'})^R \exp (-\frac{NR \sum_{m} I^{(m)}}{4}) \leq  \exp(-\frac{N(R-4) \sum_{m}I^{(m)}}{8} )&  R' < R \leq N/2 
 \end{cases}
 \end{equation*}
 and hence $ E[r(\bar{z},\hat{z})]=(1+o(1))R_0/N$.
\end{proof}


\bibliography{msbm}

\begin{thebibliography}{42}
\newcommand{\enquote}[1]{``#1''}
\expandafter\ifx\csname natexlab\endcsname\relax\def\natexlab#1{#1}\fi

\bibitem[{Abbe and Sandon(2015)}]{abbe15}
Abbe, E. and Sandon, C. (2015), \enquote{Community detection in general
  stochastic block models: fundamental limits and efficient recovery
  algorithms,} \textit{arXiv preprint}, arXiv:1503.00609.

\bibitem[{Airoldi et~al.(2008)Airoldi, Blei, Fienberg, and Xing}]{abfx08}
Airoldi, E.~M., Blei, D.~M., Fienberg, S.~E., and Xing, E.~P. (2008),
  \enquote{Mixed membership stochastic blockmodels,} \textit{J. Mach. Learn.
  Res.}, 9, 1981--2014.

\bibitem[{Bickel and Chen(2009)}]{bc09}
Bickel, P.~J. and Chen, A. (2009), \enquote{A nonparametric view of network
  models and Newman--Girvan and other modularities,} \textit{Proceedings of the
  National Academy of Sciences}, 106, 21068--21073.

\bibitem[{Bickel et~al.(2013)Bickel, Choi, Chang, and Zhang}]{bccz13}
Bickel, P.~J., Choi, D., Chang, X., and Zhang, H. (2013), \enquote{Asymptotic
  normality of maximum likelihood and its variational approximation for
  stochastic blockmodels,} \textit{Ann. Statist}, 41, 1922--1943.

\bibitem[{Binkiewicz(2015)}]{thesis15}
Binkiewicz, N.~M. (2015), \enquote{Contextualized Network Analysis: Theory and
  Methods for Networks with Node Covariates,} \textit{Ph.D Dissertation}.

\bibitem[{Budimir et~al.(2001)Budimir, Dragomir, and Pecaric}]{bdp01}
Budimir, I., Dragomir, S., and Pecaric, J. (2001), \enquote{Further reverse
  results for Jensen's discrete inequality and applications in information
  theory,} \textit{J. Inequal. Pure Appl. Math.}, 2, 5.

\bibitem[{Celisse et~al.(2012)Celisse, Daudin, and Pierre}]{cdp11}
Celisse, A., Daudin, J.~J., and Pierre, L. (2012), \enquote{Consistency of
  maximum-likelihood and variational estimators in the stochastic block model,}
  \textit{Electronic Journal of Statistics}, 6, 1847--1899.

\bibitem[{Choi et~al.(2012)Choi, Wolfe, and Airoldi}]{cwa12}
Choi, D.~S., Wolfe, P.~J., and Airoldi, E.~M. (2012), \enquote{Stochastic
  blockmodels with a growing number of classes,} \textit{Biometrika}, 99,
  273--284.

\bibitem[{Chung and Lu(2006)}]{cl06}
Chung, F. and Lu, L. (2006), \textit{Complex graphs and Networks}, American
  mathematical society.

\bibitem[{Daudin et~al.(2008)Daudin, Picard, and Robin}]{dpr08}
Daudin, J.~J., Picard, F., and Robin, S. (2008), \enquote{A mixture model for
  random graphs,} \textit{Stat Comput}, 18, 173--183.

\bibitem[{Dong et~al.(2012)Dong, Frossard, Vandergheynst, and Nefedov}]{dfvn12}
Dong, X., Frossard, P., Vandergheynst, P., and Nefedov, N. (2012),
  \enquote{Clustering with multi-layer graphs: A spectral perspective,}
  \textit{IEEE Transactions on Signal Processing}, 60, 5820--5831.

\bibitem[{Gao et~al.(2015)Gao, Ma, Zhang, and Zhou}]{gao15}
Gao, C., Ma, Z., Zhang, A.~Y., and Zhou, H.~H. (2015), \enquote{Achieving
  Optimal Misclassification Proportion in Stochastic Block Model,}
  \textit{arXiv preprint}, arXiv:1505.03772.

\bibitem[{Goldenberg et~al.(2010)Goldenberg, Zheng, Fienberg, and
  Airoldi}]{gzfa10}
Goldenberg, A., Zheng, A.~X., Fienberg, S.~E., and Airoldi, E.~M. (2010),
  \enquote{A survey of statistical network models,} \textit{Foundations and
  Trends in Machine Learning}, 2, 129--233.

\bibitem[{Greene and Cunningham(2013)}]{gp13}
Greene, D. and Cunningham, P. (2013), \enquote{Producing a unified graph
  representation from multiple social network views,} \textit{ACM Web Science},
  2, 129--233.

\bibitem[{Hajek et~al.(2014)Hajek, Wu, and Xu}]{hajek14}
Hajek, B., Wu, Y., and Xu, J. (2014), \enquote{Achieving exact cluster recovery
  threshold via semidefinite programming,} \textit{arXiv preprint},
  arXiv:1412.6156.

\bibitem[{Han et~al.(2014)Han, Xu, and Airoldi}]{hxa14}
Han, Q., Xu, K.~S., and Airoldi, E.~M. (2014), \enquote{Consistent estimation
  of dynamic and multi-layer Networks,} \textit{arXiv preprint},
  arXiv:1410.8597.

\bibitem[{Handcock et~al.(2007)Handcock, Raftery, and Tantrum}]{hrt07}
Handcock, M.~S., Raftery, A.~E., and Tantrum, J.~M. (2007),
  \enquote{Model-based clustering for social networks,} \textit{J. Roy.
  Statist. Soc. Ser. A}, 170, 301--354.

\bibitem[{Hoff et~al.(2002)Hoff, Raftery, and Handcock}]{hrh02}
Hoff, P.~D., Raftery, A.~E., and Handcock, M.~S. (2002), \enquote{Latent space
  approaches to social network analysis,} \textit{J. Amer. Statist. Assoc.},
  97, 1090--1098.

\bibitem[{Holland et~al.(1983)Holland, Laskey, and Leinhardt}]{hll83}
Holland, P., Laskey, K., and Leinhardt, S. (1983), \enquote{Stochastic
  blockmodels: some first steps,} \textit{Social Networks}, 5, 109--137.

\bibitem[{Jenatton et~al.(2012)Jenatton, Le~Roux, Bordes, and
  Obozinski}]{jlbo12}
Jenatton, R., Le~Roux, N., Bordes, A., and Obozinski, G. (2012), \enquote{A
  latent factor model for highly multi-relational data,} \textit{Advances in
  Neural Information Processing Systems}, 3167--3175.

\bibitem[{Karrer and Newman(2011)}]{kn11}
Karrer, B. and Newman, M. E.~J. (2011), \enquote{Stochastic blockmodels and
  community structure in networks,} \textit{Phys. Rev. E.}, 83, 016107.

\bibitem[{Kemp et~al.(2006)Kemp, Tenenbaum, Griffiths, Yamada, and
  Ueda}]{ktgyu06}
Kemp, C., Tenenbaum, J.~B., Griffiths, T.~L., Yamada, T., and Ueda, N. (2006),
  \enquote{Learning systems of concepts with an infinite relational model,} in
  \textit{Proceedings of the 21st National Conference on Artificial
  Intelligence}, vol.~1, pp. 381--388.

\bibitem[{Latouche et~al.(2011)Latouche, Birmele, and Ambroise}]{lba11}
Latouche, P., Birmele, E., and Ambroise, C. (2011), \enquote{Overlapping
  stochastic block models with application to the French political
  blogosphere,} \textit{Ann. Appl. Stat.}, 5, 309--336.

\bibitem[{Leskovec et~al.(2008)Leskovec, Lang, Dasgupta, and
  Mahoney}]{leskovec08}
Leskovec, J., Lang, K.~J., Dasgupta, A., and Mahoney, M.~W. (2008),
  \enquote{Statistical properties of community structure in large social and
  information networks,} in \textit{Proceedings of the 17th international
  conference on World Wide Web}, ACM, pp. 695--704.

\bibitem[{Mossel et~al.(2012)Mossel, Neeman, and Sly}]{mossel12}
Mossel, E., Neeman, J., and Sly, A. (2012), \enquote{Stochastic block models
  and reconstruction,} \textit{arXiv preprint}, arXiv:1202.1499.

\bibitem[{Mossel et~al.(2013)Mossel, Neeman, and Sly}]{mossel13}
--- (2013), \enquote{A proof of the block model threshold conjecture,}
  \textit{arXiv preprint}, arXiv:1311.4115.

\bibitem[{Mossel et~al.(2014)Mossel, Neeman, and Sly}]{mossel14}
--- (2014), \enquote{Consistency thresholds for binary symmetric block models,}
  \textit{arXiv preprint}, arXiv:1407.1591.

\bibitem[{Mucha et~al.(2010)Mucha, Richardson, Macon, Porter, and
  Onnela}]{mrmpo10}
Mucha, P.~J., Richardson, T., Macon, K., Porter, M.~A., and Onnela, J.~P.
  (2010), \enquote{Community structure in time-dependent, multiscale, and
  multiplex networks,} \textit{Science}, 328, 876--878.

\bibitem[{Narayanan et~al.(2010)Narayanan, Vetta, Schadt, and Zhu}]{nvsz10}
Narayanan, M., Vetta, A., Schadt, E., and Zhu, J. (2010), \enquote{Simultaneous
  clustering of multiple gene expression and physical interaction datasets,}
  \textit{PLoS. Comp. Bio.}, 6, e1000742.

\bibitem[{Newman and Girvan(2004)}]{ng04}
Newman, M. E.~J. and Girvan, M. (2004), \enquote{Finding and evaluating
  community structure in networks,} \textit{Phys. Rev. E}, 69, 026113.

\bibitem[{Nickel et~al.(2011)Nickel, Tresp, and Kriegel}]{ntk11}
Nickel, M., Tresp, V., and Kriegel, H.~P. (2011), \enquote{A three-way model
  for collective learning on multi-relational data,} \textit{International
  Conference on Machine Learning}, 809--816.

\bibitem[{Nowicki and Snijders(2001)}]{ns01}
Nowicki, K. and Snijders, T. (2001), \enquote{Estimation and prediction for
  stochastic block structures,} \textit{J. Am. Stat. Assoc.}, 96, 1077--1087.

\bibitem[{Papalexakis et~al.(2013)Papalexakis, Akoglu, and Ience}]{pai13}
Papalexakis, E.~E., Akoglu, L., and Ience, D. (2013), \enquote{Do more views of
  a graph help? Community detection and clustering in multi-graphs,} in
  \textit{Proceedings of the 16th International Conference on Information
  Fusion}, pp. 899--905.

\bibitem[{Rocklin and Pinar(2011)}]{ma11}
Rocklin, M. and Pinar, A. (2011), \enquote{Latent clustering on graphs with
  multiple edge types,} in \textit{Algorithms and Models for the Web Graph},
  Springer, pp. 38--49.

\bibitem[{Rohe et~al.(2012)Rohe, Qin, and Fan}]{rqf12}
Rohe, K., Qin, T., and Fan, H. (2012), \enquote{The highest dimensional
  stochastic blockmodel with a regularized estimator,} \textit{Statistica
  Sinica}, 39, 1878--1915.

\bibitem[{Simic(2009)}]{s09}
Simic, S. (2009), \enquote{On an upper bound for Jensen's inequality,}
  \textit{Journal of Inequalities in Pure and Applied Mathematics}, 10, 60.

\bibitem[{Snijders and Nowicki(1997)}]{sn97}
Snijders, T. A.~B. and Nowicki, K. (1997), \enquote{Estimation and prediction
  for stochastic blockmodels for graphs with latent block structure,}
  \textit{Journal of Classificaion}, 14, 75--100.

\bibitem[{Tang et~al.(2009)Tang, Lu, and Dhillon}]{wzs09}
Tang, W., Lu, Z., and Dhillon, I.~S. (2009), \enquote{Clustering with multiple
  graphs,} in \textit{Proceedings of the 9th IEEE International Conference on
  Data Mining}, pp. 1016--1021.

\bibitem[{Taskar et~al.(2001)Taskar, Segal, and Koller}]{tsk01}
Taskar, B., Segal, E., and Koller, D. (2001), \enquote{Probabilistic
  classification and clustering in relational data,} in \textit{Proceedings of
  the 17th International Joint Conference on Artificial Intelligence}, pp.
  870--876.

\bibitem[{Van~Erven and Harremo{\"e}s(2014)}]{van14}
Van~Erven, T. and Harremo{\"e}s, P. (2014), \enquote{R{\'e}nyi divergence and
  Kullback-Leibler divergence,} \textit{IEEE Transactions on Information
  Theory}, 60, 3797--3820.

\bibitem[{Zhang and Zhou(2015)}]{zhang15}
Zhang, A.~Y. and Zhou, H.~H. (2015), \enquote{Minimax rates of community
  detection in stochastic block model,} \textit{arXiv preprint},
  arXiv:1507.05313.

\bibitem[{Zhao et~al.(2012)Zhao, Levina, and Zhu}]{zlz12}
Zhao, Y., Levina, E., and Zhu, J. (2012), \enquote{Consistency of community
  detection in networks under degree-corrected stochastic block models,}
  \textit{Ann. Statist}, 40, 2266--2292.

\end{thebibliography}

\end{document}